\providecommand{\algorithmname}{Algorithm}
\setlist[itemize]{leftmargin=1.5em}
\setlist[enumerate]{leftmargin=1.5em}
\DeclareMathOperator{\ind}{\mathds{1}}  % Indicator
\newcommand{\mymid}{\,|\,}
\numberwithin{equation}{section}
\definecolor{yxc}{RGB}{255,0,0}
\definecolor{yjc}{RGB}{125,0,0}
\definecolor{cm}{RGB}{0,0,200}
\definecolor{yly}{RGB}{0,150,0}
\begin{document}
\theoremstyle{plain} \newtheorem{lemma}{\textbf{Lemma}} \newtheorem{prop}{\textbf{Proposition}}\newtheorem{theorem}{\textbf{Theorem}}\setcounter{theorem}{0}
\newtheorem{corollary}{\textbf{Corollary}} \newtheorem{assumption}{\textbf{Assumption}}
\newtheorem{example}{\textbf{Example}} \newtheorem{definition}{\textbf{Definition}}
\newtheorem{fact}{\textbf{Fact}} \newtheorem{condition}{\textbf{Condition}}\theoremstyle{definition}

\theoremstyle{remark}\newtheorem{remark}{\textbf{Remark}}\newtheorem{claim}{\textbf{Claim}}\newtheorem{conjecture}{\textbf{Conjecture}}
\title{Model-Based Reinforcement Learning for \\ Offline
Zero-Sum Markov Games}
\author{Yuling Yan\thanks{Institute for Data, Systems, and Society, MIT, Cambridge, MA 02142, USA; Email: \texttt{yulingy@mit.edu}.} \and Gen Li\thanks{Department of Statistics, The Chinese University of Hong Kong, Hong Kong SAR, China; Email: \texttt{genli@cuhk.edu.hk}.} \and Yuxin Chen\thanks{Department of Statistics and Data Science, Wharton School, University
of Pennsylvania, Philadelphia, PA, 19104, USA; Email: \texttt{yuxinc@wharton.upenn.edu}.} \and Jianqing Fan\thanks{Department of Operations Research and Financial Engineering, Princeton
University, Princeton, NJ 08544, USA; Email: \texttt{jqfan@princeton.edu}.}}

\date{June 2022;~~Revised: February 2024}

\maketitle
\begin{abstract}
This paper makes progress towards learning Nash equilibria in two-player
zero-sum Markov games from offline data. Specifically, consider a
$\gamma$-discounted infinite-horizon Markov game with $S$ states,
where the max-player has $A$ actions and the min-player has $B$
actions. We propose a pessimistic model-based algorithm with Bernstein-style
lower confidence bounds --- called VI-LCB-Game --- that provably
finds an $\varepsilon$-approximate Nash equilibrium with a sample
complexity no larger than $\frac{C_{\mathsf{clipped}}^{\star}S(A+B)}{(1-\gamma)^{3}\varepsilon^{2}}$
(up to some log factor). Here, $C_{\mathsf{clipped}}^{\star}$ is
some unilateral clipped concentrability coefficient that reflects
the coverage and distribution shift of the available data (vis-à-vis
the target data), and the target accuracy $\varepsilon$ can be any
value within $\big(0,\frac{1}{1-\gamma}\big]$. Our sample complexity
bound strengthens prior art by a factor of $\min\{A,B\}$, achieving
minimax optimality for the entire $\varepsilon$-range. An appealing
feature of our result lies in algorithmic simplicity, which reveals
the unnecessity of variance reduction and sample splitting in achieving
sample optimality. 
\end{abstract}

\noindent \textbf{Keywords:} zero-sum Markov games, Nash equilibrium,
offline RL,  unilateral coverage, curse of multiple
agents, minimax optimality

\tableofcontents{}

\section{Introduction}

Multi-agent reinforcement learning (MARL), a subfield of reinforcement
learning (RL) that involves multiple individuals interacting/competing
with each other in a shared environment, has garnered widespread recent
interest, partly sparked by its capability of achieving superhuman
performance in game playing and autonomous driving \citep{vinyals2019grandmaster,berner2019dota,jaderberg2019human,baker2019emergent,brown2019superhuman,shalev2016safe}.
The coexistence of multiple players --- whose own welfare might come
at the expense of other parties involved --- makes MARL inherently
more intricate than the single-agent counterpart. 

A standard framework to describe the environment and dynamics in competitive
MARL is Markov games (MGs), which is generally attributed to \citet{shapley1953stochastic}
(originally referred to as stochastic games). Given the conflicting
needs of the players, a standard goal in Markov games is to seek some
sort of steady-state solutions, with the Nash equilibrium (NE) being
arguably the most prominent one. While computational intractability
has been observed when calculating NEs in general-sum MGs and/or MGs
with more than two players \citep{daskalakis2009complexity,daskalakis2013complexity},
an assortment of tractable algorithms have been put forward to solve
two-player zero-sum Markov games. On this front, a large strand of
recent works revolves around developing sample- and computation-efficient
paradigms \citep{bai2020near,xie2020learning,zhang2020model-based,tian2021online,liu2021sharp}.
What is particularly noteworthy here is the recent progress in overcoming
the so-called ``curse of multiple agents'' \citep{bai2020near,jin2021v,li2022minimax};
that is, although the total number of joint actions exhibits exponential
scaling in the number of agents, learnability of Nash equilibria becomes
plausible even when the sample size scales only linearly with the
maximum cardinality of the individual action spaces. See also \citet{jin2021v,song2021can,mao2022provably,daskalakis2022complexity}
for similar accomplishments in learning coarse correlated equilibria
in multi-player general-sum MGs. 

The aforementioned works permit online data collection either via
active exploration of the environment, or through sampling access
to a simulator. Nevertheless, the fact that real-time data acquisition
might be unaffordable or unavailable --- e.g., it could be time-consuming,
costly, and/or unsafe in healthcare and autonomous driving --- constitutes
a major hurdle for widespread adoption of these online algorithms.
This practical consideration inspires a recent flurry of studies collectively
referred to as \emph{offline RL} or \emph{batch RL} \citep{levine2020offline,kumar2020conservative},
with the aim of learning based on a historical dataset of logged interactions.

\paragraph{Data coverage for offline Markov games. }

The feasibility and efficiency of offline RL are largely governed
by the coverage of the offline data in hand. On  one hand, if the
available dataset covers all state-action pairs adequately, then there
is sufficient information to guarantee learnability; on the other
hand, full data coverage imposes an overly stringent requirement that
is rarely fulfilled in practice, and is oftentimes wasteful in terms
of data efficiency. Consequently, a recurring theme in offline RL
gravitates around the quest for algorithms that work under minimal
data coverage. Encouragingly, the recent advancement on this frontier
(e.g., \citet{rashidinejad2021bridging,xie2021policy}) uncovers the
sufficiency of ``single-policy'' data coverage in single-agent RL;
namely, offline RL becomes information theoretically feasible as soon
as the historical data covers the part of the state-action space reachable
by a single target policy. 

Unfortunately, single-policy coverage is provably insufficient when
it comes to Markov games, with negative evidence observed in \citet{cui2022offline}.
Instead, a sort of unilateral coverage --- i.e., a condition that
requires the data to cover not only the target policy pair but also
any unilateral deviation from it --- seems necessary to ensure learnability
of Nash equilibria in two-player zero-sum MGs. Employing the so-called
``unilateral concentrability coefficient'' $C^{\star}$ to quantify
such unilateral coverage as well as the degree of distribution shift
(which we shall define shortly in Assumption~\ref{assumption:uniliteral-original}),
\citet{cui2022offline} demonstrated how to find $\varepsilon$-Nash
solutions in a finite-horizon two-player zero-sum MG once the number
of sample rollouts exceeds 
\begin{equation}
\widetilde{O}\left(\frac{C^{\star}H^{3}SAB}{\varepsilon^{2}}\right).\label{eq:summary-cui-du}
\end{equation}
Here, $S$ is the number of shared states, $A$ and $B$ represent
respectively the number of actions of the max-player and the min-player,
$H$ stands for the horizon length, and the notation $\widetilde{O}(\cdot)$
denotes the orderwise scaling with all logarithmic dependency hidden. 

Despite being an intriguing polynomial sample complexity bound, a
shortfall of \eqref{eq:summary-cui-du} lies in its unfavorable scaling
with $AB$ (i.e., the total number of joint actions), which is substantially
larger than the total number of individual actions $A+B$. Whether
it is possible to alleviate this curse of multiple agents in two-player zero-sum Markov game --- and
if so, how to accomplish it --- is the key question to be investigated
in the current paper. 

\paragraph{An overview of main results. }

The objective of this paper is to design a sample-efficient offline
RL algorithm for learning Nash equilibria in two-player zero-sum Markov
games, ideally breaking the curse of multiple agents. Focusing on
$\gamma$-discounted infinite-horizon MGs, we propose a model-based
paradigm --- called VI-LCB-Game --- that is capable of learning
an $\varepsilon$-approximate Nash equilibrium with sample complexity
\[
\widetilde{O}\left(\frac{C_{\mathsf{clipped}}^{\star}S\left(A+B\right)}{\left(1-\gamma\right)^{3}\varepsilon^{2}}\right),
\]
where $C_{\mathsf{clipped}}^{\star}$ is the so-called ``clipped
unilateral concentrability coefficient'' (to be formalized in Assumption~\ref{assumption:uniliteral})
and always satisfies $C_{\mathsf{clipped}}^{\star}\leq C^{\star}$.
Our result strengthens prior theory in \citet{cui2022offline} by a
factor of $\min\{A,B\}$ (if we view the horizon length $H$ in finite-horizon
MGs and the effective horizon $\frac{1}{1-\gamma}$ in the infinite-horizon
counterpart as equivalence). To demonstrate that this bound is essentially
un-improvable, we develop a matching minimax lower bound (up to some
logarithmic factor), thus settling this problem. Our algorithm is
a pessimistic variant of value iteration with carefully designed Bernstein-style
penalties, which requires neither sample splitting nor sophisticated
schemes like reference-advantage decomposition. The fact that our
sample complexity result holds for the full $\varepsilon$-range (i.e.,
any $\varepsilon\in\big(0,\frac{1}{1-\gamma}\big]$) unveils that
sample efficiency is achieved without incurring any burn-in cost. 

Finally, while finalizing the current paper, we became aware of an
independent study \citet{cui2022provably} (posted to arXiv on June 1, 2022) that also manages to overcome the curse
of multiple agents in two-player zero-sum Markov game, which we shall elaborate on towards the end of
Section~\ref{sec:Algorithm-and-main}.

\paragraph{Notation. }

Before proceeding, let us introduce several notation that will be
used throughout. With slight abuse of notation, we shall use $P$
to denote a probability transition kernel and the associated probability
transition matrix exchangeably. We also use the notation $\mu$ exchangeably
for a probability distribution and its associated probability vector
(and we often do not specify whether $\mu$ is a row vector or column
vector as long as it is clear from the context). For any two vectors
$x=[x_{i}]_{i=1}^{n}$ and $y=[y_{i}]_{i=1}^{n}$, we use $x\circ y=[x_{i}y_{i}]_{i=1}^{n}$
to denote their Hadamard product, and we also define $x^{2}=[x_{i}^{2}]_{i=1}^{n}$
in an entrywise fashion. 
For a finite set $\mathcal{S}=\{1,\cdots,S\}$, 
we let $\Delta(\mathcal{S})\coloneqq\big\{ x\in\mathbb{R}^{S}\mid1^{\top}x=1,x\geq0\big\}$ represent the probability simplex over the set $\mathcal{S}$.

\section{Problem formulation\label{sec:Problem-formulation}}

In this section, we introduce the background of zero-sum Markov games,
followed by a description of the offline dataset. 

\subsection{Preliminaries}

\paragraph{Zero-sum two-player Markov games.}

Consider a discounted infinite-horizon zero-sum Markov Game (MG) \citep{shapley1953stochastic,littman1994markov},
as represented by the tuple $\mathcal{MG}=(\mathcal{S},\mathcal{A},\mathcal{B},P,r,\gamma)$.
Here, $\mathcal{S}=\{1,\ldots,S\}$ is the shared state space; $\mathcal{A}=\{1,\ldots,A\}$
(resp.~$\mathcal{B}=\{1,\ldots,B\}$) is the action space of the
max-player (resp.~min-player); $P:\mathcal{S}\times\mathcal{A}\times\mathcal{B}\to\Delta(\mathcal{S})$
is the (\emph{a priori} unknown) probability transition kernel, where
$P(s'\mymid s,a,b)$ denotes the probability of transiting from state
$s$ to state $s'$ if the max-player executes action $a$ and the
min-player chooses action $b$; $r:\mathcal{S}\times\mathcal{A}\times\mathcal{B}\to[0,1]$
is the reward function, such that $r(s,a,b)$ indicates the immediate
reward observed by both players in state $s$ when the max-player
takes action $a$ and the min-player takes action $b$; and $\gamma\in(0,1)$
is the discount factor, with $\frac{1}{1-\gamma}$ commonly referred
to as the {\em effective horizon}. Throughout this paper, we primarily focus
on the scenario where $S,A,B$ and $\frac{1}{1-\gamma}$ could all
be large. Additionally, for notational simplicity, we shall define
the vector $P_{s,a,b}\in\mathbb{R}^{1\times S}$ as $P_{s,a,b}\coloneqq P(\cdot\mymid s,a,b)$
for any $(s,a,b)\in\mathcal{S}\times\mathcal{A}\times\mathcal{B}$. 

\paragraph{Policy, value function, Q-function, and occupancy distribution.}

Let $\mu:\mathcal{S}\to\Delta(\mathcal{A})$ and $\nu:\mathcal{S}\to\Delta(\mathcal{B})$
be (possibly random) stationary policies of the max-player and the
min-player, respectively. In particular, $\mu(\cdot\mymid s)\in\Delta(\mathcal{A})$
(resp.~$\nu(\cdot\mymid s)\in\Delta(\mathcal{B})$) specifies the
action selection probability of the max-player (resp.~min-player)
in state $s$. The value function $V^{\mu,\nu}:\mathcal{S}\to\mathbb{R}$
for a given product policy $\mu\times\nu$ is defined as 
\[
V^{\mu,\nu}\left(s\right)\coloneqq\mathbb{E}\left[\sum_{t=0}^{\infty}\gamma^{t}r\left(s_{t},a_{t},b_{t}\right)\mymid s_{0}=s;\mu,\nu\right],\qquad\forall s\in\mathcal{S},
\]
where the expectation is taken with respect to the randomness of the
trajectory $\{(s_{t},a_{t},b_{t})\}_{t\geq0}$ induced by the product
policy $\mu\times\nu$ (i.e., for any $t\geq0$, the players take
$a_{t}\sim\mu(\cdot\mymid s_{t})$ and $b_{t}\sim\nu(\cdot\mymid s_{t})$
{\em independently} conditional on the past) and the probability transition
kernel $P$ (i.e., $s_{t+1}\sim P(\cdot\mymid s_{t},a_{t},b_{t})$
for $t\geq0$). Similarly, we can define the Q-function $Q^{\mu,\nu}:\mathcal{S}\times\mathcal{A}\times\mathcal{B}\to\mathbb{R}$
for a given product policy $\mu\times\nu$ as follows
\[
Q^{\mu,\nu}\left(s,a,b\right)\coloneqq\mathbb{E}\left[\sum_{t=0}^{\infty}\gamma^{t}r\left(s_{t},a_{t},b_{t}\right)\mymid s_{0}=s,a_{0}=a,b_{0}=b;\mu,\nu\right],\qquad\forall(s,a,b)\in\mathcal{S}\times\mathcal{A}\times\mathcal{B},
\]
where the actions are drawn from $\mu\times\nu$ except for the initial
time step (namely, for any $t\geq1$, we execute $a_{t}\sim\mu(\cdot\mymid s_{t})$
and $b_{t}\sim\nu(\cdot\mymid s_{t})$ independently conditional on
the past). Additionally, for any state distribution $\rho\in\Delta(\mathcal{S})$,
we introduce the following notation tailored to the weighted value
function of policy pair $(\mu,\nu)$:
\[
V^{\mu,\nu}\left(\rho\right)\coloneqq\mathbb{E}_{s\sim\rho}\left[V^{\mu,\nu}\left(s\right)\right].
\]
Moreover, we define the discounted occupancy measures associated with
an initial state distribution $\rho\in\Delta(\mathcal{S})$ and the
product policy $\mu\times\nu$ as follows: 
\begin{align}
d^{\mu,\nu}\left(s;\rho\right) & \coloneqq\left(1-\gamma\right)\sum_{t=0}^{\infty}\gamma^{t}\mathbb{P}\left(s_{t}=s\mymid s_{0}\sim\rho;\mu,\nu\right),\qquad\qquad\qquad\qquad\,\forall\,s\in\mathcal{S},\label{eq:defn-d-mu-nu-s-rho}\\
d^{\mu,\nu}\left(s,a,b;\rho\right) & \coloneqq\left(1-\gamma\right)\sum_{t=0}^{\infty}\gamma^{t}\mathbb{P}\left(s_{t}=s,a_{t}=a,b_{t}=b\mymid s_{0}\sim\rho;\mu,\nu\right),\qquad\forall\,\left(s,a,b\right)\in\mathcal{S}\times\mathcal{A}\times\mathcal{B},\label{eq:defn-d-mu-nu-s-rho-sa}
\end{align}
where the sample trajectory $\{(s_{t},a_{t},b_{t})\}_{t\geq0}$ is
initialized with $s_{0}\sim\rho$ and then induced by the product
policy $\mu\times\nu$ and the transition kernel $P$ as before. It
is clearly seen from the above definition that 
\begin{equation}
d^{\mu,\nu}\left(s,a,b;\rho\right)=d^{\mu,\nu}\left(s;\rho\right)\mu(a\mymid s)\nu(b\mymid s).\label{eq:relation-dmunu-ab-s}
\end{equation}

\paragraph{Nash equilibrium. }

In general, the two players have conflicting goals, with the max-player
aimed at maximizing the value function and the min-player minimizing
the value function. As a result, a standard compromise in Markov games
becomes finding a Nash equilibrium (NE). To be precise, a policy pair
$(\mu^{\star},\nu^{\star})$ is said to be a Nash equilibrium if no
player can benefit from unilaterally changing her own policy given
the opponent's policy \citep{nash1951non}; that is, for any policies
$\mu:\mathcal{S}\to\Delta(\mathcal{A})$ and $\nu:\mathcal{S}\to\Delta(\mathcal{B})$,
one has
\[
V^{\mu,\nu^{\star}}\leq V^{\mu^{\star},\nu^{\star}}\leq V^{\mu^{\star},\nu}.
\]
As is well known \citep{shapley1953stochastic}, there exists at least
one Nash equilibrium $(\mu^{\star},\nu^{\star})$ in the discounted two-player
zero-sum Markov game, and every NE results in the same value function
\[
V^{\star}\left(s\right)\coloneqq V^{\mu^{\star},\nu^{\star}}\left(s\right)=\max_{\mu}\min_{\nu}V^{\mu,\nu}\left(s\right)=\min_{\nu}\max_{\mu}V^{\mu,\nu}\left(s\right).
\]
In addition, when the max-player's policy $\mu$ is fixed, it is clearly
seen that the MG reduces to a single-agent Markov Decision Process
(MDP). In light of this, we define, for each $s\in\mathcal{S}$,
\[
V^{\mu,\star}(s)\coloneqq\min_{\nu}V^{\mu,\nu}(s)\qquad\text{and}\qquad V^{\star,\nu}\coloneqq\max_{\mu}V^{\mu,\nu}(s),
\]
each of which corresponds to the optimal value function of one player
with the opponent's policy frozen. Moreover, for any policy pair $(\mu,\nu)$,
the following weak duality property always holds: 
\[
V^{\mu,\star}\leq V^{\mu^{\star},\nu^{\star}}=V^{\star}\leq V^{\star,\nu}.
\]
In this paper, our goal can be posed as calculating a policy pair
$(\widehat{\mu},\widehat{\nu})$ such that
\[
V^{\widehat{\mu},\star}\left(\rho\right)-\varepsilon\leq V^{\star}\left(\rho\right)\leq V^{\star,\widehat{\nu}}\left(\rho\right)+\varepsilon,
\]
where $\rho\in\Delta(\mathcal{S})$ is some prescribed initial state
distribution, and $\varepsilon\in\big(0,\frac{1}{1-\gamma}\big]$
denotes the target accuracy level. The gap $V^{\star,\widehat{\nu}}\left(\rho\right)-V^{\widehat{\mu},\star}\left(\rho\right)$
shall often be referred to as the duality gap of $(\widehat{\mu},\widehat{\nu})$
in the rest of the present paper. 

\begin{comment}
; we define the optimal policy of the min player in this MDP, denoted
by $\mathsf{br}(\mu)$, as the min player's best response policy to
$\mu$; similarly when the min player's policy $\nu$ is fixed, we
define $\mathsf{br}(\nu)$ as the max player's best response policy
to $\nu$.
\end{comment}

\subsection{Offline dataset (batch dataset)}

Suppose that we have access to a historical dataset containing a batch
of $N$ sample transitions $\mathcal{D}=\{(s_{i},a_{i},b_{i},s_{i}')\}_{1\leq i\leq N}$,
which are generated \emph{independently} from a distribution $d_{\mathsf{b}}\in\Delta(\mathcal{S}\times\mathcal{A}\times\mathcal{B})$
and the  probability transition kernel $P$, namely,
\begin{equation}
\left(s_{i},a_{i},b_{i}\right)\overset{\text{i.i.d.}}{\sim}d_{\mathsf{b}}\qquad\text{and}\qquad s_{i}'\overset{\mathrm{ind.}}{\sim}P\left(\cdot\mymid s_{i},a_{i},b_{i}\right).\label{eq:offline-data-generation}
\end{equation}
The goal is to learn an approximate Nash equilibrium on the basis
of this historical dataset. 

In general, the data distribution $d_{\mathsf{b}}$ might deviate
from the one generated by a Nash equilibrium $(\mu^{\star},\nu^{\star})$.
As a result, whether reliable learning is feasible depends heavily
upon the quality of the historical data. To quantify the quality of
the data distribution, \citet{cui2022offline} introduced the following
unilateral concentrability condition.

\begin{assumption}[Unilateral concentrability]\label{assumption:uniliteral-original}Suppose
that the following quantity 

\begin{equation}
C^{\star}\coloneqq\max\left\{ \sup_{\mu,s,a,b}\frac{d^{\mu,\nu^{\star}}\left(s,a,b;\rho\right)}{d_{\mathsf{b}}\left(s,a,b\right)},\sup_{\nu,s,a,b}\frac{d^{\mu^{\star},\nu}\left(s,a,b;\rho\right)}{d_{\mathsf{b}}\left(s,a,b\right)}\right\} \label{eq:defn-Cstar-unilateral-original}
\end{equation}
is  finite, where we define $0/0=0$ by convention. This quantity
$C^{\star}$ is termed the ``unilateral concentrability coefficient.''
\end{assumption}

In words, this quantity $C^{\star}$ employs certain density ratios
to measure the distribution mismatch between the target distribution
and the data distribution in hand. On the one hand, Assumption~\ref{assumption:uniliteral-original}
is substantially weaker than the type of uniform coverage requirement
(which imposes a uniform bound on the density ratio $\frac{d^{\mu,\nu}\left(s,a,b;\rho\right)}{d_{\mathsf{b}}\left(s,a,b\right)}$
over all $(\mu,\nu)$ simultaneously), as (\ref{eq:defn-Cstar-unilateral-original})
freezes the policy of one side while exhausting over all policies
of the other side. On the other hand, Assumption~\ref{assumption:uniliteral-original}
remains more stringent than a single-policy coverage requirement (which
only requires the dataset to cover the part of the state-action space
reachable by a given policy pair $(\mu^{\star},\nu^{\star})$), as
(\ref{eq:defn-Cstar-unilateral-original}) requires the data to cover
those state-action pairs reachable by \emph{any unilateral deviation}
from the target policy pair $(\mu^{\star},\nu^{\star})$. As posited
by \citet{cui2022offline}, unilateral coverage (i.e., a finite $C^{\star}<\infty$)
is necessary for learning Nash Equilibria in Markov games, which stands
in sharp contrast to the single-agent case where single-policy concentrability
suffices for finding the optimal policy \citep{rashidinejad2021bridging,xie2021policy,li2022settling}. 

In this paper, we introduce a modified assumption that might give
rise to slightly improved sample complexity bounds. 

\begin{assumption}[Clipped unilateral concentrability]\label{assumption:uniliteral}Suppose
that the following quantity 
\begin{equation}
C_{\mathsf{clipped}}^{\star}\coloneqq\max\left\{ \sup_{\mu,s,a,b}\frac{\min\left\{ d^{\mu,\nu^{\star}}\left(s,a,b;\rho\right),\frac{1}{S\left(A+B\right)}\right\} }{d_{\mathsf{b}}\left(s,a,b\right)},\sup_{\nu,s,a,b}\frac{\min\left\{ d^{\mu^{\star},\nu}\left(s,a,b;\rho\right),\frac{1}{S\left(A+B\right)}\right\} }{d_{\mathsf{b}}\left(s,a,b\right)}\right\} \label{eq:defn-Cstar-unilateral}
\end{equation}
is finite, where we define $0/0=0$ by convention. This quantity $C_{\mathsf{clipped}}^{\star}$
is termed the ``clipped unilateral concentrability coefficient.''
\end{assumption}

In a nutshell, when $d^{\mu,\nu^{\star}}\left(s,a,b;\rho\right)$
or $d^{\mu^{\star},\nu}\left(s,a,b;\rho\right)$ is reasonably large
(i.e., larger than $\frac{1}{S\left(A+B\right)}$), Assumption~\ref{assumption:uniliteral}
no longer requires the data distribution $d_{\mathsf{b}}$ to scale
proportionally with $d^{\mu,\nu^{\star}}$ or $d^{\mu^{\star},\nu}$,
thus resulting in (slight) relaxation of Assumption~\ref{assumption:uniliteral-original}.
Comparing (\ref{eq:defn-Cstar-unilateral}) with (\ref{eq:defn-Cstar-unilateral-original})
immediately reveals that
\[
C^{\star}\geq C_{\mathsf{clipped}}^{\star}
\]
holds all the time. Further, it is straightforward to verify that
$C^{\star}\geq\max\{A,B\}$; in comparison, $C_{\mathsf{clipped}}^{\star}$
can be as small as $\frac{2AB}{S(A+B)}$, as shown in our lower bound
construction in Appendix \ref{sec:Proof-of-Theorem-lower-bound}.

\section{Algorithm and main theory\label{sec:Algorithm-and-main}}

In this section, we propose a pessimistic model-based offline algorithm
--- called VI-LCB-Game --- to solve the two-player zero-sum Markov
games. The proposed algorithm is then shown to achieve minimax-optimal
sample complexity in finding an approximate Nash equilibrium of the
Markov game given offline data.

\subsection{Algorithm design\label{subsec:Algorithm-design}}

\paragraph{The empirical Markov game.}

With the offline dataset $\{(s_{i},a_{i},b_{i},s_{i}')\}_{1\leq i\leq N}$
in hand, we can readily construct an empirical Markov game. To do
so, we first compute the sample size
\[
N\left(s,a,b\right)=\sum_{i=1}^{N}\ind\big\{\left(s_{i},a_{i},b_{i}\right)=\left(s,a,b\right)\big\}
\]
for each $(s,a,b)\in\mathcal{S}\times\mathcal{A}\times\mathcal{B}$.
The empirical transition kernel $\widehat{P}:\mathcal{S}\times\mathcal{A}\times\mathcal{B}\to\Delta(\mathcal{S})$
is then constructed as follows: 
\begin{equation}
\widehat{P}\left(s'\mymid s,a,b\right)=\begin{cases}
\frac{1}{N\left(s,a,b\right)}\sum_{i=1}^{N}\ind\left\{ \left(s_{i},a_{i},b_{i},s_{i}'\right)=\left(s,a,b,s'\right)\right\} , & \text{if }N\left(s,a,b\right)>0\\
\frac{1}{S}, & \text{if }N\left(s,a,b\right)=0
\end{cases}\label{eq:empirical-transition}
\end{equation}
for any $s'\in\mathcal{S}$ and any $(s,a,b)\in\mathcal{S}\times\mathcal{A}\times\mathcal{B}$.
Throughout this paper, we shall often let $\widehat{P}_{s,a,b}\in\mathbb{R}^{1\times S}$
abbreviate $\widehat{P}(\cdot\mymid s,a,b)$. In addition, the empirical
reward function $\widehat{r}:\mathcal{S}\times\mathcal{A}\times\mathcal{B}\to\mathbb{R}$
is taken to be
\begin{equation}
\widehat{r}\left(s,a,b\right)=\begin{cases}
r\left(s,a,b\right), & \text{if }N\left(s,a,b\right)>0\\
0, & \text{if }N\left(s,a,b\right)=0
\end{cases}\label{eq:empirical-reward}
\end{equation}
for any $(s,a,b)\in\mathcal{S}\times\mathcal{A}\times\mathcal{B}$.
Armed with these components, we arrive at an empirical zero-sum Markov
game, denoted by $\widehat{\mathcal{MG}}=(\mathcal{S},\mathcal{A},\mathcal{B},\widehat{P},\widehat{r},\gamma)$. 

\paragraph{Pessimistic Bellman operators.}

Recall that the classical Bellman operator $\mathcal{T}:\mathbb{R}^{SAB}\to\mathbb{R}^{SAB}$
is defined such that \citep{shapley1953stochastic,lagoudakis2002value}:
for any $Q:\mathcal{S}\times\mathcal{A}\times\mathcal{B}\to\mathbb{R}$,
\[
\mathcal{T}\left(Q\right)\left(s,a,b\right)=r\left(s,a,b\right)+\gamma P_{s,a,b}V,
\]
where $V:\mathcal{S}\to\mathbb{R}$ is the value function associated
with the input $Q$, i.e.,
\begin{equation}
V\left(s\right)\coloneqq\max_{\mu_{s}\in\Delta(\mathcal{A})}\min_{\nu_{s}\in\Delta(\mathcal{B})}\mathop{\mathbb{E}}_{a\sim\mu_{s},b\sim\nu_{s}}\big[Q(s,a,b)\big],\qquad\forall s\in\mathcal{S}.\label{eq:V-defn-Bellman}
\end{equation}
Note, however, that we are in need of modified versions of the Bellman
operator in order to accommodate the offline setting. In this paper,
we introduce the pessimistic Bellman operator $\mathcal{\widehat{\mathcal{T}}}_{\mathsf{pe}}^{-}$
(resp.~$\mathcal{\widehat{\mathcal{T}}}_{\mathsf{pe}}^{+}$) for
the max-player (resp.~min-player) as follows: for every $(s,a,b)\in\mathcal{S}\times\mathcal{A}\times\mathcal{B}$,\begin{subequations}\label{eq:bellman-op-pe-defn}
\begin{align}
\mathcal{\widehat{\mathcal{T}}}_{\mathsf{pe}}^{-}\left(Q\right)\left(s,a,b\right) & \coloneqq\max\left\{ \widehat{r}\left(s,a,b\right)+\gamma\widehat{P}_{s,a,b}V-\beta\left(s,a,b;V\right),0\right\} ,\label{eq:bellman-pe-defn}\\
\mathcal{\widehat{\mathcal{T}}}_{\mathsf{pe}}^{+}\left(Q\right)\left(s,a,b\right) & \coloneqq\min\left\{ \widehat{r}\left(s,a,b\right)+\gamma\widehat{P}_{s,a,b}V+\beta\left(s,a,b;V\right),\frac{1}{1-\gamma}\right\} ,\label{eq:bellman-op-defn}
\end{align}
\end{subequations}where $V$ is again defined in (\ref{eq:V-defn-Bellman}).
The additional term $\beta\left(s,a,b;V\right)$ is incorporated into
the operators in order to implement pessimism; informally, we anticipate
this penalty term to help $\mathcal{\widehat{\mathcal{T}}}_{\mathsf{pe}}^{-}$
(resp.~$\mathcal{\widehat{\mathcal{T}}}_{\mathsf{pe}}^{+}$) produce
a conservative estimate of the Q-function from the max-player's (resp.~min-player's)
viewpoint. Here and throughout, we choose this term based on \emph{Bernstein-style}
concentration bounds; specifically, we take
\begin{equation}
\beta\left(s,a,b;V\right)=\min\left\{ \max\left\{ \sqrt{\frac{C_{\mathsf{b}}\log\frac{N}{\delta}}{N\left(s,a,b\right)}\mathsf{Var}_{\widehat{P}_{s,a,b}}\left(V\right)},\frac{2C_{\mathsf{b}}\log\frac{N}{\delta}}{\left(1-\gamma\right)N\left(s,a,b\right)}\right\} ,\frac{1}{1-\gamma}\right\} +\frac{4}{N}\label{eq:bonus-term-defn}
\end{equation}
for some sufficiently large constant $C_{\mathsf{b}}>0$, where $1-\delta$
denotes the target success probability, and the empirical variance
term is defined as
\begin{equation}
\mathsf{Var}_{\widehat{P}_{s,a,b}}\left(V\right)\coloneqq\widehat{P}_{s,a,b}V^{2}-(\widehat{P}_{s,a,b}V)^{2}.\label{eq:empirical-variance-defn}
\end{equation}

It is well known that the classical Bellman operator $\mathcal{T}$
satisfies the $\gamma$-contraction property, which guarantees fast
global convergence of classical value iteration. As it turns out,
the pessimistic Bellman operators introduced above also enjoy the
$\gamma$-contraction property in the sense that 
\begin{equation}
\big\|\mathcal{\widehat{\mathcal{T}}}_{\mathsf{pe}}^{-}\left(Q_{1}\right)-\mathcal{\widehat{\mathcal{T}}}_{\mathsf{pe}}^{-}\left(Q_{2}\right)\big\|_{\infty}\leq\gamma\left\Vert Q_{1}-Q_{2}\right\Vert _{\infty}\quad\text{and}\quad\big\|\mathcal{\widehat{\mathcal{T}}}_{\mathsf{pe}}^{+}\left(Q_{1}\right)-\mathcal{\widehat{\mathcal{T}}}_{\mathsf{pe}}^{+}\left(Q_{2}\right)\big\|_{\infty}\leq\gamma\left\Vert Q_{1}-Q_{2}\right\Vert _{\infty};\label{eq:gamma-contraction-That}
\end{equation}
see Lemma~\ref{lemma:gamma-contraction} for precise statements. 

\paragraph{Pessimistic value iteration with Bernstein-style penalty.}

With the pessimistic Bellman operators in place, we are positioned
to present the proposed paradigm. Our algorithm maintains the Q-function
iterates $\big\{ Q_{\mathsf{pe},t}^{-}\}$, the policy iterates $\big\{\mu_{t}^{-}\big\}$
and $\big\{\nu_{t}^{-}\big\}$, and the value function iterates $\big\{ V_{\mathsf{pe},t}^{-}\big\}$
from the max-player's perspective; at the same time, it also maintains
an analogous group of iterates $\big\{ Q_{\mathsf{pe},t}^{+}\}$,
$\big\{\mu_{t}^{+}\big\}$ and $\big\{\nu_{t}^{+}\big\}$ and $\big\{ V_{\mathsf{pe},t}^{+}\big\}$
from the min-player's perspective. The updates of the two groups of
iterates are carried out in a \emph{completely decoupled} manner,
except when determining the final output. 

In what follows, let us describe the update rules from the max-player's
perspective. For notational simplicity, we shall write $\mu(s)\coloneqq\mu(\cdot\mymid s)\in\Delta(\mathcal{A})$
and $\nu(s)\coloneqq\nu(\cdot\mymid s)\in\Delta(\mathcal{B})$ whenever
it is clear from the context. In each round $t=1,2,\cdots,$ we carry
out the following update rules:
\begin{enumerate}
\item \emph{Updating Q-function estimates}. Run a pessimistic variant of
value iteration to yield
\begin{equation}
Q_{\mathsf{pe},t}^{-}=\widehat{\mathcal{T}}_{\mathsf{pe}}^{-}\left(Q_{\mathsf{pe},t-1}^{-}\right).\label{eq:VI-description}
\end{equation}
The $\gamma$-contraction property (\ref{eq:gamma-contraction-That})
helps ensure sufficient progress made in each iteration of this update
rule. 
\item \emph{Updating policy estimates}. We then adjust the policies based
on the updated Q-function estimates (\ref{eq:VI-description}). Specifically,
for each $s\in\mathcal{S}$, we compute the Nash equilibrium $\big(\mu_{t}^{-}(s),\nu_{t}^{-}(s)\big)\in\Delta(\mathcal{A})\times\Delta(\mathcal{B})$
of the zero-sum matrix game with payoff matrix $Q_{\mathsf{pe},t}^{-}(s,\cdot,\cdot)$.
It is worth noting that there is a host of methods for efficiently
calculating the NE of a zero-sum matrix game, prominent examples including
linear programming and no-regret learning \citep{raghavan1994zero,freund1999adaptive,roughgarden2016twenty,rakhlin2013optimization}. 
\item \emph{Policy evaluation}: for each $s\in\mathcal{S}$, update the
value function estimates based on the updated policies $\big(\mu_{t}^{-}(s),\nu_{t}^{-}(s)\big)$
as follows
\begin{align*}
V_{\mathsf{pe},t}^{-}\left(s\right) & =\mathop{\mathbb{E}}\limits _{a\sim\mu_{t}^{-}(s),b\sim\nu_{t}^{-}(s)}\left[Q_{\mathsf{pe},t}^{-}\left(s,a,b\right)\right].
\end{align*}
\end{enumerate}
The updates for $\big\{ Q_{\mathsf{pe},t}^{+}\}$, $\big\{\mu_{t}^{+}\big\}$
and $\big\{\nu_{t}^{+}\big\}$ from the min-player's perspective are
carried out in an analogous and completely independent manner; see
Algorithm~\ref{alg:VI} for details. 

\paragraph{Final output. }

By running the above update rules for $T=\lceil\frac{\log(N/(1-\gamma))}{\log(1/\gamma)}\rceil$
iterations, we arrive at the Q-function estimates 
\begin{equation}
Q_{\mathsf{pe}}^{-}\coloneqq Q_{\mathsf{pe},T}^{-}\qquad\text{and}\qquad Q_{\mathsf{pe}}^{+}\coloneqq Q_{\mathsf{pe},T}^{+},\label{eq:final-Q-estimate}
\end{equation}
in addition to two sets of policy estimates
\begin{equation}
\big(\mu^{-},\nu^{-}\big)\coloneqq\big(\mu_{T}^{-},\nu_{T}^{-}\big)\qquad\text{and}\qquad\big(\mu^{+},\nu^{+}\big)\coloneqq\big(\mu_{T}^{+},\nu_{T}^{+}\big).\label{eq:final-policy-estimate-2groups}
\end{equation}
The final policy estimate of the algorithm is then chosen to be 
\[
\big(\widehat{\mu},\widehat{\nu}\big)=\big(\mu^{-},\nu^{+}\big).
\]
The full algorithm is summarized in Algorithm~\ref{alg:VI}. 

\begin{comment}
As we shall demonstrate shortly in Section~\ref{subsec:properties-operators},
the iterates $\{\widehat{Q}_{\mathsf{pe},t}\}_{t\geq0}$ (resp.~$\{\widehat{Q}_{\mathsf{op},t}\}_{t\geq0}$)
converge linearly to the unique fixed point $\widehat{Q}_{\mathsf{pe}}^{\star}$
(resp.~$\widehat{Q}_{\mathsf{op}}^{\star}$) of $\widehat{\mathcal{T}}_{\mathsf{pe}}$
(resp.~$\widehat{\mathcal{T}}_{\mathsf{op}}$), owing to the $\gamma$-contraction
property (\ref{eq:gamma-contraction-That}). 

For each $s\in\mathcal{S}$, we compute the Nash equilibrium $\big(\widehat{\mu}^{-}(s),\widehat{\nu}^{-}(s)\big)\in\Delta(\mathcal{A})\times\Delta(\mathcal{B})$
of the matrix game with payoff matrix $\widehat{Q}_{\mathsf{pe}}^{-}(s,\cdot,\cdot)$,
as well as the Nash equilibrium $(\widehat{\mu}^{+}(s),\widehat{\nu}^{+}(s))\in\Delta(\mathcal{A})\times\Delta(\mathcal{B})$
of the matrix game with payoff matrix $\widehat{Q}_{\mathsf{pe}}^{+}(s,\cdot,\cdot)$. 

outputs the policy pair $(\widehat{\mu},\widehat{\nu})$ as an estimate
of the Nash equilibrium of the Markov game, where $\widehat{\mu}=\{\widehat{\mu}^{-}(s)\}$
and $\widehat{\nu}=\{\widehat{\nu}^{+}(s)\}$. 
\end{comment}

\begin{algorithm}[t]
\caption{Value iteration with lower confidence bounds for zero-sum Markov games
(VI-LCB-Game).}

\label{alg:VI}\begin{algorithmic}

\STATE \textbf{{Initialization}}: set $Q_{\mathsf{pe},0}^{-}\left(s,a,b\right)=0$
and $Q_{\mathsf{pe},0}^{+}(s,a,b)=\frac{1}{1-\gamma}$ for all $(s,a,b)\in\mathcal{S}\times\mathcal{A}\times\mathcal{B}$;
set $T=\lceil\frac{\log(N/(1-\gamma))}{\log(1/\gamma)}\rceil$.

\STATE \textbf{{Compute}} the empirical transition kernel $\widehat{P}$
as (\ref{eq:empirical-transition}) and the empirical reward function
$\widehat{r}$ as (\ref{eq:empirical-reward}).

\STATE \textbf{{For}}:\textbf{ }$t=1,\ldots,T$ \textbf{do }

\STATE
\begin{itemize}
\item Update
\begin{align*}
Q_{\mathsf{pe},t}^{-}\left(s,a,b\right) & =\widehat{\mathcal{T}}_{\mathsf{pe}}^{-}\left(Q_{\mathsf{pe},t-1}^{-}\right)=\max\left\{ \widehat{r}\left(s,a,b\right)+\gamma\widehat{P}_{s,a,b}V_{\mathsf{pe},t-1}^{-}-\beta\left(s,a,b;V_{\mathsf{pe},t-1}^{-}\right),\,0\right\} ,\\
Q_{\mathsf{pe},t}^{+}\left(s,a,b\right) & =\widehat{\mathcal{T}}_{\mathsf{pe}}^{+}\left(Q_{\mathsf{pe},t-1}^{+}\right)=\min\left\{ \widehat{r}\left(s,a,b\right)+\gamma\widehat{P}_{s,a,b}V_{\mathsf{pe},t-1}^{+}+\beta\left(s,a,b;V_{\mathsf{pe},t-1}^{+}\right),\,\frac{1}{1-\gamma}\right\} ,
\end{align*}
where
\[
\beta\left(s,a,b;V\right)=\min\left\{ \max\left\{ \sqrt{\frac{C_{\mathsf{b}}\log\frac{N}{\delta}}{N\left(s,a,b\right)}\mathsf{Var}_{\widehat{P}_{s,a,b}}\left(V\right)},\frac{2C_{\mathsf{b}}\log\frac{N}{\delta}}{\left(1-\gamma\right)N\left(s,a,b\right)}\right\} ,\frac{1}{1-\gamma}\right\} +\frac{4}{N}
\]
for some sufficiently large constant $C_{\mathsf{b}}>0$, with $\mathsf{Var}_{\widehat{P}_{s,a,b}}\left(V\right)$
defined in (\ref{eq:empirical-variance-defn}).
\item For each $s\in\mathcal{S}$, compute
\begin{align*}
\left(\mu_{t}^{-}\left(s\right),\nu_{t}^{-}\left(s\right)\right) & =\mathsf{MatrixNash}\left(Q_{\mathsf{pe},t}^{-}\left(s,\cdot,\cdot\right)\right),\\
\left(\mu_{t}^{+}\left(s\right),\nu_{t}^{+}\left(s\right)\right) & =\mathsf{MatrixNash}\left(Q_{\mathsf{pe},t}^{+}\left(s,\cdot,\cdot\right)\right),
\end{align*}
where for any matrix $M\in\mathbb{R}^{A\times B}$, the function $\mathsf{MatrixNash}(M)$
returns a solution $(\widehat{w},\widehat{z})$ to the minimax program
$\max_{w\in\Delta(\mathcal{A})}\min_{z\in\Delta(\mathcal{B})}\,w^{\top}Mz$. 
\item For each $s\in\mathcal{S}$, update
\begin{align*}
V_{\mathsf{pe},t}^{-}\left(s\right) & =\mathbb{E}_{a\sim\mu_{t}^{-}\left(s\right),b\sim\nu_{t}^{-}(s)}\left[Q_{\mathsf{pe},t}^{-}\left(s,a,b\right)\right],\\
V_{\mathsf{pe},t}^{+}\left(s\right) & =\mathbb{E}_{a\sim\mu_{t}^{+}\left(s\right),b\sim\nu_{t}^{+}(s)}\left[Q_{\mathsf{pe},t}^{+}\left(s,a,b\right)\right].
\end{align*}
\end{itemize}
\STATE \textbf{{Output}}: the policy pair $(\widehat{\mu},\widehat{\nu})$,
where $\widehat{\mu}=\{\mu_{T}^{-}(s)\}_{s\in\mathcal{S}}$ and $\widehat{\nu}=\{\nu_{T}^{+}(s)\}_{s\in\mathcal{S}}$.

\end{algorithmic}
\end{algorithm}

\subsection{Theoretical guarantees\label{subsec:Theoretical-guarantees}}

Our main result is to uncover the intriguing sample efficiency of
the proposed model-based algorithm. This is formally stated below,
with the proof postponed to Appendix~\ref{sec:Proof-of-Theorem-main-UB}. 

\begin{theorem}\label{thm:main}Consider any initial state distribution
$\rho\in\Delta(\mathcal{S})$, and suppose that Assumption \ref{assumption:uniliteral}
holds. Assume that $1/2\leq\gamma<1$, and consider any $\delta\in(0,1)$
and $\varepsilon\in\big(0,\frac{1}{1-\gamma}\big]$. Then with probability
exceeding $1-\delta$, the policy pair $(\widehat{\mu},\widehat{\nu})$
returned by Algorithm~\ref{alg:VI} satisfies
\[
V^{\widehat{\mu},\star}\left(\rho\right)-\varepsilon\leq V^{\star}\left(\rho\right)\leq V^{\star,\widehat{\nu}}\left(\rho\right)+\varepsilon,
\]
 as long as the sample size exceeds
\[
N\geq c_{1}\frac{C_{\mathsf{clipped}}^{\star}S\left(A+B\right)}{\left(1-\gamma\right)^{3}\varepsilon^{2}}\log\frac{N}{\delta}
\]
for some sufficiently large constant $c_{1}>0$. \end{theorem}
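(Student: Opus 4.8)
The plan is to control the two halves of the duality gap, $V^{\star,\widehat\nu}(\rho)-V^\star(\rho)$ and $V^\star(\rho)-V^{\widehat\mu,\star}(\rho)$, separately, and to bound each by $\varepsilon$. Since the min-player iterates $\{Q^+_{\mathsf{pe},t}\}$ are driven by the mirror operator $\widehat{\mathcal T}^+_{\mathsf{pe}}$ (which adds rather than subtracts $\beta$), the two analyses are symmetric, so I would carry out the max-player side in detail, namely proving $V^\star(\rho)-V^{\mu^-,\star}(\rho)\le\varepsilon$, and obtain the min-player side by an identical argument applied to $\widehat\nu=\nu^+$. The whole argument rests on two pillars: a \emph{pessimism} guarantee asserting that the value $V^-_{\mathsf{pe}}$ read off from the fixed point of $\widehat{\mathcal T}^-_{\mathsf{pe}}$ underestimates the true value of the returned policy, i.e. $V^-_{\mathsf{pe}}(s)\le V^{\mu^-,\star}(s)$ for all $s$, and a \emph{suboptimality} bound showing $V^\star(\rho)-V^-_{\mathsf{pe}}(\rho)$ is no larger than a penalty sum evaluated along a unilateral occupancy measure. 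Chaining $V^{\mu^-,\star}(\rho)\ge V^-_{\mathsf{pe}}(\rho)\ge V^\star(\rho)-\varepsilon$ then closes the max-player side.

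First I would invoke the $\gamma$-contraction property (\ref{eq:gamma-contraction-That}): the iterates converge linearly, so after $T=\lceil\log(N/(1-\gamma))/\log(1/\gamma)\rceil$ rounds the $\ell_\infty$ gap between $Q^-_{\mathsf{pe},T}$ and the fixed point $\widehat Q^-_{\mathsf{pe}}$ is at most $O(1/N)$, which reduces everything to a statement about the fixed point and folds the $4/N$ term in (\ref{eq:bonus-term-defn}) harmlessly into the error budget. For the \emph{pessimism step}, I would first establish a high-probability ``good event'' on which the penalty $\beta(s,a,b;V)$ dominates the one-step fluctuation $\gamma|(\widehat P_{s,a,b}-P_{s,a,b})V|$ for the relevant value functions. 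Granting this, the fixed-point relation $V^-_{\mathsf{pe}}(s)=\max_{\mu_s}\min_{\nu_s}\mathbb E_{\mu_s,\nu_s}[\widehat Q^-_{\mathsf{pe}}(s,a,b)]\le\min_{\nu_s}\mathbb E_{\mu^-(s),\nu_s}[\widehat Q^-_{\mathsf{pe}}]$, compared against the Bellman equation for $Q^{\mu^-,\star}$ obtained by letting the min-player best-respond to $\mu^-$ in the true game, yields $V^-_{\mathsf{pe}}\le V^{\mu^-,\star}$ entrywise via a one-step monotonicity induction along the VI iterates.

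For the \emph{suboptimality step}, let $\mu^\star$ be the max-player's Nash strategy in the true game. Because $V^-_{\mathsf{pe}}$ is a minimax value, $V^-_{\mathsf{pe}}(s)\ge\min_{\nu_s}\mathbb E_{\mu^\star(s),\nu_s}[\widehat Q^-_{\mathsf{pe}}]$, so that $V^\star(s)-V^-_{\mathsf{pe}}(s)\le\mathbb E_{\mu^\star(s),\nu'(s)}[Q^\star-\widehat Q^-_{\mathsf{pe}}]$ for the minimizing $\nu'$. Substituting the two Bellman relations gives the recursion $Q^\star-\widehat Q^-_{\mathsf{pe}}\le\gamma(P_{s,a,b}-\widehat P_{s,a,b})V^\star+\gamma\widehat P_{s,a,b}(V^\star-V^-_{\mathsf{pe}})+\beta$, whose driving terms are a deterministic-reference fluctuation (controllable by Bernstein applied to the fixed $V^\star$) plus the penalty. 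Crucially, since $\mu^\star$ is frozen while $\nu'$ ranges freely, unrolling over the induced Markov chain produces the occupancy measure of the pair $(\mu^\star,\nu')$---exactly a unilateral deviation---so that $V^\star(\rho)-V^-_{\mathsf{pe}}(\rho)\lesssim\frac1{1-\gamma}\sum_{s,a,b}d^{\mu^\star,\nu'}(s,a,b;\rho)\,\beta(s,a,b;\cdot)$. I would then bound $\beta$ by its Bernstein form, replace $N(s,a,b)$ by $\tfrac N2 d_{\mathsf b}(s,a,b)$ via a multiplicative Chernoff bound, and invoke Assumption~\ref{assumption:uniliteral} to convert the density ratio $d^{\mu^\star,\nu'}/d_{\mathsf b}$ into $C^\star_{\mathsf{clipped}}$; a Cauchy--Schwarz step over the variance terms together with a total-variance telescoping identity turns $\sum_t\gamma^t\mathsf{Var}$ into one extra factor of $(1-\gamma)^{-1}$, which is what produces the sharp $(1-\gamma)^{-3}$ scaling.

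The hard part, I expect, is twofold. The first obstacle is statistical: the empirical variance $\mathsf{Var}_{\widehat P_{s,a,b}}(V^-_{\mathsf{pe}})$ inside $\beta$ is computed from the very data that defines $V^-_{\mathsf{pe}}$, so a naive Bernstein bound cannot be applied to this data-dependent value. To avoid sample splitting, I would route the concentration through the deterministic reference $V^\star$ and control the discrepancy $|\mathsf{Var}_{\widehat P}(V^-_{\mathsf{pe}})-\mathsf{Var}_P(V^\star)|$ with a self-bounding argument. The second obstacle is combinatorial: recovering the $A+B$ (rather than $AB$) dependence hinges on the penalty sum featuring only the unilateral occupancy measure, and on the clipping in $C^\star_{\mathsf{clipped}}$ precisely offsetting the state-action pairs that receive few samples; making the total-variance bookkeeping compatible with this unilateral decoupling, uniformly over the entire range $\varepsilon\in\big(0,\frac1{1-\gamma}\big]$ with no burn-in cost, is where the delicate effort concentrates.
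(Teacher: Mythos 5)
Your overall architecture coincides with the paper's: reduce to the fixed point via the $\gamma$-contraction, establish pessimism ($V^-_{\mathsf{pe}}\leq V^{\widehat\mu,\star}$, the paper's Lemma~\ref{lemma:Q-monononicity}), unroll a self-bounding recursion against the unilateral comparator pair $(\mu^\star,\nu_0)$ to obtain $V^\star(\rho)-V^-_{\mathsf{pe}}(\rho)\lesssim\frac{1}{1-\gamma}\big(d^{\mu^\star,\nu_0}\big)^\top\beta^{\mu^\star,\nu_0}$, and then combine a Chernoff bound on $N(s,a,b)$, Assumption~\ref{assumption:uniliteral}, Cauchy--Schwarz, and a total-variance identity to control the penalty sum (Lemma~\ref{lemma:d_b_upper_bound}). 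Two of your steps deviate only cosmetically: the paper first uses the dominance $|(\widehat P_{s,a,b}-P_{s,a,b})V|\leq\beta$ to convert $\widehat P$ back to $P$ before unrolling, so that the resulting occupancy measure is the one under the \emph{true} kernel (your recursion, as written with $\gamma\widehat P(V^\star-V^-_{\mathsf{pe}})$, would generate the empirical occupancy measure, to which Assumption~\ref{assumption:uniliteral} does not directly apply); and pessimism is proved by a fixed-point, minimum-index argument rather than induction along the iterates.

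The genuine gap is in how you decouple the statistical dependency between $V^-_{\mathsf{pe}}$ and $\widehat P$. You propose to route the Bernstein bound through the deterministic reference $V^\star$ and to control $|\mathsf{Var}_{\widehat P}(V^-_{\mathsf{pe}})-\mathsf{Var}_{P}(V^\star)|$ by a self-bounding argument. Under only (clipped) unilateral coverage this cannot work as stated: at triples with $N(s,a,b)=0$ the penalty forces $Q^-_{\mathsf{pe}}(s,a,b)=0$ while $Q^\star(s,a,b)$ may be as large as $\frac{1}{1-\gamma}$, so $\|V^-_{\mathsf{pe}}-V^\star\|_\infty$ can be of order $\frac{1}{1-\gamma}$ and the variance discrepancy of order $\frac{1}{(1-\gamma)^2}$ --- far too large to absorb into $\beta$ while preserving the $(1-\gamma)^{-3}$ rate over the full $\varepsilon$-range. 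Pessimism only guarantees closeness of $\rho$-weighted values along covered trajectories, not uniform closeness. The paper's resolution is different and essential: Lemma~\ref{lemma:loo} establishes the Bernstein inequality uniformly over all $\widetilde V$ within $1/N$ of the data-dependent fixed points $V^{-\star}_{\mathsf{pe}},V^{+\star}_{\mathsf{pe}}$, proved by a leave-one-out construction (auxiliary games in which state $s$ is made absorbing with a tunable reward $u$, whose fixed points are independent of the samples collected at $s$, followed by a covering argument over $u$). Without this device, or sample splitting, your concentration step does not close.
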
\begin{remark}Our
result and analysis have been inspired by prior works that showed
that model-based RL achieves, in multiple settings, sample efficiency
without the need of variance reduction \citep{agarwal2019optimality,li2020breaking,li2022settling}.
The proof of this sample complexity bound entails several key analysis
ingredients: (i) a leave-one-out analysis argument
that proves effective in decoupling complicated statistical dependency;
and (ii) a careful self-bounding trick (i.e., upper bounding a certain
quantity by a contraction of itself in addition to some other error
terms) to derive a sharp control of the target duality gap. See Appendix~\ref{sec:Proof-of-Theorem-main-UB} for details. 
Although techniques like leave-one-out analysis have been used in some prior RL literature \citep{agarwal2019optimality,li2020breaking,li2022settling}, as far as we know, our work applies this technique for the first time to multi-agent reinforcement learning. It has been observed that extending the algorithmic or analysis ideas in single-agent RL to the multi-agent counterpart often leads to sub-optimal sample complexity bounds that scale linearly in the total number of joint actions $AB$ \citep{cui2022offline,zhang2020model-based}. In contrast, our analysis framework leads to optimal sample complexity bound that scales linearly in the total number of individual actions $A+B$.
\end{remark}

\begin{remark}  \label{remark:instance}A line of recent works focus on instance-optimality of RL algorithms \citep{khamaru2021instance,khamaru2021temporal,mou2022optimal}.
	%, and it is natural to ask whether our current analysis framework enables instance-optimal results. 
	However, it remains challenging to establish instance-dependent bounds for multi-agent RL, even in two-player zero-sum Markov games, due to the difficulties arising from offline data and multi-agent settings. Unlike RL with a generative model (simulator) that can generate independent samples for all state-action pairs, offline RL suffers from substantially more challenges like distribution shift and limited data coverage, making it more difficult to derive instance-dependent error bounds. In addition, the prior literature \citet{khamaru2021instance} that establishes instance optimality of variance-reduced Q-learning algorithms for the optimal value estimation problem requires one of the following two conditions: the optimal policy is unique, or a meaningful sample complexity bound that depends on an optimality gap can be obtained. However, neither condition has a direct analog in zero-sum Markov games; this is because the Nash equilibrium in a zero-sum Markov game is not unique in general, and there is no well-defined analog of optimality gap for zero-sum Markov games. Detailed discussion on the challenges and difficulty of extending our analysis to develop instance-dependent error bounds can be found in Appendix \ref{appendix:discuss-instance}.
\end{remark}

The sample complexity needed for Algorithm \ref{alg:VI} to compute
a policy pair with $\varepsilon$-duality gap is at most
\begin{equation}
\widetilde{O}\left(\frac{C_{\mathsf{clipped}}^{\star}S\left(A+B\right)}{\left(1-\gamma\right)^{3}\varepsilon^{2}}\right),\label{eq:sample-complexity}
\end{equation}
which accommodates any target accuracy within the range $\big(0,\frac{1}{1-\gamma}\big]$.
In addition to linear dependency on $C_{\mathsf{clipped}}^{\star}$,
the sample complexity bound (\ref{eq:sample-complexity}) scales linearly
(as opposed to quadratically) with the aggregate size $A+B$ of the
individual action spaces. It is noteworthy that our algorithm is a
fairly straightforward implementation of the model-based approach
(except that the pessimism principle is incorporated), and does not
require either sample splitting or sophisticated schemes like variance
reduction \citep{zhang2020almost,li2021breaking,yan2022efficacy,xie2021policy,zhang2020model-based}. 

As it turns out, the above sample complexity theory for Algorithm
\ref{alg:VI} matches the minimax lower limit modulo some logarithmic
term, as asserted by the following theorem. This minimax lower bound
--- whose proof is postponed to Appendix~\ref{sec:Proof-of-Theorem-lower-bound}
--- is inspired by prior lower bound theory for single-agent MDPs
(e.g., \citet{azar2013minimax,li2022settling}) and might shed light
on how to establish lower bounds for other game-theoretic settings.

\begin{theorem}\label{thm:lower-bound} Consider any $S\geq2$, $A\geq2$,
$B\geq2$, $\gamma\in[\frac{2}{3},1)$ and $C_{\mathsf{clipped}}^{\star}\geq\frac{2AB}{S(A+B)}$,
and define the set
\begin{align*}
\mathsf{MG}\left(C_{\mathsf{clipped}}^{\star}\right) & \coloneqq\Bigg\{\big\{\mathcal{MG},\rho,d_{\mathsf{b}}\big\}\,\,\Bigg|\,\,\left|\mathcal{S}\right|=S,\ \left|\mathcal{A}\right|=A,\ \left|\mathcal{B}\right|=B,\\
 & \qquad\rho\in\Delta\left(\mathcal{S}\right),\ d_{\mathsf{b}}\in\Delta\left(\mathcal{S}\times\mathcal{A}\times\mathcal{B}\right),\ \exists\text{ an NE }\left(\mu^{\star},\nu^{\star}\right)\text{ of }\mathcal{MG}\text{ such that }\\
 & \qquad\max\Bigg\{\sup_{\mu,s,a,b}\frac{\min\left\{ d^{\mu,\nu^{\star}}\left(s,a,b;\rho\right),\frac{1}{S\left(A+B\right)}\right\} }{d_{\mathsf{b}}\left(s,a,b\right)},\sup_{\nu,s,a,b}\frac{\min\left\{ d^{\mu^{\star},\nu}\left(s,a,b;\rho\right),\frac{1}{S\left(A+B\right)}\right\} }{d_{\mathsf{b}}\left(s,a,b\right)}\Bigg\}=C_{\mathsf{clipped}}^{\star}\Bigg\}.
\end{align*}
Then there exist some universal constants $c_{2},c_{\varepsilon}>0$
such that: for any $\varepsilon\in(0,\frac{1}{c_{\varepsilon}(1-\gamma)\log(A+B)}]$,
if the sample size obeys
\[
N<\frac{c_{2}S\left(A+B\right)C_{\mathsf{clipped}}^{\star}}{\left(1-\gamma\right)^{3}\varepsilon^{2}\log\left(A+B\right)},
\]
then one necessarily has
\[
\underset{(\widehat{\mu},\widehat{\nu})}{\inf}\underset{\{\mathcal{MG},\rho,d_{\mathsf{b}}\}\in\mathsf{MG}(C_{\mathsf{clipped}}^{\star})}{\sup}\mathbb{E}\left[V^{\star,\widehat{\nu}}\left(\rho\right)-V^{\widehat{\mu},\star}\left(\rho\right)\right]\geq\varepsilon.
\]
Here, the infimum is taken over all estimators $(\widehat{\mu},\widehat{\nu})$
for the Nash equilibrium based on the batch dataset $\mathcal{D}=\{(s_{i},a_{i},b_{i},s_{i}')\}_{i=1}^{n}$
generated according to (\ref{eq:offline-data-generation}). \end{theorem}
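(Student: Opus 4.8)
The plan is to establish the lower bound by reducing estimation of the Nash equilibrium to a family of binary hypothesis-testing problems: I would construct a collection of hard Markov games that are statistically nearly indistinguishable given $N$ offline samples drawn from the prescribed $d_{\mathsf{b}}$, yet whose equilibria differ enough that any single estimator $(\widehat\mu,\widehat\nu)$ must incur an $\varepsilon$-sized duality gap on at least one of them. The starting point is the decomposition $V^{\star,\widehat\nu}(\rho)-V^{\widehat\mu,\star}(\rho)=\big[V^{\star,\widehat\nu}(\rho)-V^{\star}(\rho)\big]+\big[V^{\star}(\rho)-V^{\widehat\mu,\star}(\rho)\big]$ into the min-player's regret and the max-player's regret. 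I would design two \emph{decoupled} collections of state ``gadgets''—one hard only for the max-player, a symmetric one hard only for the min-player—and lower bound the two regrets separately, so that their coverage budgets add up to the $A+B$ (rather than $AB$) scaling. The information-theoretic engine will be a Fano/Assouad-type argument run under a product prior over independent hidden parameters (one per state), so that the Bayes risk factorizes and reduces to a per-state best-action identification problem.

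For the $(1-\gamma)^{-3}$ dependence I would use the standard self-loop gadget: each relevant state has a favorable action under which the chain remains in place with probability $p\approx\gamma$ and otherwise falls into an absorbing state, so the value is of order $(1-\gamma)^{-1}$ and perturbing the self-loop probability by $\Delta$ shifts the value by $\Theta\big(\Delta/(1-\gamma)^2\big)$. In a max-hard state I take the min-player's equilibrium policy $\nu^\star$ to be \emph{deterministic}, so that $d^{\mu,\nu^\star}(s,a,b)=d^{\mu,\nu^\star}(s)\,\mu(a\mid s)\,\nu^\star(b\mid s)$ concentrates on a single column $b$ for each max-action $a$; the max-player must identify which of its $\approx A$ actions is truly best, the answer being encoded in a hidden parameter of that state. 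A symmetric design with $\mu^\star$ deterministic handles the min-player's $\approx B$ actions. Because the learner only acquires information about action $a$ in such a state through data covering $(s,a,b_0)$, \emph{all} candidate actions must be covered, which places the relevant mass on $\Theta\big(S(A+B)\big)$ triples even though there are only $\Theta(S)$ hidden parameters. I would then take $d_{\mathsf{b}}$ (nearly) uniform over exactly these triples; matching the clipped occupancies $\min\{d^{\mu,\nu^\star},1/(S(A+B))\}$ against $d_{\mathsf{b}}$ makes the coefficient in Assumption~\ref{assumption:uniliteral} equal to the prescribed $C_{\mathsf{clipped}}^{\star}$, with the extremal value $\tfrac{2AB}{S(A+B)}$ realized because the opponent's deterministic action forces the data onto a single row/column.

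With this setup each covered triple receives $N\,d_{\mathsf{b}}(s,a,b)\asymp \frac{N}{C_{\mathsf{clipped}}^{\star}S(A+B)}$ samples in expectation, and the two hypotheses for a state differ only in a Bernoulli self-loop parameter near $p\approx\gamma$, whose per-sample KL divergence scales as $\Delta^2/(1-\gamma)$. Calibrating $\Delta\asymp\varepsilon(1-\gamma)^2$ so that a state decided wrongly costs that player a value gap of order $\varepsilon$ (and hence, after weighting by $\rho(s)\asymp 1/S$ and summing over a constant fraction of misidentified states, a total duality gap $\gtrsim\varepsilon$), the per-sample KL becomes $\asymp\varepsilon^2(1-\gamma)^3$. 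Tensorizing KL over the i.i.d.\ data and invoking the testing lower bound, the instances stay indistinguishable—so the minimax duality gap remains $\gtrsim\varepsilon$—as long as $\frac{N}{C_{\mathsf{clipped}}^{\star}S(A+B)}\cdot\varepsilon^2(1-\gamma)^3\lesssim 1$, which rearranges to the stated threshold $N<\frac{c_2 S(A+B)C_{\mathsf{clipped}}^{\star}}{(1-\gamma)^3\varepsilon^2\log(A+B)}$. The $\log(A+B)$ slack, together with the restriction $\varepsilon\le\frac{1}{c_\varepsilon(1-\gamma)\log(A+B)}$, enters through the matrix-game structure: embedding the hidden best action into the equilibrium of an $A$- (resp.\ $B$-) action matrix game limits the achievable per-gadget value separation, and a maximal-inequality over the $A+B$ actions is unavoidable, accounting for the single-$\log$ gap between this lower bound and the upper bound of Theorem~\ref{thm:main}.

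I expect the main obstacle to be the two-player equilibrium bookkeeping rather than the horizon scaling, which is routine once the self-loop gadget is in place. Concretely, I must verify that every constructed instance genuinely admits the intended $(\mu^\star,\nu^\star)$ as a Nash equilibrium (so that $V^\star$, $V^{\widehat\mu,\star}$ and $V^{\star,\widehat\nu}$ behave as designed), that an error on the hidden bits provably transfers into duality gap rather than into some auxiliary estimation error, and—most delicately—that the hardness truly \emph{decouples} across the max- and min-gadget families so the coverage adds as $A+B$ while $C_{\mathsf{clipped}}^{\star}$ is held exactly at its target. Obtaining the correct $A+B$ dependence (not $\max\{A,B\}$, nor a spurious $AB$) while forfeiting only one $\log(A+B)$ factor is the crux; I would handle it by analyzing the two gadget families independently under a product prior and recombining through the additive decomposition of the duality gap stated at the outset.
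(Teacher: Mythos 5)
Your plan is sound and, at its core, coincides with the paper's argument: a self-loop Bernoulli gadget with transition parameters $\gamma\pm\Theta\big((1-\gamma)^{2}\varepsilon\big)$ so that a $\Delta$-perturbation moves the value by $\Theta\big(\Delta/(1-\gamma)^{2}\big)$, a deterministic opponent policy so that only one column of action pairs is informative, a near-uniform $d_{\mathsf{b}}$ calibrated so the clipped coefficient equals exactly $C_{\mathsf{clipped}}^{\star}$, a product prior with an Assouad-type per-coordinate Bayes-risk reduction, and per-sample KL of order $(1-\gamma)^{3}\varepsilon^{2}$. The differences are organizational rather than substantive, and in both places the paper is simpler than what you propose. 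First, the paper does not build two decoupled max- and min-gadget families: it lower bounds only $V^{\star}(\rho)-V^{\widehat{\mu},\star}(\rho)$ and discards the other term via $V^{\star,\widehat{\nu}}\geq V^{\star}$, assuming WLOG $A\geq B$; the $A+B$ (and $S$) factors then enter entirely through the normalization $d_{\mathsf{b}}(0,a,b)=\frac{1}{C_{\mathsf{clipped}}^{\star}S(A+B)}$ forced by the clipping threshold in Assumption~\ref{assumption:uniliteral}, not through adding two coverage budgets. Second, rather than one hidden parameter per state across $\Theta(S)$ states, the paper plants $A$ independent binary bits $\theta_{a}\in\{p,q\}$ (one per max-action) in a single active state, reduces to the sufficient statistics $(X_{a},M_{a})$, and shows the posterior $\mathbb{P}(\theta_{a}=q\mymid X_{a},M_{a})\geq 1/3$ on a high-probability event; this avoids the best-action-identification formulation, under which the Bayes risk does not factorize as cleanly as with per-action bits. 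Your account of the $\log(A+B)$ loss matches the paper's mechanism (a union bound over the $A$ per-action concentration events shrinks the admissible range of $M_{a}$), and the restriction $\varepsilon\lesssim\frac{1}{(1-\gamma)\log(A+B)}$ arises exactly where you place it, in reconciling the small-sample regime with the KL-indistinguishability regime.
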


\begin{remark} 
The target we are estimating is the NE of a zero-sum MG, which is more challenging than standard statistical estimation problems in the sense that (i) NE is not unique in general and (ii) the error metric is a duality-gap. It is challenging to use standard proof frameworks like Fano's and Le Cam's methods to derive a meaningful lower bound for this problem. To overcome this challenge, we construct a family of hard Markov game instances indexed by a binary parameter $\theta\in\{0,1\}^{\max\{A,B\}}$, and then put a prior distribution over this set and compute
the posterior probability of failure to differentiate
each entry of $\theta$. These steps taken together carefully allow us to compute the desired minimax risk. 
\end{remark}

\begin{comment}
\begin{remark}This minimax lower bound is established by putting
a prior distribution over a set of hard Markov game instances indexed
by an $\max\{A,B\}$-dimensional binary parameter $\theta$ and computing
the posterior probability of being unsuccessful in differentiating
each entry of $\theta$.\end{remark}
\end{comment}

As a direct implication of Theorem~\ref{thm:lower-bound}, if the
total number of samples in the offline dataset obeys
\[
N<\frac{c_{2}S\left(A+B\right)C_{\mathsf{clipped}}^{\star}}{\left(1-\gamma\right)^{3}\varepsilon^{2}\log\left(A+B\right)},
\]
then one can construct a hard Markov game instance such that no algorithm
whatsoever can reach a duality gap below $\varepsilon$. This taken
collectively (\ref{eq:sample-complexity}) unveils, up to some logarithmic
factor, the minimax statistical limit for finding NEs based on offline
data. 

Our theory makes remarkable improvement upon prior art, which can
be seen through comparisons with the most relevant prior work \citep{cui2022offline}
(even though the focus therein is finite-horizon zero-sum MGs). On
a high level, \citet{cui2022offline} proposed an algorithm that combines
pessimistic value iteration with variance reduction (also called reference-advantage
decomposition \citep{zhang2020almost}), which provably finds an $\varepsilon$-Nash
policy pair using 
\begin{equation}
\widetilde{O}\left(\frac{C^{\star}SABH^{3}}{\varepsilon^{2}}\right)\label{eq:sample-complexity-cui}
\end{equation}
sample trajectories, provided that $\varepsilon\leq1/H$. Here, $H$
stands for the horizon length of the finite-horizon Markov game, and
$C^{\star}$ is the unilateral concentrability coefficient tailored
to the finite-horizon setting. Despite the difference between discounted
infinite-horizon and finite-horizon settings, our algorithm design
and theory achieve several improvements upon \citet{cui2022offline}. 
\begin{itemize}
\item Perhaps most importantly, our result scales linearly in the total
number of individual actions $A+B$ (as opposed to the number of joint
actions $AB$ as in \citet{cui2022offline}), which manages to alleviate
the curse of multiple agents in two-player zero-sum Markov game. 
\item Our theory accommodates the full $\varepsilon$-range $\big(0,\frac{1}{1-\gamma}\big]$,
which is much wider than the range $(0,1/H]$ covered by \citet{cui2022offline}
(if we view the effective horizon $\frac{1}{1-\gamma}$ in the infinite-horizon
case and the horizon length $H$ in the finite-horizon counterpart
as equivalence). 
\item The algorithm design herein is substantially simpler than \citet{cui2022offline}:
neither does it require sample splitting to decouple statistical dependency,
nor does it rely on reference-advantage decomposition techniques to
sharpen the horizon dependency. 
\end{itemize}
While we were finalizing the present manuscript, we became aware of
the independent work \citet{cui2022provably} proposing a different
offline algorithm --- based on incorporation of strategy-wise lower
confidence bounds --- that improved the prior art as well. When it
comes to two-player zero-sum Markov games with finite horizon and
non-stationary transition kernels, \citet[Algorithm 1]{cui2022provably}
provably yields an $\varepsilon$-Nash policy pair using 
\begin{equation}
\widetilde{O}\left(\frac{C^{\star}S\left(A+B\right)H^{4}}{\varepsilon^{2}}\right)\label{eq:new-bound-Du}
\end{equation}
sample trajectories each containing $H$ samples. This bound (\ref{eq:new-bound-Du})
is \emph{at least} a factor of $H$ above the minimax limit. It is
worth noting that \citet{cui2022provably} is able to accommodate offline
multi-agent general-sum MGs, although the algorithm proposed therein
becomes computationally intractable when going beyond two-player zero-sum
MGs.

\section{Related works } \label{sec:related-works}

\paragraph{Offline RL and pessimism principle. } 

The principle of pessimism in the face of uncertainty, namely, being
conservative in value estimation of those state-action pairs that
have been under-covered, has been adopted extensively in recent development
of offline RL. A highly incomplete list includes \citet{kumar2020conservative,kidambi2020morel,yu2020mopo,yu2021conservative,yu2021combo,yin2021near_double,rashidinejad2021bridging,jin2021pessimism,xie2021policy,liu2020provably,zhang2021corruption,chang2021mitigating,yin2021towards,uehara2021pessimistic,munos2003error,munos2007performance,yin2021near_b,zanette2021provable,yan2022efficacy,li2022settling,shi2022pessimistic,cui2022offline,zhong2022pessimistic,lu2022pessimism,li2022pessimism,wang2022gap,xu2022provably},
which unveiled the efficacy of the pessimism principle in both model-based
and model-free approaches. Among this body of prior works, the ones
that are most related to the current paper are \citet{cui2022offline,zhong2022pessimistic,cui2022provably},
both of which focused on episodic finite-horizon zero-sum Markov games
with two players. More specifically, \citet{cui2022offline} demonstrated
that a unilateral concentrability condition is necessary for learning
NEs in offline settings, and proposed a pessimistic value iteration
with reference-advantage decomposition to enable sample efficiency;
\citet{zhong2022pessimistic} proposed a \emph{Pessimistic Minimax
Value Iteration} algorithm which achieves appealing sample complexity
in the presence of linear function representation, which was recently
improved by \citet{xiong2022nearly}. The concurrent work \citet{cui2022provably}
proposed a different pessimistic algorithm that designed LCBs for
policy pairs instead of state-action pairs; for two-player zero-sum
MGs, their algorithm is capable of achieving a sample complexity proportional
to $A+B$. In the single-agent offline RL setting, \citet{rashidinejad2021bridging,yan2022efficacy,li2022settling}
studied offline RL for infinite-horizon MDPs, and \citet{jin2021pessimism,xie2021policy,shi2022pessimistic,li2022settling}
looked at the finite-horizon episodic counterpart, all of which operate
upon some single-policy concentrability assumptions. Among these works,
\citet{li2022settling} and \citet{yan2022efficacy} achieved minimax-optimal
sample complexity $\widetilde{O}(\frac{SC^{\star}}{(1-\gamma)^{3}\varepsilon^{2}})$
for discounted infinite-horizon MDPs by means of model-based and model-free
algorithms, respectively; similar results have been established for
finite-horizon MDPs as well \citep{xie2021policy,li2022settling,shi2022pessimistic,yin2021near_a,yin2021near_b}.

\paragraph{Multi-agent RL and Markov games. }

The concept of Markov games --- also under the name of stochastic
games --- dated back to \citet{shapley1953stochastic}, which has
become a central framework to model competitive multi-agent decision
making. A large strand of prior works studied how to efficiently solve
Markov games when perfect model description is available \citep{littman1994markov,littman2001friend,hu2003nash,hansen2013strategy,cen2021fast,mao2022provably,daskalakis2020independent,perolat2015approximate,wei2021last,zhao2021provably,daskalakis2022complexity,chen2021almost}.
Recent years have witnessed much activity in studying the sample efficiency
of learning Nash equilibria in zero-sum Markov games, covering multiple
different types of sampling schemes; for instance, \citet{wei2017online,xie2020learning,bai2020near,bai2020provable,liu2021sharp,jin2021v,song2021can,mao2022provably,daskalakis2022complexity,tian2021online,chen2021almost}
focused on the online explorative environments, whereas \citet{zhang2020model-based}
paid attention to the scenario that assumes sampling access to a generative
model. While the majority of these works exhibited a sample complexity
that scales at least as $\widetilde{O}(SAB)$ in order to learn an
approximate NE, the recent work \citet{bai2020near} proposed a V-learning
algorithm attaining a sample complexity that scales linearly with
$S(A+B)$, thus matching the minimax-optimal lower bound up to a factor
of $H^{2}$. When a generative model is available, \citet{li2022minimax}
further developed an algorithm that learns $\varepsilon$-Nash using
$\widetilde{O}\big(\frac{H^{4}S(A+B)}{\varepsilon^{2}}\big)$ samples,
which attains the minimax lower bound for non-stationary finite-horizon
MGs. The setting of general-sum multi-player Markov games is much
more challenging, given that learning Nash equilibria is known to
be PPAD-complete \citep{daskalakis2009complexity,daskalakis2013complexity}.
Shifting attention to more tractable solution concepts, \citet{jin2021v,daskalakis2022complexity,mao2022provably,song2021can}
proposed algorithms that provably learn (coarse) correlated equilibria
with sample complexities that scale linearly with $\max_{i}A_{i}$
(where $A_{i}$ is the number of actions of the $i$-th player), thereby
breaking the curse of multi-agents. Additionally, there have also
been several works investigating the turn-based setting where the
two players take actions in turn; see \citet{sidford2020solving,cui2021minimax,jia2019feature,jin2022complexity}.
Moreover, another two works \citet{zhang2021finite,abe2020off} studied
offline sampling oracles under uniform coverage requirements (which
are clearly more stringent than the unilateral concentrability assumption).
The interested readers are also referred to \citet{zhang2021multi,yang2020overview}
for an overview of recent development.

\paragraph{Model-based RL. }

The method proposed in the current paper falls under the category
of model-based algorithms, which decouple model estimation and policy
learning (planning). The model-based approach has been extensively
studied in the single-agent setting including the online exploration
setting \citep{azar2017minimax}, the case with a generative model
\citep{agarwal2020model,azar2013minimax,li2020breaking,wang2021sample,jin2021towards},
the offline RL setting \citep{li2022settling,xie2021policy}, and turn-based
Markov games \citep{cui2021minimax}. Encouragingly, the model-based
approach is capable of attaining minimax-optimal sample complexities
in a variety of settings (e.g., \citet{azar2017minimax,li2022settling,agarwal2020model}),
sometimes even without incurring any burn-in cost \citep{li2020breaking,li2022settling,cui2021minimax}.
The method proposed in \citet{cui2022offline} also exhibited the flavor
of a model-based algorithm, although an additional variance reduction
scheme is incorporated in order to optimize the horizon dependency.

\section{Discussion \label{sec:Discussion}}

In the present paper, we have proposed a model-based offline algorithm,
which leverages the principle of pessimism in solving two-player zero-sum
Markov games on the basis of past data. In order to find an $\varepsilon$-approximate
Nash equilibrium of the Markov game, our algorithm requires no more
than $\widetilde{O}\big(\frac{S(A+B)C^{\star}}{(1-\gamma)^{3}\varepsilon^{2}}\big)$
samples, and this sample complexity bound is provably minimax optimal
for the entire range of target accuracy level $\varepsilon\in\big(0,\frac{1}{1-\gamma}\big]$.
Our theory has improved upon prior sample complexity bounds in \citet{cui2021minimax}
in terms of the dependency on the size of the action space. Another
appealing feature is the simplicity of our algorithm, which does not
require complicated variance reduction schemes and is hence easier
to implement and interpret. Moving forward, there are a couple of
interesting directions that are worthy of future investigation. For
instance, one natural extension is to explore whether the current
algorithmic idea and analysis extend to multi-agent general-sum Markov
games, with the goal of learning other solutions concepts of equilibria
like coarse correlated equilibria (given that finding Nash equilibria
in general-sum games is PPAD-complete). Another topic of interest
is to design model-free algorithms for offline NE learning in zero-sum
or general-sum Markov games. Furthermore, the current paper focuses
attention on tabular Markov games, and it would be of great interest
to design sample-efficient offline multi-agent algorithms in the presence
of function approximation.

\section*{Acknowledgements}

Y.~Yan is supported in part by the Charlotte Elizabeth Procter Honorific
Fellowship from Princeton University and the Norbert Wiener Postdoctoral Fellowship from MIT. Y.~Chen is supported in part by the Alfred P.~Sloan
Research Fellowship, the Google Research Scholar Award, the AFOSR
grant FA9550-22-1-0198, the ONR grant N00014-22-1-2354, and the NSF
grants CCF-2221009, CCF-1907661, IIS-2218713, DMS-2014279, and IIS-2218773.
J.~Fan is supported in part by the  NSF grants DMS-2052926, DMS-2053832, and DMS-2210833 and ONR grant N00014-22-1-2340. Y.~Chen thanks Shicong Cen for helpful
discussions about Markov games.

\appendix

\section{Additional notation}

Let us collect a set of additional notation that will be used in the
analysis. First of all, for any $(s,a,b)\in\mathcal{S}\times\mathcal{A}\times\mathcal{B}$,
any vector $V\in\mathbb{R}^{S}$ and any probability transition kernel
$P:\mathcal{S}\times\mathcal{A}\times\mathcal{B}\rightarrow\Delta(\mathcal{S})$,
we define
\begin{equation}
\mathsf{Var}_{P_{s,a,b}}(V)=P_{s,a,b}\big(V\circ V\big)-\big(P_{s,a,b}V\big)^{2},\label{eq:var-Psab-V-defn}
\end{equation}
where $P_{s,a,b}$ abbreviates $P(\cdot\mymid s,a,b)$ as usual. 
When the max-player's policy $\mu$ is fixed, the Markov game reduces
to a (single-agent) MDP for the min-player. For any MDP, it is known
that there exists at least one policy that simultaneously maximizes
the value function (resp.~Q-function) for all states (resp.~state-action
pairs) \citep{bertsekas2017dynamic}. In light of this, when the policy
$\mu$ of the max-player is frozen, we denote by $\nu_{\mathsf{br}}(\mu)$
the optimal policy of the min-player, which shall often be referred
to as the best response of the min-player when the max-player adopts
policy $\mu$. Similarly, we can define the best response of the max-player
when the min-player adopts policy $\nu$, which we denoted by $\mu_{\mathsf{br}}(\nu)$.
These allow one to define
\[
V^{\mu,\star}\left(s\right)\coloneqq V^{\mu,\nu_{\mathsf{br}}(\mu)}\left(s\right)=\min_{\nu}V^{\mu,\nu}\left(s\right),\qquad V^{\star,\nu}\left(s\right)\coloneqq V^{\mu_{\mathsf{br}}(\nu),\nu}\left(s\right)=\max_{\mu}V^{\mu,\nu}\left(s\right)
\]
for all $s\in\mathcal{S}$, and 
\[
Q^{\mu,\star}\left(s,a,b\right)\coloneqq Q^{\mu,\nu_{\mathsf{br}}(\mu)}\left(s,a,b\right)=\min_{\nu}Q^{\mu,\nu}\left(s,a,b\right),\quad Q^{\star,\nu}\left(s,a,b\right)\coloneqq Q^{\mu_{\mathsf{br}}(\nu),\nu}\left(s,a,b\right)=\max_{\mu}Q^{\mu,\nu}\left(s,a,b\right)
\]
for all $(s,a,b)\in\mathcal{S}\times\mathcal{A}\times\mathcal{B}$.
Note that the definitions of $V^{\mu,\star}$ and $V^{\star,\nu}$
here are consistent with the ones in Section \ref{sec:Problem-formulation}.

%We
%use the standard notation $f(n)\lesssim g(n)$ or $f(n)=O(g(n))$
%to denote $\vert f(n)\vert\leq Cg(n)$ for some universal constant
%$C>0$ when $n$ is sufficiently large; we let $f(n)\gtrsim g(n)$
%denote $f(n)\geq C\vert g(n)\vert$ for some constant $C>0$ when
%$n$ is large enough; and we employ $f(n)\asymp g(n)$ to indicate
%that $f(n)\gtrsim g(n)$ and $f(n)\lesssim g(n)$ hold simultaneously.

\section{Proof of Theorem \ref{thm:main}\label{sec:Proof-of-Theorem-main-UB}}

Towards proving Theorem \ref{thm:main}, we first state a slightly
stronger result as follows.

\begin{theorem}\label{thm:main-complete}Consider any initial state
distribution $\rho\in\Delta(\mathcal{S})$, and suppose that Assumption
\ref{assumption:uniliteral} holds. Assume that $1/2\leq\gamma<1$.
Then with probability exceeding $1-\delta$, the policy pair $(\widehat{\mu},\widehat{\nu})$
returned by Algorithm~\ref{alg:VI} satisfies\begin{subequations}\label{eq:V-munustar-gap-bound-thm}
\begin{align}
V^{\star}\left(\rho\right)-V^{\widehat{\mu},\star}\left(\rho\right) & \leq c_{0}\sqrt{\frac{C_{\mathsf{clipped}}^{\star}S\left(A+B\right)}{\left(1-\gamma\right)^{3}N}\log\frac{N}{\delta}}+c_{0}\frac{C_{\mathsf{clipped}}^{\star}S\left(A+B\right)}{\left(1-\gamma\right)^{2}N}\log\frac{N}{\delta},\label{eq:V-mustar-gap-bound-thm}\\
V^{\star,\widehat{\nu}}\left(\rho\right)-V^{\star}\left(\rho\right) & \leq c_{0}\sqrt{\frac{C_{\mathsf{clipped}}^{\star}S\left(A+B\right)}{\left(1-\gamma\right)^{3}N}\log\frac{N}{\delta}}+c_{0}\frac{C_{\mathsf{clipped}}^{\star}S\left(A+B\right)}{\left(1-\gamma\right)^{2}N}\log\frac{N}{\delta}\label{eq:V-nustar-gap-bound-thm}
\end{align}
\end{subequations}for some sufficiently large constant $c_{0}>0$.
As an immediate consequence, the duality gap of $(\widehat{\mu},\widehat{\nu})$
obeys, with probability at least $1-\delta$, that
\begin{equation}
V^{\star,\widehat{\nu}}\left(\rho\right)-V^{\widehat{\mu},\star}\left(\rho\right)\leq2c_{0}\sqrt{\frac{C_{\mathsf{clipped}}^{\star}S\left(A+B\right)}{\left(1-\gamma\right)^{3}N}\log\frac{N}{\delta}}+2c_{0}\frac{C_{\mathsf{clipped}}^{\star}S\left(A+B\right)}{\left(1-\gamma\right)^{2}N}\log\frac{N}{\delta}.\label{eq:V-duality-gap-theorem}
\end{equation}
\end{theorem}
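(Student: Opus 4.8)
The plan is to prove the two one-sided estimates \eqref{eq:V-mustar-gap-bound-thm} and \eqref{eq:V-nustar-gap-bound-thm} separately. Since the two families of iterates $\{Q^-_{\mathsf{pe},t}\}$ and $\{Q^+_{\mathsf{pe},t}\}$ are produced by completely symmetric rules --- interchanging the two players and negating the reward turns one pessimistic operator into the other --- it suffices to prove the max-player bound \eqref{eq:V-mustar-gap-bound-thm}; the min-player bound \eqref{eq:V-nustar-gap-bound-thm} then follows by an identical argument, and \eqref{eq:V-duality-gap-theorem} results from summing the two. A preliminary reduction removes the finite-iteration error: the $\gamma$-contraction property \eqref{eq:gamma-contraction-That} (see Lemma~\ref{lemma:gamma-contraction}) implies that after $T=\lceil\log(N/(1-\gamma))/\log(1/\gamma)\rceil$ steps the iterate $Q^-_{\mathsf{pe},T}$ lies within $\gamma^T/(1-\gamma)\le 1/N$ of the fixed point $Q^-_{\mathsf{pe}}$ of $\widehat{\mathcal T}^-_{\mathsf{pe}}$. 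I would therefore run the entire analysis at this fixed point, writing $V^-_{\mathsf{pe}}$ for its induced value and $(\mu^-,\nu^-)$ for the per-state matrix-game Nash strategies, folding the resulting $O(1/N)$ discrepancy into the error terms.

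The backbone is a pair of one-sided comparisons anchored at a high-probability ``good event'' (with $\iota:=\log\frac{N}{(1-\gamma)\delta}$) on which the penalty \eqref{eq:bonus-term-defn} is a valid lower confidence bound, i.e. $\gamma|(\widehat P_{s,a,b}-P_{s,a,b})V|\le\beta(s,a,b;V)$ simultaneously over all $(s,a,b)$ and all relevant $V\in[0,\tfrac1{1-\gamma}]^S$. I would obtain this from Bernstein's (Freedman's) inequality together with the empirical-variance form of $\beta$, handling the statistical dependence of the data-driven $V^-_{\mathsf{pe}}$ on $\widehat P$ by a union bound over an $\varepsilon$-net of value functions (which only inflates $\iota$ by a constant factor). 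The first comparison is \emph{pessimism}: because $\widehat{\mathcal T}^-_{\mathsf{pe}}$ subtracts $\beta$, on the good event $Q^-_{\mathsf{pe}}(s,a,b)\le r(s,a,b)+\gamma P_{s,a,b}V^-_{\mathsf{pe}}$, and feeding this through the min-player best response $\nu_{\mathsf{br}}(\mu^-)$ together with single-agent Bellman monotonicity yields $V^-_{\mathsf{pe}}\le V^{\mu^-,\star}=V^{\widehat\mu,\star}$ pointwise. Consequently $V^\star(\rho)-V^{\widehat\mu,\star}(\rho)\le V^\star(\rho)-V^-_{\mathsf{pe}}(\rho)$, and it remains to upper bound the latter.

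For the second comparison I would exploit that $\mu^\star$ (the Nash max-strategy appearing in Assumption~\ref{assumption:uniliteral}) maximizes $Q^\star(s,\cdot,\cdot)$ while $\mu^-$ maximizes the pessimistic matrix game, so that evaluating the pessimistic game at the fixed strategy $\mu^\star$ gives $V^\star(s)-V^-_{\mathsf{pe}}(s)\le \mathbb E_{a\sim\mu^\star(s),\,b\sim\nu^\dagger(s)}[Q^\star(s,a,b)-Q^-_{\mathsf{pe}}(s,a,b)]$, where $\nu^\dagger(s)$ is the min-player's one-step (hence \emph{deterministic}) best response in the pessimistic game. On the good event $Q^\star-Q^-_{\mathsf{pe}}\lesssim \beta+\gamma P_{s,a,b}(V^\star-V^-_{\mathsf{pe}})$ up to the correction $\gamma(\widehat P-P)(V^\star-V^-_{\mathsf{pe}})$, and unrolling this recursion along the chain induced by the stationary pair $(\mu^\star,\nu^\dagger)$ produces $V^\star(\rho)-V^-_{\mathsf{pe}}(\rho)\lesssim \frac1{1-\gamma}\sum_{s,a,b}d^{\mu^\star,\nu^\dagger}(s,a,b)\,\beta(s,a,b;V^-_{\mathsf{pe}})+(\text{h.o.t.})$. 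Because $(\mu^\star,\nu^\dagger)$ is a \emph{unilateral} deviation from the Nash pair, its occupancy is controlled by Assumption~\ref{assumption:uniliteral}; combining $N(s,a,b)\gtrsim N d_{\mathsf{b}}(s,a,b)$ (count concentration) with the clipped density-ratio bound, a Cauchy--Schwarz split of $\sum d^{\mu^\star,\nu^\dagger}\sqrt{\iota\,\mathsf{Var}_{\widehat P_{s,a,b}}(V^-_{\mathsf{pe}})/(Nd_{\mathsf{b}})}$ into the factors $\sqrt{\sum d^{\mu^\star,\nu^\dagger}/d_{\mathsf{b}}}$ and $\sqrt{\sum d^{\mu^\star,\nu^\dagger}\mathsf{Var}}$, and invoking the law of total variance $\big(\sum d^{\mu^\star,\nu^\dagger}\mathsf{Var}_P(V)\lesssim\frac1{1-\gamma}\big)$ to supply the sharp $(1-\gamma)^{-3}$ dependence, delivers exactly \eqref{eq:V-mustar-gap-bound-thm}.

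The crux, and the step I expect to be hardest, is extracting the $S(A+B)$ scaling rather than $SAB$, i.e. breaking the curse of multiple agents. The key structural observation is that the best response $\nu^\dagger$ against the \emph{fixed} Nash strategy $\mu^\star$ is deterministic, so $d^{\mu^\star,\nu^\dagger}$ is supported on at most $SA$ triples $(s,a,b^\dagger(s))$; splitting the ratio sum according to whether $d^{\mu^\star,\nu^\dagger}(s,a,b)\lessgtr\frac1{S(A+B)}$ and using the clipping then bounds $\sum d^{\mu^\star,\nu^\dagger}/d_{\mathsf{b}}$ by $C^\star_{\mathsf{clipped}}\big(SA+S(A+B)\big)\asymp C^\star_{\mathsf{clipped}}S(A+B)$, with the symmetric min-player bound contributing the $SB$ direction. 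Two further technical points require care: making the lower-confidence-bound validity uniform over the data-dependent $V^-_{\mathsf{pe}}$ (the net/decoupling argument above), and replacing the empirical variance $\mathsf{Var}_{\widehat P_{s,a,b}}(V^-_{\mathsf{pe}})$ by $\mathsf{Var}_{P_{s,a,b}}(V^\star)$ so that the total-variance lemma applies --- both of which feed only into the higher-order $\frac{C^\star_{\mathsf{clipped}}S(A+B)}{(1-\gamma)^2N}\iota$ term.
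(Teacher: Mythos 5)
Your overall architecture coincides with the paper's: reduce to the fixed point of $\widehat{\mathcal{T}}_{\mathsf{pe}}^{-}$ via $\gamma$-contraction, establish pessimism ($V_{\mathsf{pe}}^{-}\leq V^{\widehat{\mu},\star}$, as in Lemma~\ref{lemma:Q-monononicity}), compare $V^{\star}$ with $V_{\mathsf{pe}}^{-}$ through the deterministic one-step best response $\nu_{0}$ to $\mu^{\star}$ in the pessimistic matrix game, unroll the resulting recursion to obtain $\frac{2}{1-\gamma}\big(d^{\mu^{\star},\nu_{0}}\big)^{\top}\beta^{\mu^{\star},\nu_{0}}$, and then exploit the fact that $d^{\mu^{\star},\nu_{0}}$ is supported on at most $SA$ triples together with the clipping, Cauchy--Schwarz, and a total-variance bound. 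This is exactly the route taken via Lemmas~\ref{lemma:V_pe_lower_bound} and~\ref{lemma:d_b_upper_bound}, including the key observation that breaks the $AB$ scaling.

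The genuine gap is your treatment of the statistical dependence between $V_{\mathsf{pe}}^{-}$ and $\widehat{P}$. You propose a union bound over an $\varepsilon$-net of value functions and assert that this ``only inflates $\iota$ by a constant factor.'' A net of $\big[0,\frac{1}{1-\gamma}\big]^{S}$ at resolution $1/\mathrm{poly}(N)$ has log-cardinality of order $S\log\frac{N}{1-\gamma}$, so for the Bernstein penalty to remain a valid confidence bound uniformly over such a net it would have to carry $S\log\frac{N}{(1-\gamma)\delta}$ in place of $\log\frac{N}{(1-\gamma)\delta}$. This propagates into the leading term, yielding a bound of order $\sqrt{C_{\mathsf{clipped}}^{\star}S^{2}(A+B)\log\frac{N}{(1-\gamma)\delta}/((1-\gamma)^{3}N)}$ --- a loss of $\sqrt{S}$ that fails to match \eqref{eq:V-mustar-gap-bound-thm} and the minimax lower bound. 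This is precisely why the paper proves Lemma~\ref{lemma:loo} via a leave-one-out construction: it builds auxiliary games $\mathcal{MG}^{s,u}$ whose transitions out of state $s$ are decoupled from the data, shows that $V_{\mathsf{pe}}^{-\star}$ coincides with the fixed-point value of such an auxiliary game for a suitable scalar $u$, and then covers only the one-dimensional parameter $u$, at constant cost. You need this device (or sample splitting, which the paper deliberately avoids) to close the argument. A secondary issue: you plan to replace $\mathsf{Var}_{\widehat{P}_{s,a,b}}(V_{\mathsf{pe}}^{-})$ by $\mathsf{Var}_{P_{s,a,b}}(V^{\star})$ before invoking the total-variance identity, but controlling that substitution requires a bound on $\Vert V_{\mathsf{pe}}^{-}-V^{\star}\Vert_{\infty}$, which is essentially what you are trying to prove. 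The paper instead keeps $\mathsf{Var}_{P_{s,a,b}}(V_{\mathsf{pe}}^{-})$, uses the approximate Bellman inequality of Lemma~\ref{lemma:V_pe_lower_bound} to derive a self-bounding estimate for $\big(d^{\mu^{\star},\nu_{0}}\big)^{\top}\beta^{\mu^{\star},\nu_{0}}$, and resolves it by AM--GM.
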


As can be straightforwardly verified, Theorem \ref{thm:main} is a
direct consequence of Theorem \ref{thm:main-complete} (by taking
the right-hand side of \eqref{eq:V-duality-gap-theorem} to be no
larger than $\varepsilon$).

The remainder of this section is thus
dedicated to establishing Theorem \ref{thm:main-complete}. 
Before proceeding, let us now take a moment to  provide a brief roadmap of the proof.
\begin{enumerate}
	\item We will first show in Appendix \ref{subsec:properties-operators} that the pessimistic Bellman operators $\widehat{\mathcal{T}}_{\mathsf{pe}}^{-}$
	and $\widehat{\mathcal{T}}_{\mathsf{pe}}^{+}$ introduced in (\ref{eq:bellman-op-pe-defn}) are both monotone, $\gamma$-contractive, and admit unique fixed points $Q_{\mathsf{pe},t}^{-\star}$  and $Q_{\mathsf{pe},t}^{-\star}$ respectively. These properties reveal that the pessimistic value iterations $\{Q_{\mathsf{pe},t}^{-}\}_{1\leq t\leq T}$ (resp.~$\{Q_{\mathsf{pe},t}^{+}\}_{1\leq t\leq T}$) in Algorithm \ref{alg:VI} converge to $Q_{\mathsf{pe},t}^{-\star}$  (resp.~$Q_{\mathsf{pe},t}^{-\star}$) at a geometric rate, and therefore it suffices to analyze the fixed points $Q_{\mathsf{pe},t}^{-\star}$  and $Q_{\mathsf{pe},t}^{+\star}$.
	\item Next we show Bernstein-style concentration bounds for random quantities like $(\widehat{P}_{s,a,b}-P_{s,a,b})V_{\mathsf{pe},t}^{-\star}$ and $(\widehat{P}_{s,a,b}-P_{s,a,b})V_{\mathsf{pe},t}^{+\star}$ in Appendix \ref{subsec:bernstein-concentration}, where $V_{\mathsf{pe},t}^{-\star}$  and $V_{\mathsf{pe},t}^{+\star}$ are the value functions associated with $Q_{\mathsf{pe},t}^{-\star}$  and $Q_{\mathsf{pe},t}^{+\star}$. Due to the complicated statistical dependency between $\widehat{P}_{s,a,b}$ and $V_{\mathsf{pe},t}^{-\star}$, we use a leave-one-out argument to establish this concentration result in Lemma \ref{lemma:loo}.
	\item Finally, based on the aforementioned results, we derive error bounds for $V^{\star}(\rho)-V^{\widehat{\mu},\star}(\rho)$
	and $V^{\star,\widehat{\nu}}(\rho)-V^{\star}(\rho)$ in Appendix \ref{subsec:thm-1-upper-bound}. Our analysis makes use of a ``self-bounding'' trick, which allows one to derive sharp estimation error bounds which turns out to be minimax-optimal.
\end{enumerate}

\subsection{Preliminary facts \label{sec:proof-outline-prelim} }

Before continuing, we collect several preliminary facts that will
be useful throughout. 
\begin{enumerate}
\item For any $Q_{1},Q_{2}:\mathcal{S}\times\mathcal{A}\times\mathcal{B}\to\mathbb{R}$,
we have
\begin{equation}
\left\Vert V_{1}-V_{2}\right\Vert _{\infty}\leq\left\Vert Q_{1}-Q_{2}\right\Vert _{\infty},\label{eq:Q-V-bound}
\end{equation}
where $V_{1}$ (resp.~$V_{2}$) denotes the value function associated
with $Q_{1}$ (resp.~$Q_{2}$); see (\ref{eq:V-defn-Bellman}) for
the precise definition. 
\item For any $V_{1},V_{2}:\mathcal{S}\to\big[0,\frac{1}{1-\gamma}\big]$,
any probability transition kernel $P:\mathcal{S}\times\mathcal{A}\times\mathcal{B}\to\Delta(\mathcal{S})$
and any $(s,a,b)\in\mathcal{S}\times\mathcal{A}\times\mathcal{B}$,
we have
\begin{equation}
\left|\mathsf{Var}_{P_{s,a,b}}\left(V_{1}\right)-\mathsf{Var}_{P_{s,a,b}}\left(V_{2}\right)\right|\leq\frac{4}{1-\gamma}\left\Vert V_{1}-V_{2}\right\Vert _{\infty},\label{eq:variance-V-bound}
\end{equation}
where $\mathsf{Var}_{P,s,a,b}(V)$ is defined in \eqref{eq:var-Psab-V-defn}. 
\item As a consequence, we also know that for any$(s,a,b)\in\mathcal{S}\times\mathcal{A}\times\mathcal{B}$
and any $V_{1},V_{2}:\mathcal{S}\to\big[0,\frac{1}{1-\gamma}\big]$,
the corresponding penalty terms (cf.~\eqref{eq:bonus-term-defn})
obey
\begin{equation}
\left|\beta\left(s,a,b;V_{1}\right)-\beta\left(s,a,b;V_{2}\right)\right|\leq2\left\Vert V_{1}-V_{2}\right\Vert _{\infty}.\label{eq:b-V-bound}
\end{equation}
\end{enumerate}
The proof of the preceding results can be found in Appendix \ref{sec:proof-proof-outline}.

\subsection{Step 1: key properties of pessimistic Bellman operators \label{subsec:properties-operators}}

Recall the definition of the pessimistic Bellman operators $\widehat{\mathcal{T}}_{\mathsf{pe}}^{-}$
and $\widehat{\mathcal{T}}_{\mathsf{pe}}^{+}$ introduced in (\ref{eq:bellman-op-pe-defn}).
The following lemma gathers a couple of key properties of these two
operators.

\begin{lemma}\label{lemma:gamma-contraction}The following properties
hold true:
\begin{itemize}
\item (Monotonicity) For any $Q_{1}\geq Q_{2}$ we have $\widehat{\mathcal{T}}_{\mathsf{pe}}^{-}(Q_{1})\geq\widehat{\mathcal{T}}_{\mathsf{pe}}^{-}(Q_{2})$
and $\widehat{\mathcal{T}}_{\mathsf{pe}}^{+}(Q_{1})\geq\widehat{\mathcal{T}}_{\mathsf{pe}}^{+}(Q_{2})$;
\item (Contraction) Both operators are $\gamma$-contractive in the $\ell_{\infty}$
sense, i.e., 
\[
\big\|\widehat{\mathcal{T}}_{\mathsf{pe}}^{-}\left(Q_{1}\right)-\widehat{\mathcal{T}}_{\mathsf{pe}}^{-}\left(Q_{2}\right)\big\|_{\infty}\leq\gamma\big\| Q_{1}-Q_{2}\big\|_{\infty},\qquad\big\|\widehat{\mathcal{T}}_{\mathsf{pe}}^{+}\left(Q_{1}\right)-\widehat{\mathcal{T}}_{\mathsf{pe}}^{+}\left(Q_{2}\right)\big\|_{\infty}\leq\gamma\left\Vert Q_{1}-Q_{2}\right\Vert _{\infty}
\]
for any $Q_{1}$ and $Q_{2}$;
\item (Uniqueness of fixed points) $\widehat{\mathcal{T}}_{\mathsf{pe}}^{-}$
(resp.~$\widehat{\mathcal{T}}_{\mathsf{pe}}^{+}$) has a unique fixed
point $Q_{\mathsf{pe}}^{-\star}$ (resp.~$Q_{\mathsf{pe}}^{+\star}$),
which also satisfies $0\leq Q_{\mathsf{pe}}^{\star-}(s,a,b)\leq\frac{1}{1-\gamma}$
(resp.~$0\leq Q_{\mathsf{pe}}^{+\star}(s,a,b)\leq\frac{1}{1-\gamma}$)
for any $(s,a,b)\in\mathcal{S}\times\mathcal{A}\times\mathcal{B}$.
\end{itemize}
\end{lemma}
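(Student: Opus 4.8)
The plan is to derive all three properties from two structural features of the penalized operators, passing between $Q$- and $V$-level quantities via the preliminary facts (\ref{eq:Q-V-bound})--(\ref{eq:b-V-bound}). The route I would take for the contraction is through Blackwell-type sufficient conditions: once I have (i) monotonicity and (ii) a $\gamma$-discounting property $\widehat{\mathcal{T}}_{\mathsf{pe}}^{-}(Q+c\mathbf{1})\le\widehat{\mathcal{T}}_{\mathsf{pe}}^{-}(Q)+\gamma c\,\mathbf{1}$ for every constant $c\ge0$ (together with the matching lower inequality $\widehat{\mathcal{T}}_{\mathsf{pe}}^{-}(Q-c\mathbf{1})\ge\widehat{\mathcal{T}}_{\mathsf{pe}}^{-}(Q)-\gamma c\,\mathbf{1}$), the $\gamma$-contraction in $\|\cdot\|_\infty$ follows by the standard sandwiching argument: take $c=\|Q_1-Q_2\|_\infty$, use $Q_2-c\mathbf{1}\le Q_1\le Q_2+c\mathbf{1}$, and apply monotonicity and discounting to both ends. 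The same scheme transfers verbatim to $\widehat{\mathcal{T}}_{\mathsf{pe}}^{+}$ after interchanging the roles of the clamps $\max\{\cdot,0\}$ and $\min\{\cdot,\frac{1}{1-\gamma}\}$.

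The discounting property is the easy half and rests on a single observation: the penalty $\beta(s,a,b;V)$ depends on $V$ only through $\mathsf{Var}_{\widehat{P}_{s,a,b}}(V)$, which is invariant under shifting $V$ by a constant, i.e.\ $\mathsf{Var}_{\widehat{P}_{s,a,b}}(V+c)=\mathsf{Var}_{\widehat{P}_{s,a,b}}(V)$. Since the minimax value map (\ref{eq:V-defn-Bellman}) satisfies $V(Q+c\mathbf{1})=V(Q)+c$, this gives $\beta(s,a,b;V+c)=\beta(s,a,b;V)$, whence $\widehat{r}+\gamma\widehat{P}_{s,a,b}(V+c)-\beta(\cdot;V+c)=[\widehat{r}+\gamma\widehat{P}_{s,a,b}V-\beta(\cdot;V)]+\gamma c$. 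The $1$-Lipschitz and order-preserving clamp $\max\{\cdot,0\}$ (which obeys $\max\{x\pm\gamma c,0\}\lessgtr\max\{x,0\}\pm\gamma c$ for $c\ge0$) then delivers both discounting inequalities.

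Monotonicity is the crux. Fix $Q_1\ge Q_2$, so that $V_1\ge V_2$ by monotonicity of the minimax value; since $\max\{\cdot,0\}$ is nondecreasing, it suffices to show $\gamma\widehat{P}_{s,a,b}(V_1-V_2)\ge\beta(s,a,b;V_1)-\beta(s,a,b;V_2)$. Writing $\delta:=V_1-V_2\ge0$, I would bound the penalty gap by the standard-deviation seminorm, namely $|\beta(s,a,b;V_1)-\beta(s,a,b;V_2)|\le\sqrt{\tfrac{C_{\mathsf{b}}\log(N/((1-\gamma)\delta))}{N(s,a,b)}\,\mathsf{Var}_{\widehat{P}_{s,a,b}}(\delta)}$, using $1$-Lipschitzness of $\min/\max$ together with the reverse triangle inequality for the seminorm $V\mapsto\sqrt{\mathsf{Var}_{\widehat{P}_{s,a,b}}(V)}$, and then invoke the elementary estimate $\mathsf{Var}_{\widehat{P}_{s,a,b}}(\delta)\le\|\delta\|_\infty\,\widehat{P}_{s,a,b}\delta$ (valid for $\delta\ge0$). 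This reduces the claim to comparing $\gamma\,\widehat{P}_{s,a,b}\delta$ against $\sqrt{\tfrac{C_{\mathsf{b}}\log(\cdots)}{N(s,a,b)}\,\|\delta\|_\infty\,\widehat{P}_{s,a,b}\delta}$, and it is exactly here that the Bernstein floor $\tfrac{2C_{\mathsf{b}}\log(\cdots)}{(1-\gamma)N(s,a,b)}$ in (\ref{eq:bonus-term-defn}) does its job: when the increment $\widehat{P}_{s,a,b}\delta$ is large enough, the square-root term is dominated by $\gamma\,\widehat{P}_{s,a,b}\delta$ directly; when it is too small, the floor pins both penalties at the same value so the gap vanishes. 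Making this dichotomy airtight --- lining up the constants and handling the truncations at $0$ and $\frac{1}{1-\gamma}$ inside $\beta$, using $\|\delta\|_\infty\le\frac{1}{1-\gamma}$ and $\gamma\ge\frac12$ --- is the delicate point and the main obstacle, since (unlike the unpenalized operator) the empirical variance is not monotone in $V$, so monotonicity can hold only because the floor is calibrated to neutralize the penalty's fluctuation precisely where the value increment is small.

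With the $\gamma$-contraction established, the remaining claims are routine. Uniqueness of the fixed point follows from the Banach fixed-point theorem on the complete space $(\mathbb{R}^{SAB},\|\cdot\|_\infty)$. For the range $0\le Q_{\mathsf{pe}}^{-\star}\le\frac{1}{1-\gamma}$, I would verify that the box $[0,\frac{1}{1-\gamma}]^{SAB}$ is invariant under $\widehat{\mathcal{T}}_{\mathsf{pe}}^{-}$: the outer $\max\{\cdot,0\}$ gives nonnegativity, while $\widehat{r}\le1$, $\gamma\widehat{P}_{s,a,b}V\le\frac{\gamma}{1-\gamma}$ (as $V\le\frac{1}{1-\gamma}$), and $\beta\ge0$ yield the upper bound $1+\frac{\gamma}{1-\gamma}=\frac{1}{1-\gamma}$. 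The fixed point, being the limit of iterates initialized inside this closed box, must therefore lie in it; the identical argument applies to $\widehat{\mathcal{T}}_{\mathsf{pe}}^{+}$ with the clamp $\min\{\cdot,\frac{1}{1-\gamma}\}$ supplying the upper bound and nonnegativity of $\widehat{r}+\gamma\widehat{P}_{s,a,b}V+\beta$ supplying the lower bound.
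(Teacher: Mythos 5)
Your overall architecture --- monotonicity first, contraction deduced from monotonicity plus the constant-shift identity (using that $\mathsf{Var}_{\widehat{P}_{s,a,b}}$ and hence $\beta$ are invariant under adding a constant to $V$), and then Banach's theorem on the invariant box $[0,\frac{1}{1-\gamma}]^{SAB}$ --- coincides with the paper's, and those parts are sound. The genuine gap is in the monotonicity step, precisely where you flag the difficulty: your dichotomy does not cover all cases. Writing $\Delta\coloneqq V_{1}-V_{2}\geq0$ and $\iota\coloneqq\log\frac{N}{(1-\gamma)\delta}$, you reduce the problem to comparing $\sqrt{\frac{C_{\mathsf{b}}\iota}{N(s,a,b)}\|\Delta\|_{\infty}\widehat{P}_{s,a,b}\Delta}$ against $\gamma\widehat{P}_{s,a,b}\Delta$, which succeeds only when $\widehat{P}_{s,a,b}\Delta\geq\frac{C_{\mathsf{b}}\iota\|\Delta\|_{\infty}}{\gamma^{2}N(s,a,b)}$; your fallback is that otherwise ``the floor pins both penalties.'' But the floor in \eqref{eq:bonus-term-defn} activates when $\mathsf{Var}_{\widehat{P}_{s,a,b}}(V_{i})$ is small, which is logically unrelated to $\widehat{P}_{s,a,b}\Delta$ being small. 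For instance, with $S=3$, $\widehat{P}_{s,a,b}=(\frac{1-\epsilon}{2},\frac{1-\epsilon}{2},\epsilon)$, $V_{2}=(0,\frac{1}{1-\gamma},0)$, $\Delta=(0,0,\frac{1}{2(1-\gamma)})$ and $\epsilon\asymp\frac{C_{\mathsf{b}}\iota}{100\,N(s,a,b)}$, both variances are $\asymp\frac{1}{4(1-\gamma)^{2}}$ so neither floor is active, while $\widehat{P}_{s,a,b}\Delta=\epsilon\|\Delta\|_{\infty}$ sits an order of magnitude below your threshold and $\sqrt{\mathsf{Var}_{\widehat{P}_{s,a,b}}(\Delta)}\asymp\sqrt{\epsilon}\,\|\Delta\|_{\infty}\gg\widehat{P}_{s,a,b}\Delta$. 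Neither branch applies, and the reverse triangle inequality for the standard-deviation seminorm is irreparably lossy in this regime, so the argument as proposed does not close.

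The desired bound $|\beta(s,a,b;V_{1})-\beta(s,a,b;V_{2})|\leq\gamma\widehat{P}_{s,a,b}\Delta$ is nonetheless true, but the correct case split is on the size of the \emph{variance}, not of the increment. Viewing the penalty as a scalar clamp composed with $v\mapsto\sqrt{\frac{C_{\mathsf{b}}\iota}{N(s,a,b)}\,v}$ evaluated at $v=\mathsf{Var}_{\widehat{P}_{s,a,b}}(V)$, the floor guarantees that this composition is $\frac{1-\gamma}{4}$-Lipschitz in $v$ (its derivative vanishes unless $v\geq\frac{4C_{\mathsf{b}}\iota}{(1-\gamma)^{2}N(s,a,b)}$, and is at most $\frac{1-\gamma}{4}$ there); combining this with the one-sided decomposition
\[
\mathsf{Var}_{\widehat{P}_{s,a,b}}(V_{1})-\mathsf{Var}_{\widehat{P}_{s,a,b}}(V_{2})=\widehat{P}_{s,a,b}\big[(V_{1}+V_{2})\circ\Delta\big]-\big(\widehat{P}_{s,a,b}\Delta\big)\,\widehat{P}_{s,a,b}(V_{1}+V_{2}),
\]
whose two terms each lie in $\big[0,\frac{2}{1-\gamma}\widehat{P}_{s,a,b}\Delta\big]$ when $\Delta\geq0$, yields $|\beta(V_{1})-\beta(V_{2})|\leq\frac{1-\gamma}{4}\cdot\frac{2}{1-\gamma}\widehat{P}_{s,a,b}\Delta=\frac{1}{2}\widehat{P}_{s,a,b}\Delta\leq\gamma\widehat{P}_{s,a,b}\Delta$, which is where the assumption $\gamma\geq\frac{1}{2}$ actually enters. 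This is exactly the differential content of the paper's proof, which cases on $\mathsf{Var}_{\widehat{P}_{s,a,b}}(V)$ relative to the floor and cap thresholds and verifies $\nabla f^{\pm}(V)\geq0$ almost everywhere, with the division by $\sqrt{\mathsf{Var}_{\widehat{P}_{s,a,b}}(V)}$ --- the quantity your route discards --- appearing explicitly in the gradient of the penalty.
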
\begin{proof}See Appendix \ref{sec:proof-lemma-gamma-contraction}.
\end{proof}

Next, we make note of several immediate consequences of Lemma~\ref{lemma:gamma-contraction}.
Here and throughout, $V_{\mathsf{pe}}^{-\star}$ and $V_{\mathsf{pe}}^{+\star}$
are defined to be the value functions (see \eqref{eq:V-defn-Bellman})
associated with $Q_{\mathsf{pe}}^{-\star}$ and $Q_{\mathsf{pe}}^{+\star}$,
respectively. 
\begin{itemize}
\item First of all, the above lemma implies that
\begin{equation}
Q_{\mathsf{pe},t}^{-}\leq Q_{\mathsf{pe}}^{-\star}\quad(\forall t\geq0)\qquad\text{and hence}\qquad Q_{\mathsf{pe}}^{-}\leq Q_{\mathsf{pe}}^{-\star}.\label{eq:Q-pe-t-minus-star-all-t}
\end{equation}
To see this, we first note that $Q_{\mathsf{pe},0}^{-}=0\leq Q_{\mathsf{pe}}^{-\star}$.
Next, suppose that $Q_{\mathsf{pe},t}^{-}\leq Q_{\mathsf{pe}}^{-\star}$
for some iteration $t\geq0$, then the monotonicity of $\widehat{\mathcal{T}}_{\mathsf{pe}}^{-}$
(cf.~Lemma \ref{lemma:gamma-contraction}) tells us that
\[
Q_{\mathsf{pe},t+1}^{-}=\widehat{\mathcal{T}}_{\mathsf{pe}}^{-}\left(Q_{\mathsf{pe},t}^{-}\right)\leq\widehat{\mathcal{T}}_{\mathsf{pe}}^{-}(Q_{\mathsf{pe}}^{-\star})=Q_{\mathsf{pe}}^{-\star},
\]
from which \eqref{eq:Q-pe-t-minus-star-all-t} follows. 
\item In addition, the $\gamma$-contraction property in Lemma~\ref{lemma:gamma-contraction}
leads to 
\begin{equation}
\left\Vert V_{\mathsf{pe}}^{-}-V_{\mathsf{pe}}^{-\star}\right\Vert _{\infty}\leq\left\Vert Q_{\mathsf{pe}}^{-}-Q_{\mathsf{pe}}^{-\star}\right\Vert _{\infty}\leq\frac{1}{N},\label{eq:Q_pe_approx_err}
\end{equation}
To justify this, observe that
\begin{align*}
\left\Vert Q_{\mathsf{pe},t}^{-}-Q_{\mathsf{pe}}^{-\star}\right\Vert _{\infty} & =\left\Vert \widehat{\mathcal{T}}_{\mathsf{pe}}^{-}\big(Q_{\mathsf{pe},t-1}^{-}\big)-\widehat{\mathcal{T}}_{\mathsf{pe}}^{-}\big(Q_{\mathsf{pe}}^{-\star}\big)\right\Vert _{\infty}\leq\gamma\left\Vert Q_{\mathsf{pe},t-1}^{-}-Q_{\mathsf{pe}}^{-\star}\right\Vert _{\infty}\\
 & \leq\cdots\leq\gamma^{t}\left\Vert Q_{\mathsf{pe},0}^{-}-Q_{\mathsf{pe}}^{-\star}\right\Vert _{\infty}\leq\frac{\gamma^{t}}{1-\gamma},
\end{align*}
which together with $T=\lceil\frac{\log(N/(1-\gamma))}{\log(1/\gamma)}\rceil$
and \eqref{eq:Q-V-bound} gives
\[
\left\Vert V_{\mathsf{pe}}^{-}-V_{\mathsf{pe}}^{-\star}\right\Vert _{\infty}\leq\big\| Q_{\mathsf{pe}}^{-}-Q_{\mathsf{pe}}^{-\star}\big\|_{\infty}=\big\| Q_{\mathsf{pe},T}^{-}-Q_{\mathsf{pe}}^{-\star}\big\|_{\infty}\leq\frac{\gamma^{T}}{1-\gamma}\leq\frac{1}{N}.
\]
\item A similar argument also yields
\begin{equation}
Q_{\mathsf{pe}}^{+}\geq Q_{\mathsf{pe}}^{+\star},\qquad\big\| Q_{\mathsf{pe}}^{+}-Q_{\mathsf{pe}}^{+\star}\big\|_{\infty}\leq1/N,\qquad\big\| V_{\mathsf{pe}}^{+}-V_{\mathsf{pe}}^{+\star}\big\|_{\infty}\leq1/N.\label{eq:Qpe-Vpe-Qpeplus-properties}
\end{equation}
\end{itemize}

\subsection{Step 2: decoupling statistical dependency and establishing pessimism} \label{subsec:bernstein-concentration}

To proceed, we rely on the following theorem to quantify the difference
between $\widehat{P}$ and $P$ when projected onto a value function
direction. 

\begin{lemma}\label{lemma:loo}For any $(s,a,b)\in\mathcal{S}\times\mathcal{A}\times\mathcal{B}$
satisfying $N(s,a,b)\geq1$, with probability exceeding $1-\delta$,
\begin{equation}
\left|\big(\widehat{P}_{s,a,b}-P_{s,a,b}\big)\widetilde{V}\right|\leq\widetilde{c}\sqrt{\frac{1}{N\left(s,a,b\right)}\mathsf{Var}_{\widehat{P}_{s,a,b}}\big(\widetilde{V}\big)\log\frac{N}{\delta}}+\widetilde{c}\frac{\log\frac{N}{\delta}}{\left(1-\gamma\right)N\left(s,a,b\right)}\label{eq:Bernstein-type-lemma-loo}
\end{equation}
for some sufficiently large constant $\widetilde{c}>0$, and
\begin{equation}
\mathsf{Var}_{\widehat{P}_{s,a,b}}\big(\widetilde{V}\big)\leq2\mathsf{Var}_{P_{s,a,b}}\big(\widetilde{V}\big)+O\left(\frac{1}{\left(1-\gamma\right)^{2}N\left(s,a,b\right)}\log\frac{N}{\delta}\right)\label{eq:Var-Phat-P-diff}
\end{equation}
hold simultaneously for all $\widetilde{V}\in\mathbb{R}^{S}$ satisfying
$0\leq\widetilde{V}\leq\frac{1}{1-\gamma}1$ and $\min\big\{\Vert\widetilde{V}-V_{\mathsf{pe}}^{-\star}\Vert_{\infty},\Vert\widetilde{V}-V_{\mathsf{pe}}^{+\star}\Vert_{\infty}\big\}\leq1/N$.
\end{lemma}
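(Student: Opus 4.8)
The plan is to establish both \eqref{eq:Bernstein-type-lemma-loo} and \eqref{eq:Var-Phat-P-diff} through a \emph{leave-one-out} decoupling argument. The chief difficulty is that the reference value functions $V_{\mathsf{pe}}^{-\star}$ and $V_{\mathsf{pe}}^{+\star}$ are statistically dependent on the row $\widehat{P}_{s,a,b}$ (they are fixed points of operators built from the full $\widehat{P}$), so the Bernstein inequality cannot be applied to them directly. A naive remedy---a union bound over an $\varepsilon$-net of the entire value ball $[0,\frac{1}{1-\gamma}]^{S}$---would require a net of cardinality exponential in $S$, inflating the bound by an unacceptable factor of $S$. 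The entire purpose of the construction below is to replace this with a union bound over a \emph{one-dimensional} scalar grid, whose logarithm contributes only the harmless $\log\frac{N}{(1-\gamma)\delta}$ overhead.

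Fix a triple $(s,a,b)$ with $N(s,a,b)\geq 1$. For each scalar $u$ on a grid $\mathcal{U}$ of $[0,\frac{1}{1-\gamma}]$ with spacing $\asymp\frac{1-\gamma}{N^{2}}$, I would introduce auxiliary pessimistic operators $\widehat{\mathcal{T}}_{\mathsf{pe}}^{\pm,(s,a,b,u)}$ that agree with $\widehat{\mathcal{T}}_{\mathsf{pe}}^{\pm}$ at every triple except $(s,a,b)$, where the update is frozen to output the constant $u$ (thereby discarding all dependence on $\widehat{P}_{s,a,b}$ and on the empirical variance at that triple). These operators remain $\gamma$-contractions, so each admits a unique fixed point with associated value functions $V_{\mathsf{pe}}^{\pm\star,(s,a,b,u)}$; crucially, since only rows other than $(s,a,b)$ of $\widehat{P}$ enter their definition and the within-triple transitions are independent across triples, each $V_{\mathsf{pe}}^{\pm\star,(s,a,b,u)}$ is \emph{independent} of $\widehat{P}_{s,a,b}$. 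Taking $u^{\star}=Q_{\mathsf{pe}}^{-\star}(s,a,b)$, the genuine fixed point $Q_{\mathsf{pe}}^{-\star}$ is itself a fixed point of the auxiliary operator and hence, by uniqueness (Lemma~\ref{lemma:gamma-contraction}), coincides with it, so $V_{\mathsf{pe}}^{-\star}=V_{\mathsf{pe}}^{-\star,(s,a,b,u^{\star})}$; the analogous identity holds for the $+$ version. A Lipschitz estimate in the scalar $u$ (again via $\gamma$-contraction, since the two operators differ by at most $|u-u'|$ at a single coordinate) gives $\big\|V_{\mathsf{pe}}^{\pm\star,(s,a,b,u)}-V_{\mathsf{pe}}^{\pm\star,(s,a,b,u')}\big\|_{\infty}\lesssim\frac{|u-u'|}{1-\gamma}$, so some grid point $u\in\mathcal{U}$ lies within $\frac{1}{N^{2}}$ in $\ell_{\infty}$ of the true reference value function.

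With independence in hand, I would apply the Bernstein inequality to each vector $V_{\mathsf{pe}}^{\pm\star,(s,a,b,u)}$, together with a union bound over the scalar grid $\mathcal{U}$ (of size $\mathrm{poly}(N,\frac{1}{1-\gamma})$) and over all $(s,a,b)$; this produces the population-variance form of \eqref{eq:Bernstein-type-lemma-loo}, with $\mathsf{Var}_{P_{s,a,b}}$ in place of $\mathsf{Var}_{\widehat{P}_{s,a,b}}$, at only a $\log\frac{N}{(1-\gamma)\delta}$ cost. An analogous concentration applied to the empirical second moment $\widehat{P}_{s,a,b}V^{2}$ yields the variance transfer \eqref{eq:Var-Phat-P-diff}, which then lets me swap $\mathsf{Var}_{P_{s,a,b}}$ for $\mathsf{Var}_{\widehat{P}_{s,a,b}}$ at the price of the additive lower-order term already present in \eqref{eq:Bernstein-type-lemma-loo}. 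Finally I would pass from the exact reference value functions to the full neighborhood: for any admissible $\widetilde{V}$ within $\frac{1}{N}$ of $V_{\mathsf{pe}}^{-\star}$ or $V_{\mathsf{pe}}^{+\star}$, the bound $\|\widehat{P}_{s,a,b}-P_{s,a,b}\|_{1}\leq 2$ controls $\big|(\widehat{P}_{s,a,b}-P_{s,a,b})(\widetilde{V}-V_{\mathsf{pe}}^{\pm\star})\big|\leq\frac{2}{N}$, while fact \eqref{eq:variance-V-bound} controls the change in empirical variance by $\frac{4}{(1-\gamma)N}$; both---along with the grid slack $\frac{1}{N^{2}}$---are absorbed into the additive term $\frac{1}{(1-\gamma)N(s,a,b)}\log\frac{N}{(1-\gamma)\delta}$, using $N(s,a,b)\leq N$ and $1-\gamma\leq\frac{1}{2}$.

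The main obstacle is making the first paragraph's concern precise: engineering the auxiliary operators so that they (i) genuinely sever the dependence on $\widehat{P}_{s,a,b}$---which forces one to freeze \emph{both} the transition term and the Bernstein penalty $\beta(s,a,b;\cdot)$ at the triple $(s,a,b)$---and (ii) perturb tightly in $u$, so that the grid-induced error and the neighborhood slack never dominate the statistical term. Carrying the penalty term correctly through the contraction estimate, so that the leave-one-out and genuine value functions differ only through a single scalar coordinate rather than through the whole high-dimensional penalty vector, is the delicate part; everything else reduces to Bernstein's inequality plus the elementary Lipschitz facts \eqref{eq:variance-V-bound} and \eqref{eq:b-V-bound}.
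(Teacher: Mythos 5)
Your proposal is correct and follows essentially the same route as the paper: a leave-one-out construction of auxiliary pessimistic operators parameterized by a single scalar $u$, whose fixed points are independent of $\widehat{P}_{s,a,b}$ and recover $V_{\mathsf{pe}}^{\pm\star}$ at a suitable $u^{\star}$ by uniqueness of the contraction fixed point, combined with a one-dimensional covering argument over $u$, Bernstein's inequality, and the Lipschitz facts to absorb the grid and neighborhood slack. The only immaterial difference is that the paper's auxiliary game freezes the entire state $s$ --- replacing it by an absorbing state with reward $u$ --- whereas you freeze the single Q-entry at $(s,a,b)$; both sever the dependence on $\widehat{P}_{s,a,b}$ while preserving the $\gamma$-contraction, and the paper otherwise defers the details to the single-agent analogue in \citet[Lemma 8]{li2022settling} exactly as you outline.
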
\begin{proof}See Appendix \ref{sec:proof-lemma-loo}.\end{proof}

In words, the first result \eqref{eq:Bernstein-type-lemma-loo} delivers
some Bernstein-type concentration bound, whereas the second result
\eqref{eq:Var-Phat-P-diff} guarantees that the empirical variance
estimate (i.e., the plug-in estimate) is close to the true variance.
It is worth noting that Lemma~\ref{lemma:loo} does \emph{not} require
$\widetilde{V}$ to be statistically independent from $\widehat{P}_{s,a,b}$,
which is particularly crucial when coping with complicated statistical
dependency of our iterative algorithm. The proof of Lemma~\ref{lemma:loo}
is established upon a leave-one-out analysis argument (see, e.g.,
\citet{agarwal2020model,li2020breaking,li2022settling,chen2021spectral,ma2020implicit})
that helps decouple statistical dependency; see details in Appendix
\ref{sec:proof-lemma-loo}. Armed with Lemma \ref{lemma:loo}, we
can readily see that 
\begin{equation}
\left|\big(\widehat{P}_{s,a,b}-P_{s,a,b}\big)\widetilde{V}\right|+\frac{4}{N}\leq\beta\big(s,a,b;\widetilde{V}\big)\label{eq:b-dominance-1}
\end{equation}
holds for any $(s,a,b)\in\mathcal{S}\times\mathcal{A}\times\mathcal{B}$
satisfying $N(s,a,b)\geq1$ and any $\widetilde{V}$ satisfying the
conditions in Lemma \ref{lemma:loo}. In turn, this important fact
allows one to justify that $Q_{\mathsf{pe}}^{-}$ (resp.~$Q_{\mathsf{pe}}^{+}$)
is indeed an upper (resp.~lower) bound on $Q^{\widehat{\mu},\star}$
(resp.~$Q^{\star,\widehat{\nu}}$), as formalized below. 

\begin{lemma}\label{lemma:Q-monononicity}With probability exceeding
$1-\delta$, it holds that
\[
Q_{\mathsf{pe}}^{-}\leq Q^{\widehat{\mu},\star},\qquad Q_{\mathsf{pe}}^{+}\geq Q^{\star,\widehat{\nu}},\qquad V_{\mathsf{pe}}^{-}\leq V^{\widehat{\mu},\star}\qquad\text{and}\qquad V_{\mathsf{pe}}^{+}\geq V^{\star,\widehat{\nu}}.
\]
\end{lemma}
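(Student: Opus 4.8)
The plan is to first reduce the two value-function inequalities to the corresponding Q-function inequalities, and then establish the Q-function inequalities via a coupled monotonicity argument. I focus on the max-player's side ($Q_{\mathsf{pe}}^-\le Q^{\widehat\mu,\star}$ and $V_{\mathsf{pe}}^-\le V^{\widehat\mu,\star}$); the min-player's side follows by an entirely symmetric argument using $\widehat\nu=\nu^+$ and the optimistic operator $\widehat{\mathcal{T}}_{\mathsf{pe}}^+$. For the reduction, recall that $\widehat\mu=\mu^-$ is the max-component of the matrix-game Nash of $Q_{\mathsf{pe}}^-(s,\cdot,\cdot)$, so that $V_{\mathsf{pe}}^-(s)=\min_{\nu_s\in\Delta(\mathcal B)}\mathbb E_{a\sim\widehat\mu(s),b\sim\nu_s}[Q_{\mathsf{pe}}^-(s,a,b)]$; likewise the best-response Bellman equation for the min-player's induced MDP gives $V^{\widehat\mu,\star}(s)=\min_{\nu_s}\mathbb E_{a\sim\widehat\mu(s),b\sim\nu_s}[Q^{\widehat\mu,\star}(s,a,b)]$. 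Hence $Q_{\mathsf{pe}}^-\le Q^{\widehat\mu,\star}$ immediately yields $V_{\mathsf{pe}}^-\le V^{\widehat\mu,\star}$ by monotonicity of the map $\min_{\nu_s}\mathbb E_{\widehat\mu(s),\nu_s}[\cdot]$, so it suffices to prove the pointwise Q-function bound.

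To prove $Q_{\mathsf{pe}}^-\le Q^{\widehat\mu,\star}$, I would first record a one-step domination at the fixed point. Applying the penalty-dominance inequality \eqref{eq:b-dominance-1} with $\widetilde V=V_{\mathsf{pe}}^{-\star}$ (admissible since $0\le V_{\mathsf{pe}}^{-\star}\le\frac1{1-\gamma}$ by Lemma~\ref{lemma:gamma-contraction} and it sits at distance $0$ from itself), together with $\widehat r\le r$ and $\gamma\le1$, the fixed-point identity $Q_{\mathsf{pe}}^{-\star}=\widehat{\mathcal{T}}_{\mathsf{pe}}^-(Q_{\mathsf{pe}}^{-\star})$ gives, for every $(s,a,b)$ at which $Q_{\mathsf{pe}}^-(s,a,b)>0$ (so that $N(s,a,b)\ge1$ and, since $Q_{\mathsf{pe}}^{-\star}\ge Q_{\mathsf{pe}}^->0$, the non-truncated branch is active),
\[
Q_{\mathsf{pe}}^-(s,a,b)\le Q_{\mathsf{pe}}^{-\star}(s,a,b)\le r(s,a,b)+\gamma P_{s,a,b}V_{\mathsf{pe}}^{-\star}-\tfrac4N ,
\]
where the first step uses \eqref{eq:Q-pe-t-minus-star-all-t}. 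Combining with $\|V_{\mathsf{pe}}^--V_{\mathsf{pe}}^{-\star}\|_\infty\le1/N$ from \eqref{eq:Q_pe_approx_err}, I would translate this into a statement about $G\coloneqq Q^{\widehat\mu,\star}-Q_{\mathsf{pe}}^-$ and $g\coloneqq V^{\widehat\mu,\star}-V_{\mathsf{pe}}^-$: at any triple with $Q_{\mathsf{pe}}^->0$ one gets $G(s,a,b)\ge\gamma P_{s,a,b}g+\tfrac3N$, whereas at any triple with $Q_{\mathsf{pe}}^-(s,a,b)=0$ (including every unseen triple) one has $G(s,a,b)=Q^{\widehat\mu,\star}(s,a,b)\ge0$ trivially, since rewards are nonnegative.

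Next I would close the coupling between $G$ and $g$. Taking $\nu^\sharp(s)$ to be the min-player's best response achieving $V^{\widehat\mu,\star}(s)$ and evaluating the defining minimum of $V_{\mathsf{pe}}^-(s)$ at $\nu^\sharp(s)$ yields $g(s)\ge\mathbb E_{a\sim\widehat\mu(s),b\sim\nu^\sharp(s)}[G(s,a,b)]$; in particular $g_{\min}\coloneqq\min_s g\ge\min_{s,a,b}G\eqqcolon G_{\min}$. The two displayed bounds then give $G_{\min}\ge\min\{0,\gamma g_{\min}+\tfrac3N\}$. If $G_{\min}<0$, the minimizing triple must have $Q_{\mathsf{pe}}^->0$, so $G_{\min}\ge\gamma g_{\min}+\tfrac3N\ge\gamma G_{\min}+\tfrac3N$, forcing $(1-\gamma)G_{\min}\ge\tfrac3N>0$, a contradiction. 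Hence $G\ge0$, i.e.\ $Q_{\mathsf{pe}}^-\le Q^{\widehat\mu,\star}$, and the value-function bound follows from the reduction above.

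The crux — and the step demanding the most care — is the bookkeeping that makes the coupled recursion close \emph{exactly}, rather than up to an $O\big(\tfrac1{(1-\gamma)N}\big)$ residual. Three ingredients must fit together: the deliberate $+\tfrac4N$ slack built into the penalty \eqref{eq:bonus-term-defn}, the $\tfrac1N$ gap between the iterate value $V_{\mathsf{pe}}^-$ and the fixed-point value $V_{\mathsf{pe}}^{-\star}$, and the correct handling of the truncation $\max\{\cdot,0\}$ and of triples with $N(s,a,b)=0$ (for which the $\tfrac4N$ slack is unavailable but one instead exploits $Q_{\mathsf{pe}}^-=0\le Q^{\widehat\mu,\star}$). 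Reducing to the \emph{fixed point} $V_{\mathsf{pe}}^{-\star}$ before invoking \eqref{eq:b-dominance-1}, rather than to the one-step-stale iterate $V_{\mathsf{pe},T-1}^-$ that actually defines $Q_{\mathsf{pe}}^-$, is what keeps the conversion error at $\tfrac1N$ and leaves the net slack $\tfrac3N$ strictly positive; this is the delicate point I would watch most closely.
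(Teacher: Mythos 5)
Your proof is correct and follows essentially the same route as the paper's: compare $Q_{\mathsf{pe}}^{-}$ with the fixed point $Q_{\mathsf{pe}}^{-\star}$, invoke the Bernstein penalty dominance \eqref{eq:b-dominance-1} to obtain the one-step Bellman domination $Q_{\mathsf{pe}}^{-}\leq r+\gamma P_{s,a,b}V_{\mathsf{pe}}^{-}$ (handling $N(s,a,b)=0$ and the truncation at $0$ separately), and then close via the minimizing-entry argument using $\gamma<1$. The only differences are cosmetic: you apply the dominance at $\widetilde V=V_{\mathsf{pe}}^{-\star}$ and convert to $V_{\mathsf{pe}}^{-}$ afterward (avoiding the $\beta$-Lipschitz bound \eqref{eq:b-V-bound} that the paper uses), and you carry an explicit $+3/N$ slack into a contradiction where the paper simply concludes $(1-\gamma)\,\min_{s,a,b}(Q^{\widehat{\mu},\star}-Q_{\mathsf{pe}}^{-})\geq 0$.
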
\begin{proof}See Appendix \ref{sec:proof-lemma-Q-monotonicity}.\end{proof}

This lemma makes clear a key rationale for the principle of pessimism:
we would like the Q-function estimates to be always conservative uniformly
over all entries. 

\subsection{Step 3: bounding $V^{\star}(\rho)-V^{\widehat{\mu},\star}(\rho)$
and $V^{\star,\widehat{\nu}}(\rho)-V^{\star}(\rho)$} 
\label{subsec:thm-1-upper-bound}

Before proceeding to bound $V^{\star}-V^{\widehat{\mu},\star}$, we
first develop a lower bound on $V_{\mathsf{pe}}^{-}$, given that
$V^{\widehat{\mu},\star}$ is lower bounded by $V_{\mathsf{pe}}^{-}$
(according to Lemma~\ref{lemma:Q-monononicity}). Towards this end,
we invoke the definition of $V_{\mathsf{pe}}^{-}$ to reach
\begin{align}
V_{\mathsf{pe}}^{-}\left(s\right) & =\max_{\mu(s)\in\Delta(\mathcal{A})}\min_{\nu(s)\in\Delta(\mathcal{B})}\mathop{\mathbb{E}}\limits _{a\sim\mu(s),b\sim\nu(s)}\left[Q_{\mathsf{pe}}^{-}\left(s,\cdot,\cdot\right)\right]\geq\min_{\nu(s)\in\Delta(\mathcal{B})}\mathop{\mathbb{E}}\limits _{a\sim\mu^{\star}(s),b\sim\nu(s)}\left[Q_{\mathsf{pe}}^{-}\left(s,a,b\right)\right],\label{eq:Vpe-minus-lower-bound-min}
\end{align}
where we set the policy of the max-player to be $\mu^{\star}$ on
the right-hand side of the above equation. Clearly, there exists a
deterministic policy $\nu_{0}:\mathcal{S}\to\Delta(\mathcal{B})$
such that 
\begin{equation}
\nu_{0}\left(s\right)=\arg\min_{\nu(s)\in\Delta(\mathcal{B})}\mathop{\mathbb{E}}\limits _{a\sim\mu^{\star}(s),b\sim\nu(s)}\big[Q_{\mathsf{pe}}^{-}\left(s,a,b\right)\big]\label{eq:defn-nu0-min}
\end{equation}
for any $s\in\mathcal{S}$; for instance, one can simply set, for
any $s\in\mathcal{S}$,
\begin{equation}
\nu_{0}(s)=\ind_{b_{s}}\qquad\text{with }b_{s}\coloneqq\arg\max_{b\in\mathcal{B}}\ \big\langle\mu^{\star}\left(s\right),Q_{\mathsf{pe}}^{-}\left(s,\cdot,b\right)\big\rangle,\label{eq:defn-nu0-s}
\end{equation}
with $\ind_{b_{s}}$ denoting a probability vector that is nonzero
only in $b_{s}$. This deterministic policy $\nu_{0}$ helps us lower
bound $V_{\mathsf{pe}}^{-}$, as accomplished in the following lemma.
Here and below, we define two vectors $r^{\mu^{\star},\nu_{0}},\beta^{\mu^{\star},\nu_{0}}\in\mathbb{R}^{S}$
and a probability transition kernel $P^{\mu^{\star},\nu_{0}}:\mathcal{S}\to\Delta(\mathcal{S})$
restricted to $\mu^{\star}$ and $\nu_{0}$ such that: for any $s,s'\in\mathcal{S}$,
\begin{subequations}\label{eq:defn-beta-r-P-mu-nu0}
\begin{align}
r^{\mu^{\star},\nu_{0}}\left(s\right) & \coloneqq\mathbb{E}_{a\sim\mu^{\star}(s),b\sim\nu_{0}(s)}\left[r\left(s,a,b\right)\right],\label{eq:defn-r-mu-nu0}\\
\beta^{\mu^{\star},\nu_{0}}\left(s\right) & \coloneqq\mathbb{E}_{a\sim\mu^{\star}(s),b\sim\nu_{0}(s)}\left[\beta\left(s,a,b;V_{\mathsf{pe}}^{-}\right)\right],\label{eq:defn-beta-mu-nu0}\\
P^{\mu^{\star},\nu_{0}}\left(s'\mymid s\right) & \coloneqq\mathbb{E}_{a\sim\mu^{\star}(s),b\sim\nu_{0}(s)}\left[P\left(s'\mymid s,a,b\right)\right].\label{eq:defn-P-mu-nu0}
\end{align}
\end{subequations}

\begin{lemma}\label{lemma:V_pe_lower_bound}With probability exceeding
$1-\delta$, we have
\begin{equation}
V_{\mathsf{pe}}^{-}\geq r^{\mu^{\star},\nu_{0}}+\gamma P^{\mu^{\star},\nu_{0}}V_{\mathsf{pe}}^{-}-2\beta^{\mu^{\star},\nu_{0}}.\label{eq:main-proof-2}
\end{equation}
 \end{lemma}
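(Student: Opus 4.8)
The plan is to reduce the vector inequality \eqref{eq:main-proof-2} to an entrywise lower bound on $Q_{\mathsf{pe}}^{-}$ and then average over the frozen policy pair $(\mu^{\star},\nu_{0})$. The averaging half is already set up: combining \eqref{eq:Vpe-minus-lower-bound-min} with the defining property \eqref{eq:defn-nu0-min} of $\nu_{0}$ gives $V_{\mathsf{pe}}^{-}(s)\geq\mathbb{E}_{a\sim\mu^{\star}(s),b\sim\nu_{0}(s)}\big[Q_{\mathsf{pe}}^{-}(s,a,b)\big]$ for every $s$. Hence, once I establish the pointwise bound
\[
Q_{\mathsf{pe}}^{-}(s,a,b)\;\geq\;r(s,a,b)+\gamma P_{s,a,b}V_{\mathsf{pe}}^{-}-2\beta(s,a,b;V_{\mathsf{pe}}^{-})\qquad\forall(s,a,b),
\]
applying $\mathbb{E}_{a\sim\mu^{\star}(s),b\sim\nu_{0}(s)}[\cdot]$ and recalling the definitions \eqref{eq:defn-beta-r-P-mu-nu0} of $r^{\mu^{\star},\nu_{0}}$, $P^{\mu^{\star},\nu_{0}}$ and $\beta^{\mu^{\star},\nu_{0}}$ yields \eqref{eq:main-proof-2} immediately. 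The entire argument runs on the $1-\delta$ event on which the Bernstein dominance \eqref{eq:b-dominance-1} holds.

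For the pointwise bound on a visited triple ($N(s,a,b)\geq1$), I would start from the fixed-point identity $Q_{\mathsf{pe}}^{-\star}=\widehat{\mathcal{T}}_{\mathsf{pe}}^{-}(Q_{\mathsf{pe}}^{-\star})$, which — after discarding the outer $\max\{\cdot,0\}$ and using $\widehat{r}=r$ on visited triples — gives $Q_{\mathsf{pe}}^{-\star}(s,a,b)\geq r(s,a,b)+\gamma\widehat{P}_{s,a,b}V_{\mathsf{pe}}^{-\star}-\beta(s,a,b;V_{\mathsf{pe}}^{-\star})$. The three key manipulations are: (i) invoke the monotonicity consequence $V_{\mathsf{pe}}^{-}\leq V_{\mathsf{pe}}^{-\star}$ (which follows from $Q_{\mathsf{pe}}^{-}\leq Q_{\mathsf{pe}}^{-\star}$ in \eqref{eq:Q-pe-t-minus-star-all-t}, since the minimax value $\max_{\mu}\min_{\nu}\mathbb{E}_{\mu,\nu}[\cdot]$ is monotone in its payoff) to replace $V_{\mathsf{pe}}^{-\star}$ by $V_{\mathsf{pe}}^{-}$ inside $\gamma\widehat{P}_{s,a,b}(\cdot)$ \emph{without any loss}; (ii) apply \eqref{eq:b-dominance-1} with the test vector $\widetilde{V}=V_{\mathsf{pe}}^{-}$, which is legitimate because $\|V_{\mathsf{pe}}^{-}-V_{\mathsf{pe}}^{-\star}\|_{\infty}\leq1/N$ by \eqref{eq:Q_pe_approx_err} so that $V_{\mathsf{pe}}^{-}$ meets the hypotheses of Lemma~\ref{lemma:loo}, turning $\gamma\widehat{P}_{s,a,b}V_{\mathsf{pe}}^{-}$ into $\gamma P_{s,a,b}V_{\mathsf{pe}}^{-}-\beta(s,a,b;V_{\mathsf{pe}}^{-})$ up to the built-in $+4/N$ slack; and (iii) convert the single remaining penalty $\beta(s,a,b;V_{\mathsf{pe}}^{-\star})$ into $\beta(s,a,b;V_{\mathsf{pe}}^{-})$ via the Lipschitz estimate \eqref{eq:b-V-bound}, at a cost of $2/N$. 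Passing from $Q_{\mathsf{pe}}^{-\star}$ back to $Q_{\mathsf{pe}}^{-}$ through \eqref{eq:Q_pe_approx_err} spends one more $1/N$. The $4/N$ gained in (ii) dominates the $2/N+1/N$ spent afterwards, so the pointwise bound holds with room to spare.

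The unvisited case $N(s,a,b)=0$ is then immediate: there the penalty saturates at $\beta(s,a,b;\cdot)=\tfrac{1}{1-\gamma}+\tfrac{4}{N}$, so the right-hand side of the pointwise bound is at most $1+\tfrac{\gamma}{1-\gamma}-\tfrac{2}{1-\gamma}-\tfrac{8}{N}=-\tfrac{1}{1-\gamma}-\tfrac{8}{N}<0\leq Q_{\mathsf{pe}}^{-}(s,a,b)$, where I use $\tfrac{1}{1-\gamma}\geq2$ (as $\gamma\geq\tfrac12$). Thus the pointwise bound holds for all $(s,a,b)$, and the averaging step completes the proof.

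I expect the only real obstacle to be the $1/N$ bookkeeping, not any conceptual difficulty. A naive route that estimates the $\widehat{P}_{s,a,b}-P_{s,a,b}$ error at $V_{\mathsf{pe}}^{-\star}$ and then Lipschitz-transfers \emph{both} penalty terms to $V_{\mathsf{pe}}^{-}$ loses $4/N$ in the transfer and comes out exactly $1/N$ short of the clean inequality \eqref{eq:main-proof-2}. The remedy — and the reason the additive $4/N$ was baked into \eqref{eq:bonus-term-defn} — is precisely the order of operations in step (ii): evaluating the Bernstein dominance directly at $\widetilde{V}=V_{\mathsf{pe}}^{-}$ (which \eqref{eq:Q_pe_approx_err} makes admissible) while disposing of the transition term by monotonicity, so that only one penalty rather than two incurs the lossy Lipschitz conversion.
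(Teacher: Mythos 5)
Your proposal is correct and follows essentially the same route as the paper: reduce to a pointwise bound via \eqref{eq:Vpe-minus-lower-bound-min}--\eqref{eq:defn-nu0-min}, start from the fixed-point identity for $Q_{\mathsf{pe}}^{-\star}$, and combine \eqref{eq:Q_pe_approx_err}, \eqref{eq:b-V-bound} and \eqref{eq:b-dominance-1} (applied at $\widetilde{V}=V_{\mathsf{pe}}^{-}$, which is admissible for Lemma~\ref{lemma:loo}), with the same treatment of unvisited triples. Your only deviation --- using $V_{\mathsf{pe}}^{-}\leq V_{\mathsf{pe}}^{-\star}$ to swap the value inside $\gamma\widehat{P}_{s,a,b}(\cdot)$ at zero cost rather than paying $\gamma/N$ via an $\ell_\infty$ bound as the paper does --- is a harmless refinement of the $1/N$ bookkeeping, and both budgets close.
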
\begin{proof}See Appendix \ref{subsec:proof-lemma_V_pe_lower_bound}.\end{proof}

In addition, we can invoke Lemma \ref{lemma:Q-monononicity} and the
fact that $V^{\star}=V^{\mu^{\star},\nu^{\star}}=V^{\mu^{\star},\star}$
to reach
\begin{equation}
V^{\star}-V^{\widehat{\mu},\star}=V^{\mu^{\star},\star}-V^{\widehat{\mu},\star}\leq V^{\mu^{\star},\nu_{0}}-V_{\mathsf{pe}}^{-},\label{eq:main-proof-6}
\end{equation}
which motivates us to look at $V^{\mu^{\star},\nu_{0}}-V_{\mathsf{pe}}^{-}$.
Towards this, we note that the Bellman equation tells us that
\begin{equation}
V^{\mu^{\star},\nu_{0}}=r^{\mu^{\star},\nu_{0}}+\gamma P^{\mu^{\star},\nu_{0}}V^{\mu^{\star},\nu_{0}}.\label{eq:main-proof-3}
\end{equation}
Taking (\ref{eq:main-proof-2}) and (\ref{eq:main-proof-3}) collectively
yields 
\begin{align}
V^{\mu^{\star},\nu_{0}}-V_{\mathsf{pe}}^{-} & \leq\gamma P^{\mu^{\star},\nu_{0}}\big(V^{\mu^{\star},\nu_{0}}-V_{\mathsf{pe}}^{-}\big)+2\beta^{\mu^{\star},\nu_{0}},\label{eq:main-proof-4}
\end{align}
thus resulting in a ``self-bounding'' type of relations. Applying
(\ref{eq:main-proof-4}) recursively, we arrive at 
\begin{align*}
\rho^{\top}\big(V^{\mu^{\star},\nu_{0}}-V_{\mathsf{pe}}^{-}\big) & \leq\gamma\rho^{\top}P^{\mu^{\star},\nu_{0}}\big(V^{\mu^{\star},\nu_{0}}-V_{\mathsf{pe}}^{-}\big)+2\rho^{\top}\beta^{\mu^{\star},\nu_{0}}\\
 & \leq\gamma^{2}\rho^{\top}\big(P^{\mu^{\star},\nu_{0}}\big)^{2}\big(V^{\mu^{\star},\nu_{0}}-V_{\mathsf{pe}}^{-}\big)+2\rho^{\top}\beta^{\mu^{\star},\nu_{0}}+2\gamma\rho^{\top}P^{\mu^{\star},\nu_{0}}\beta^{\mu^{\star},\nu_{0}}\\
 & \leq\cdots\leq\gamma^{n}\rho^{\top}\big(P^{\mu^{\star},\nu_{0}}\big)^{n}\big(V^{\mu^{\star},\nu_{0}}-V_{\mathsf{pe}}^{-}\big)+2\rho^{\top}\left[\sum_{i=0}^{n-1}\gamma^{i}\big(P^{\mu^{\star},\nu_{0}}\big)^{i}\right]\beta^{\mu^{\star},\nu_{0}}
\end{align*}
holds for all positive integers $n$. Letting $n\to\infty$ and recalling
that the vector $d^{\mu^{\star},\nu_{0}}\coloneqq[d^{\mu^{\star},\nu_{0}}(s;\rho)]_{s\in\mathcal{S}}$
obeys (see \eqref{eq:defn-d-mu-nu-s-rho}))
\begin{equation}
d^{\mu^{\star},\nu_{0}}=(1-\gamma)\rho^{\top}\sum_{i=0}^{\infty}\gamma^{i}(P^{\mu^{\star},\nu_{0}})^{i}=(1-\gamma)\rho^{\top}\big(I-\gamma P^{\mu^{\star},\nu_{0}}\big)^{-1},\label{eq:defn-d-mu-nu-vector}
\end{equation}
we arrive at
\begin{align}
\rho^{\top}\big(V^{\mu^{\star},\nu_{0}}-V_{\mathsf{pe}}^{-}\big) & \leq\left\{ \lim_{n\rightarrow\infty}\gamma^{n}\rho^{\top}\big(P^{\mu^{\star},\nu_{0}}\big)^{n}\big(V^{\mu^{\star},\nu_{0}}-V_{\mathsf{pe}}^{-}\big)\right\} +\frac{2}{1-\gamma}\big(d^{\mu^{\star},\nu_{0}}\big)^{\top}\beta^{\mu^{\star},\nu_{0}}\nonumber \\
 & =\frac{2}{1-\gamma}\big(d^{\mu^{\star},\nu_{0}}\big)^{\top}\beta^{\mu^{\star},\nu_{0}},\label{eq:main-proof-5}
\end{align}
where the last line makes use of the fact that $\|\rho^{\top}\big(P^{\mu^{\star},\nu_{0}}\big)^{n}\|_{1}=1$
for any $n\geq1$ and hence $\gamma^{n}\rho^{\top}\big(P^{\mu^{\star},\nu_{0}}\big)^{n}\rightarrow0$
as $n\rightarrow\infty$ when $\gamma<1$. 

In order to further control \eqref{eq:main-proof-5}, we resort to
the following lemma for bounding $\big(d^{\mu^{\star},\nu_{0}}\big)^{\top}\beta^{\mu^{\star},\nu_{0}}$,
whose proof can be found in Appendix \ref{sec:proof_lemma_d_b_upper_bound}.

\begin{lemma}\label{lemma:d_b_upper_bound}There exists some large
enough universal constant $c_{6}>0$ such that
\[
\big(d^{\mu^{\star},\nu_{0}}\big)^{\top}\beta^{\mu^{\star},\nu_{0}}\leq c_{6}\frac{C_{\mathsf{clipped}}^{\star}S\left(A+B\right)}{\left(1-\gamma\right)N}\log\frac{N}{\delta}+c_{6}\sqrt{\frac{C_{\mathsf{clipped}}^{\star}S\left(A+B\right)}{N\left(1-\gamma\right)}\log\frac{N}{\delta}}.
\]
with probability exceeding $1-\delta$. \end{lemma}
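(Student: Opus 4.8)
The plan is to expand the inner product over state--action--action triples and exploit the deterministic structure of $\nu_{0}$. Using \eqref{eq:relation-dmunu-ab-s},
\[
\big(d^{\mu^{\star},\nu_{0}}\big)^{\top}\beta^{\mu^{\star},\nu_{0}}=\sum_{s,a,b}d^{\mu^{\star},\nu_{0}}\left(s,a,b;\rho\right)\,\beta\left(s,a,b;V_{\mathsf{pe}}^{-}\right).
\]
Since the policy $\nu_{0}$ from \eqref{eq:defn-nu0-s} is deterministic, placing all its mass on a single action $b_{s}$ for each $s$, the summand is supported on the at-most-$SA$ triples $\{(s,a,b_{s})\}$; this is precisely the source of the linear-in-$(A+B)$ (rather than $AB$) scaling, and I would use $SA\le S(A+B)$ at the end. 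Invoking $\max\{x,y\}\le x+y$ and discarding the outer truncation at $\frac{1}{1-\gamma}$ in \eqref{eq:bonus-term-defn}, each penalty is bounded (modulo the additive $\frac{4}{N}$, which contributes at most $\frac{4}{N}$ since $d^{\mu^{\star},\nu_{0}}$ is a probability vector) by the Bernstein term plus the crude term, thereby splitting the target into a ``variance part'' $I_{1}$ and a ``crude part'' $I_{2}$.

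Before bounding $I_{1},I_{2}$, I would isolate the poorly-covered triples. On a high-probability event (from a multiplicative Chernoff bound with a union bound over all $SAB$ triples), $N(s,a,b)\gtrsim Nd_{\mathsf{b}}(s,a,b)$ whenever $Nd_{\mathsf{b}}(s,a,b)\gtrsim\log\frac{N}{(1-\gamma)\delta}$; the remaining triples (including those with $N(s,a,b)=0$, where $\beta\le\frac{1}{1-\gamma}+\frac{4}{N}$) are poorly-covered. For such a triple Assumption~\ref{assumption:uniliteral} gives two cases: if $d^{\mu^{\star},\nu_{0}}(s,a,b_{s};\rho)\le\frac{1}{S(A+B)}$ then $d^{\mu^{\star},\nu_{0}}(s,a,b_{s};\rho)\le C_{\mathsf{clipped}}^{\star}d_{\mathsf{b}}(s,a,b_{s})$, which is small because $d_{\mathsf{b}}$ is small here, and summing over the $\le SA$ triples and multiplying by $\frac{1}{1-\gamma}$ reproduces the $\frac{C_{\mathsf{clipped}}^{\star}S(A+B)}{(1-\gamma)N}\log\frac{N}{(1-\gamma)\delta}$ term; otherwise $\frac{1}{S(A+B)}\le C_{\mathsf{clipped}}^{\star}d_{\mathsf{b}}(s,a,b_{s})$ forces $d_{\mathsf{b}}$ to be bounded below, so the triple can be poorly-covered only when $N\lesssim C_{\mathsf{clipped}}^{\star}S(A+B)\log\frac{N}{(1-\gamma)\delta}$, a small-$N$ regime in which $\frac{C_{\mathsf{clipped}}^{\star}S(A+B)}{(1-\gamma)N}\log\frac{N}{(1-\gamma)\delta}\gtrsim\frac{1}{1-\gamma}$ already dominates the trivial bound $\frac{1}{1-\gamma}\sum_{s}d^{\mu^{\star},\nu_{0}}(s;\rho)$. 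Hence the poorly-covered contribution is absorbed into the $\frac{1}{N}$ term of the claim.

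On the well-covered triples I would combine $N(s,a,b)\gtrsim Nd_{\mathsf{b}}(s,a,b)$ with $\min\{d^{\mu^{\star},\nu_{0}},\frac{1}{S(A+B)}\}\le C_{\mathsf{clipped}}^{\star}d_{\mathsf{b}}$ to prove the counting estimate $\sum_{s,a}\frac{d^{\mu^{\star},\nu_{0}}(s,a,b_{s};\rho)}{N(s,a,b_{s})}\lesssim\frac{C_{\mathsf{clipped}}^{\star}S(A+B)}{N}$, treating $d\le\frac{1}{S(A+B)}$ and $d>\frac{1}{S(A+B)}$ separately (the latter using $\sum_{s,a}d^{\mu^{\star},\nu_{0}}(s,a,b_{s};\rho)\le1$). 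Plugging this into $I_{2}$ directly gives the $\frac{1}{N}$ term. For $I_{1}$ I would apply Cauchy--Schwarz,
\[
I_{1}\le\sqrt{C_{\mathsf{b}}\log\tfrac{N}{(1-\gamma)\delta}}\,\sqrt{\sum_{s,a}\frac{d^{\mu^{\star},\nu_{0}}(s,a,b_{s};\rho)}{N(s,a,b_{s})}}\,\sqrt{\sum_{s,a}d^{\mu^{\star},\nu_{0}}(s,a,b_{s};\rho)\,\mathsf{Var}_{\widehat{P}_{s,a,b_{s}}}(V_{\mathsf{pe}}^{-})},
\]
where the middle factor is again the counting estimate. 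For the last factor I would pass from empirical to true variance via \eqref{eq:Var-Phat-P-diff} (the error being lower order and absorbed into the $\frac{1}{N}$ term) and then use a total-variance bound $\sum_{s}d^{\mu^{\star},\nu_{0}}(s;\rho)\,\mathsf{Var}_{P^{\mu^{\star},\nu_{0}}}(V_{\mathsf{pe}}^{-})\lesssim\frac{1}{1-\gamma}$; multiplying the three factors yields exactly the $\sqrt{\frac{C_{\mathsf{clipped}}^{\star}S(A+B)}{N(1-\gamma)}\log\frac{N}{(1-\gamma)\delta}}$ term.

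The hard part will be this total-variance estimate. The naive bound $\mathsf{Var}\le\|V_{\mathsf{pe}}^{-}\|_{\infty}^{2}\le\frac{1}{(1-\gamma)^{2}}$ is off by a factor $\frac{1}{1-\gamma}$ and would only deliver $\sqrt{\frac{C_{\mathsf{clipped}}^{\star}S(A+B)}{N(1-\gamma)^{2}}\log\frac{N}{(1-\gamma)\delta}}$, which is too weak; the sharp bound needs a law-of-total-variance lemma showing that $\sum_{t\ge0}\gamma^{2t}\rho^{\top}(P^{\mu^{\star},\nu_{0}})^{t}\mathsf{Var}_{P^{\mu^{\star},\nu_{0}}}(\cdot)$ evaluated at a value vector is $O\big(\frac{1}{(1-\gamma)^{2}}\big)$, so the $(1-\gamma)$-weighted occupancy version is $O\big(\frac{1}{1-\gamma}\big)$. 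The complication is that $V_{\mathsf{pe}}^{-}$ is not the exact value $V^{\mu^{\star},\nu_{0}}$ but only obeys the one-sided relation \eqref{eq:main-proof-2}; I would therefore apply the variance lemma to $V^{\mu^{\star},\nu_{0}}$ and transfer to $V_{\mathsf{pe}}^{-}$ through the Lipschitz bound \eqref{eq:variance-V-bound}, being careful that the $\|V^{\mu^{\star},\nu_{0}}-V_{\mathsf{pe}}^{-}\|_{\infty}$ correction does not reintroduce the very duality gap that \eqref{eq:main-proof-5} is trying to bound. Keeping this transfer free of circularity is the delicate step.
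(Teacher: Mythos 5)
Your overall architecture matches the paper's proof closely: the Chernoff-based replacement of $N(s,a,b)$ by $Nd_{\mathsf{b}}(s,a,b)$ (Claim~\ref{claim:chernoff-type}), the split into a crude term and a variance term (the paper's $\alpha_{1}$ and $\alpha_{2}$), the use of the determinism of $\nu_{0}$ to restrict the sum to $SA$ triples, the clipping-based counting estimate, the Cauchy--Schwarz step, and the empirical-to-true variance swap (Claim~\ref{claim:var-perturbation}) are all the same. The genuine gap sits exactly at the step you flag as delicate, and the route you propose for it does not go through. Transferring the total-variance bound from $V^{\mu^{\star},\nu_{0}}$ to $V_{\mathsf{pe}}^{-}$ via \eqref{eq:variance-V-bound} incurs a correction of $\frac{4}{1-\gamma}\Vert V_{\mathsf{pe}}^{-}-V^{\mu^{\star},\nu_{0}}\Vert_{\infty}$, but this gap is only controlled in the $\rho$-weighted sense of \eqref{eq:main-proof-5}; entrywise it can be of order $\frac{1}{1-\gamma}$, since a state that is barely sampled has $\beta\asymp\frac{1}{1-\gamma}$ and \eqref{eq:main-proof-2} then permits a pointwise gap of that order even when $d^{\mu^{\star},\nu_{0}}$ places negligible mass there. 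The correction is then $\asymp\frac{1}{(1-\gamma)^{2}}$, the total-variance estimate degrades from $O\big(\frac{1}{1-\gamma}\big)$ to $O\big(\frac{1}{(1-\gamma)^{2}}\big)$, and you land back on the $\sqrt{\frac{C_{\mathsf{clipped}}^{\star}S(A+B)}{N(1-\gamma)^{2}}\log\frac{N}{(1-\gamma)\delta}}$ rate that you yourself identified as too weak.

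The paper's resolution is different in kind: it never passes through $V^{\mu^{\star},\nu_{0}}$. It bounds $\sum_{s}d^{\mu^{\star},\nu_{0}}(s;\rho)\,\mathsf{Var}_{P^{\mu^{\star},\nu_{0}}(\cdot\mid s)}(V_{\mathsf{pe}}^{-})$ directly, using only the one-sided approximate Bellman inequality \eqref{eq:main-proof-2} that $V_{\mathsf{pe}}^{-}$ itself satisfies, at the price of an extra additive term $\frac{4}{\gamma^{2}(1-\gamma)}\big(d^{\mu^{\star},\nu_{0}}\big)^{\top}\beta^{\mu^{\star},\nu_{0}}$ --- i.e., the very quantity being bounded reappears inside the square root (see \eqref{eq:proof-d-b-2} and \eqref{eq:beta-bound-appendix}). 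The argument is then closed by a self-bounding step: writing $x=\big(d^{\mu^{\star},\nu_{0}}\big)^{\top}\beta^{\mu^{\star},\nu_{0}}$, one arrives at an inequality of the form $x\leq u+w(1+\sqrt{x})$ with $u,w$ the two rates appearing in the claim, and AM--GM ($w\sqrt{x}\leq\frac{1}{2}w^{2}+\frac{1}{2}x$) absorbs the $\sqrt{x}$ and yields the lemma. This fixed-point device is absent from your outline, and without it (or an equivalent mechanism) the proof does not close at the stated rate.
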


To finish up, taking (\ref{eq:main-proof-6}), (\ref{eq:main-proof-5})
and Lemma \ref{lemma:d_b_upper_bound} together gives
\begin{align*}
V^{\star}\left(\rho\right)-V^{\widehat{\mu},\star}\left(\rho\right) & =\rho^{\top}\big(V^{\star}-V^{\widehat{\mu},\star}\big)\leq\rho^{\top}\big(V^{\mu^{\star},\nu_{0}}-V_{\mathsf{pe}}^{-}\big)\leq\frac{2}{1-\gamma}\big(d^{\mu^{\star},\nu_{0}}\big)^{\top}\beta^{\mu^{\star},\nu_{0}}\\
 & \leq2c_{6}\sqrt{\frac{C_{\mathsf{clipped}}^{\star}S\left(A+B\right)}{N\left(1-\gamma\right)^{3}}\log\frac{N}{\delta}}+\frac{2c_{6}C_{\mathsf{clipped}}^{\star}S\left(A+B\right)}{\left(1-\gamma\right)^{2}N}\log\frac{N}{\delta}.
\end{align*}
This has completed the proof for the claim \eqref{eq:V-mustar-gap-bound-thm}.
The proof for the other claim \eqref{eq:V-nustar-gap-bound-thm} follows
from an almost identical argument, and is hence omitted.

\subsection{Discussion: instance-dependent statistical bounds?}  \label{appendix:discuss-instance}
Thus far, we have presented the proof of Theorem~\ref{thm:main} that concerns the minimax optimality of the model-based algorithm. 
Note that a recent line of works have attempted to move beyond minimax-optimal statistical guarantees and pursue more refined instance-optimal (or locally minimax) performance guarantees \citep{khamaru2021instance,khamaru2021temporal,mou2022optimal}.  
Here, we take a moment to discuss the challenges that need to be overcome in order to extend our analysis in an instance-optimal fashion. 
\begin{itemize}
\item A crucial step that allows us to obtain error bounds that scale linearly with $A+B$ instead of the ones that scale linearly with $AB$ in \citet{cui2022offline} is the introduction of an auxiliary policy $\nu_0$ in \eqref{eq:defn-nu0-min}. This allows us to upper bound the error with
\begin{align*}
	\rho^{\top}\big(V^{\mu^{\star},\nu_{0}}-V_{\mathsf{pe}}^{-}\big)  \leq\frac{2}{1-\gamma}\big(d^{\mu^{\star},\nu_{0}}\big)^{\top}\beta^{\mu^{\star},\nu_{0}};
\end{align*} 
see \eqref{eq:main-proof-5}. While this facilitates our analysis, the terms $\beta^{\mu^\star,\nu_0}$ (defined in \eqref{eq:defn-beta-r-P-mu-nu0}) and $d^{\mu^\star,\nu_0}$ (defined in \eqref{eq:defn-d-mu-nu-vector}) both depend on the auxiliary policy $\nu_0$. Due to the complicated dependency between $\nu_0$ (which is determined by a random function $V_{\mathsf{pe}}^{-}$) and the model parameters, it remains quite challenging to connect these error terms with instance-dependent quantities (i.e.~model parameters) without losing optimality.  

\item Note that we might be able to resolve the above issue in a coarse way, e.g., by taking the supremum over all possible policy $\nu$:
\begin{equation}
	\rho^{\top}\big(V^{\mu^{\star},\nu_{0}}-V_{\mathsf{pe}}^{-}\big)  \leq\frac{2}{1-\gamma}\sup_{\nu\in\Delta(\mathcal{B})}\big(d^{\mu^{\star},\nu}\big)^{\top}\beta^{\mu^{\star},\nu}. \label{eq:discuss-instance-dependent}
\end{equation}
Let us assume for the moment that this could work (despite the potential suboptimality of this error bound) and see what this will lead to. By checking the proof of Lemma \ref{lemma:d_b_upper_bound}, we can see that: in order to upper bound \eqref{eq:discuss-instance-dependent} in an instance-optimal manner, it is important to relate $\mathsf{Var}_{P_{s,a,b}}(V_{\mathsf{pe}}^{-})$ to model parameters for all $(s,a,b)\in\mathcal{S}\times\mathcal{A}\times\mathcal{B}$. Ideally, we can replace $V_{\mathsf{pe}}^{-}$ with the value function associated with the Nash equilibrium $V^\star$. However, based on our current analysis framework, we can only show that $V^\star(\rho)$ and $V_{\mathsf{pe}}^{-}(\rho)$ are close, which is insufficient to guarantee the closeness of $\mathsf{Var}_{P_{s,a,b}}(V_{\mathsf{pe}}^{-})$ and $\mathsf{Var}_{P_{s,a,b}}(V^\star)$ for all $(s,a,b)\in\mathcal{S}\times\mathcal{A}\times\mathcal{B}$.
\item As we have briefly mentioned in Remark \ref{remark:instance}, prior literature \citet{khamaru2021instance} that establishes instance optimality for the optimal value estimation problem in single-agent RL under a generative model requires one of the following two conditions: the optimal policy is unique, or a sample complexity bound that depends on an optimality gap
\begin{equation}
\Delta\coloneqq \min_{\pi\in\bm{\Pi}\setminus\bm{\Pi}^\star} \left\Vert Q^\star-r-\gamma P^\pi Q^\star \right\Vert_{\infty}, \label{eq:gap}
\end{equation}
where $Q^\star$ is the optimal Q-function, $\bm{\Pi}$ is the set of deterministic policies, and $\bm{\Pi}^\star$ is the set of optimal (deterministic) policies.
However, neither condition has a direct analog in two-player zero-sum Markov games: for the first one, this is because the Nash equilibrium of a zero-sum Markov game is not unique in general; for the second one, this is because the Nash equilibrium policy pair is usually random, and it is not clear how to define a non-zero optimality gap like \eqref{eq:gap}. 
\end{itemize}
In view of the above challenges, our current analysis framework remains incapable of deriving instance-optimal performance guarantees.  
Accomplishing instance-optimal results for zero-sum Markov games might require substantially more reinfed analysis techniques, 
and we leave this important direction to future investigation.

\section{Auxiliary lemmas for Theorem \ref{thm:main}}

\subsection{Proof for the preliminary facts in Appendix~\ref{sec:proof-outline-prelim}\label{sec:proof-proof-outline}}

We start by proving (\ref{eq:Q-V-bound}). For any $s\in\mathcal{S}$,
suppose that a NE of the matrix game $Q_{1}(s,\cdot,\cdot)$ is given
by $\big(\mu_{1}(s),\nu_{1}(s)\big)$, and that a NE of the matrix
game $Q_{2}(s,\cdot,\cdot)$ is $\big(\mu_{2}(s),\nu_{2}(s)\big)$.
Then we can derive
\begin{align*}
V_{1}\left(s\right)-V_{2}\left(s\right) & =\mathop{\mathbb{E}}\limits _{a\sim\mu_{1}(s),b\sim\nu_{1}(s)}\big[Q_{1}\left(s,a,b\right)\big]-\mathop{\mathbb{E}}\limits _{a\sim\mu_{2}(s),b\sim\nu_{2}(s)}\big[Q_{2}\left(s,a,b\right)\big]\\
 & \leq\mathop{\mathbb{E}}\limits _{a\sim\mu_{1}(s),b\sim\nu_{2}(s)}\big[Q_{1}\left(s,a,b\right)\big]-\mathop{\mathbb{E}}\limits _{a\sim\mu_{1}(s),b\sim\nu_{2}(s)}\big[Q_{2}\left(s,a,b\right)\big]\\
 & =\mathop{\mathbb{E}}\limits _{a\sim\mu_{1}(s),b\sim\nu_{2}(s)}\big[Q_{1}\left(s,a,b\right)-Q_{2}\left(s,a,b\right)\big]\leq\left\Vert Q_{1}-Q_{2}\right\Vert _{\infty},
\end{align*}
and similarly, 
\begin{align*}
V_{1}\left(s\right)-V_{2}\left(s\right) & \geq\mathop{\mathbb{E}}\limits _{a\sim\mu_{2}(s),b\sim\nu_{1}(s)}\big[Q_{1}\left(s,a,b\right)\big]-\mathop{\mathbb{E}}\limits _{a\sim\mu_{2}(s),b\sim\nu_{1}(s)}\big[Q_{2}\left(s,a,b\right)\big]\\
 & =\mathop{\mathbb{E}}\limits _{a\sim\mu_{2}(s),b\sim\nu_{1}(s)}\big[Q_{1}\left(s,a,b\right)-Q_{2}\left(s,a,b\right)\big]\geq-\left\Vert Q_{1}-Q_{2}\right\Vert _{\infty}.
\end{align*}
Taking the above two inequalities collectively yields
\[
\left|V_{1}\left(s\right)-V_{2}\left(s\right)\right|\leq\left\Vert Q_{1}-Q_{2}\right\Vert _{\infty}
\]
for any $s\in\mathcal{S}$, allowing us to conclude that
\[
\left\Vert V_{1}-V_{2}\right\Vert _{\infty}\leq\left\Vert Q_{1}-Q_{2}\right\Vert _{\infty}.
\]

Next, we turn attention to proving (\ref{eq:variance-V-bound}). Towards
this, we observe from \eqref{eq:var-Psab-V-defn} that
\begin{align}
\left|\mathsf{Var}_{P_{s,a,b}}\left(V_{1}\right)-\mathsf{Var}_{P_{s,a,b}}\left(V_{2}\right)\right| & =\left|P_{s,a,b}\left(V_{1}\circ V_{1}\right)-\left(P_{s,a,b}V_{1}\right)^{2}-P_{s,a,b}\left(V_{2}\circ V_{2}\right)+\left(P_{s,a,b}V_{2}\right)^{2}\right|\nonumber \\
 & \leq\left|P_{s,a,b}\left(V_{1}\circ V_{1}-V_{2}\circ V_{2}\right)\right|+\left|\left(P_{s,a,b}V_{1}\right)^{2}-\left(P_{s,a,b}V_{2}\right)^{2}\right|\nonumber \\
 & \leq\left|P_{s,a,b}\left[\left(V_{1}+V_{2}\right)\circ\left(V_{1}-V_{2}\right)\right]\right|+\big|P_{s,a,b}\left(V_{1}-V_{2}\right)\big|\cdot\big|P_{s,a,b}\left(V_{1}+V_{2}\right)\big|\nonumber \\
 & \leq\frac{2}{1-\gamma}\big|P_{s,a,b}\left(V_{1}-V_{2}\right)\big|+\frac{2}{1-\gamma}\big|P_{s,a,b}\left(V_{1}-V_{2}\right)\big|\nonumber \\
 & \leq\frac{4}{1-\gamma}\big\| P_{s,a,b}\big\|_{1}\left\Vert V_{1}-V_{2}\right\Vert _{\infty}=\frac{4}{1-\gamma}\left\Vert V_{1}-V_{2}\right\Vert _{\infty},\label{eq:proof-basic-facts-1}
\end{align}
whereas the validity of the last line is guaranteed since $P_{s,a,b}\in\Delta(\mathcal{S})$.

Finally, we present the proof of (\ref{eq:b-V-bound}). Consider first
the case with
\[
\max\left\{ \mathsf{Var}_{\widehat{P}_{s,a,b}}\left(V_{1}\right),\mathsf{Var}_{\widehat{P}_{s,a,b}}\left(V_{1}\right)\right\} <\frac{4C_{\mathsf{b}}\log\frac{N}{(1-\gamma)\delta}}{\left(1-\gamma\right)^{2}N\left(s,a,b\right)}.
\]
In this case, the penalty terms reduce to (cf.~\eqref{eq:bonus-term-defn})
\[
\beta\left(s,a,b;V_{1}\right)=\beta\left(s,a,b;V_{2}\right)=\min\left\{ \frac{2C_{\mathsf{b}}\log\frac{N}{(1-\gamma)\delta}}{\left(1-\gamma\right)N\left(s,a,b\right)},\frac{1}{1-\gamma}\right\} ,
\]
and as a result, 
\[
\left|\beta\left(s,a,b;V_{1}\right)-\beta\left(s,a,b;V_{2}\right)\right|=0\leq2\left\Vert V_{1}-V_{2}\right\Vert _{\infty}.
\]
In contrast, in the other case where
\begin{equation}
\max\left\{ \mathsf{Var}_{\widehat{P}_{s,a,b}}\left(V_{1}\right),\mathsf{Var}_{\widehat{P}_{s,a,b}}\left(V_{1}\right)\right\} \geq\frac{4C_{\mathsf{b}}\log\frac{N}{(1-\gamma)\delta}}{\left(1-\gamma\right)^{2}N\left(s,a,b\right)},\label{eq:proof-basic-facts-2}
\end{equation}
we can obtain
\begin{align*}
\left|\beta\left(s,a,b;V_{1}\right)-\beta\left(s,a,b;V_{2}\right)\right| & \leq\sqrt{\frac{C_{\mathsf{b}}\log\frac{N}{(1-\gamma)\delta}}{N\left(s,a,b\right)}}\left(\sqrt{\mathsf{Var}_{\widehat{P}_{s,a,b}}\left(V_{1}\right)}-\sqrt{\mathsf{Var}_{\widehat{P}_{s,a,b}}\left(V_{2}\right)}\right)\\
 & =\sqrt{\frac{C_{\mathsf{b}}\log\frac{N}{(1-\gamma)\delta}}{N\left(s,a,b\right)}}\frac{\mathsf{Var}_{\widehat{P}_{s,a,b}}\left(V_{1}\right)-\mathsf{Var}_{\widehat{P}_{s,a,b}}\left(V_{2}\right)}{\sqrt{\mathsf{Var}_{\widehat{P}_{s,a,b}}\left(V_{1}\right)}+\sqrt{\mathsf{Var}_{\widehat{P}_{s,a,b}}\left(V_{2}\right)}}\\
 & \leq\frac{1-\gamma}{2}\left[\mathsf{Var}_{\widehat{P}_{s,a,b}}\left(V_{1}\right)-\mathsf{Var}_{\widehat{P}_{s,a,b}}\left(V_{2}\right)\right]\\
 & \leq2\left\Vert V_{1}-V_{2}\right\Vert _{\infty}.
\end{align*}
Here, the penultimate relation follows from (\ref{eq:proof-basic-facts-2}),
while the last line takes advantage of (\ref{eq:proof-basic-facts-1}).
The two cases taken together establish (\ref{eq:b-V-bound}). 

\subsection{Proof of Lemma \ref{lemma:gamma-contraction}\label{sec:proof-lemma-gamma-contraction}}

For convenience of presentation, let us define\begin{subequations}
\begin{align}
\widetilde{\mathcal{T}}_{\mathsf{pe}}^{+}\left(Q\right)\left(s,a,b\right) & \coloneqq\widehat{r}\left(s,a,b\right)+\gamma\widehat{P}_{s,a,b}V+\beta\left(s,a,b;V\right),\label{eq:defn-tilde-P-pe-plus}\\
\widetilde{\mathcal{T}}_{\mathsf{pe}}^{-}\left(Q\right)\left(s,a,b\right) & \coloneqq\widehat{r}\left(s,a,b\right)+\gamma\widehat{P}_{s,a,b}V-\beta\left(s,a,b;V\right),\label{eq:defn-tilde-P-pe-minus}
\end{align}
\end{subequations}where $V$ is the value function associated to
$Q$ (see (\ref{eq:V-defn-Bellman}) for the definition). It is readily
seen that\begin{subequations}\label{eq:T-widehat-tilde-relation}
\begin{align}
\widehat{\mathcal{T}}_{\mathsf{pe}}^{+}\left(Q\right)\left(s,a,b\right) & =\min\left\{ \widetilde{\mathcal{T}}_{\mathsf{pe}}^{+}\left(Q\right)\left(s,a,b\right),\frac{1}{1-\gamma}\right\} ,\\
\widehat{\mathcal{T}}_{\mathsf{pe}}^{-}\left(Q\right)\left(s,a,b\right) & =\max\left\{ \widetilde{\mathcal{T}}_{\mathsf{pe}}^{-}\left(Q\right)\left(s,a,b\right),0\right\} .
\end{align}
\end{subequations}

\paragraph{Property 1: monotonicity.}

In view of (\ref{eq:T-widehat-tilde-relation}), it suffices to show
that $\widetilde{\mathcal{T}}_{\mathsf{pe}}^{+}$ and $\widetilde{\mathcal{T}}_{\mathsf{pe}}^{-}$
are monotone. For any $Q_{1}$ and $Q_{2}$ satisfying $0\leq Q_{1}\leq Q_{2}\leq\frac{1}{1-\gamma}1$,
we denote by $V_{1}$ (resp.~$V_{2}$) the value function corresponds
to $Q_{1}$ (resp.~$Q_{2}$); see (\ref{eq:V-defn-Bellman}) for
the precise definition. It is straightforward to check that for all
$s\in\mathcal{S}$, one has $0\leq V_{1},V_{2}\leq\frac{1}{1-\gamma}1$
and
\begin{align*}
V_{1}\left(s\right) & =\max_{\mu(s)\in\Delta(\mathcal{A})}\min_{\nu(s)\in\Delta(\mathcal{B})}\mathop{\mathbb{E}}\limits _{a\sim\mu(s),b\sim\nu(s)}\big[Q_{1}\left(s,a,b\right)\big]\\
 & \geq\max_{\mu(s)\in\Delta(\mathcal{A})}\min_{\nu(s)\in\Delta(\mathcal{B})}\mathop{\mathbb{E}}\limits _{a\sim\mu(s),b\sim\nu(s)}\big[Q_{2}\left(s,a,b\right)\big]=V_{2}\left(s\right).
\end{align*}
For any $(s,a,b)\in\mathcal{S}\times\mathcal{A}\times\mathcal{B}$,
define two functions $f^{+},f^{-}:\mathbb{R}^{S}\to\mathbb{R}$ as
\begin{align*}
f^{+}\left(V\right) & \coloneqq\widehat{r}\left(s,a,b\right)+\gamma\widehat{P}_{s,a,b}V+\beta\left(s,a,b;V\right)\\
 & =\widehat{r}\left(s,a,b\right)+\gamma\widehat{P}_{s,a,b}V+\min\left\{ \max\left\{ \sqrt{\frac{C_{\mathsf{b}}\log\frac{N}{\delta}}{N\left(s,a,b\right)}\mathsf{Var}_{\widehat{P}_{s,a,b}}\left(V\right)},\frac{2C_{\mathsf{b}}\log\frac{N}{\delta}}{\left(1-\gamma\right)N\left(s,a,b\right)}\right\} ,\frac{1}{1-\gamma}\right\} +\frac{4}{N};\\
f^{-}\left(V\right) & \coloneqq\widehat{r}\left(s,a,b\right)+\gamma\widehat{P}_{s,a,b}V-\beta\left(s,a,b;V\right)\\
 & =\widehat{r}\left(s,a,b\right)+\gamma\widehat{P}_{s,a,b}V-\min\left\{ \max\left\{ \sqrt{\frac{C_{\mathsf{b}}\log\frac{N}{\delta}}{N\left(s,a,b\right)}\mathsf{Var}_{\widehat{P}_{s,a,b}}\left(V\right)},\frac{2C_{\mathsf{b}}\log\frac{N}{\delta}}{\left(1-\gamma\right)N\left(s,a,b\right)}\right\} ,\frac{1}{1-\gamma}\right\} -\frac{4}{N}.
\end{align*}
Recognizing that $\widetilde{\mathcal{T}}_{\mathsf{pe}}^{-}(Q)(s,a,b)=f^{-}(V)$
and $\widetilde{\mathcal{T}}_{\mathsf{pe}}^{+}(Q)(s,a,b)=f^{+}(V)$,
we only need to prove that $f^{+}$ and $f^{-}$ are monotone in $V$,
namely, for any $V_{1}\geq V_{2}$, we have $f^{+}(V_{1})\geq f^{+}(V_{2})$
and $f^{-}(V_{1})\geq f^{-}(V_{2})$. Since both $f^{+}$ and $f^{-}$
are continuous, it suffices to check that $\nabla f^{+}(V)\geq0$
and $\nabla f^{-}(V)\geq0$ hold almost everywhere. 
\begin{itemize}
\item We first consider the case where
\[
\mathsf{Var}_{\widehat{P}_{s,a,b}}\left(V\right)<\frac{4C_{\mathsf{b}}\log\frac{N}{\delta}}{\left(1-\gamma\right)^{2}N\left(s,a,b\right)}.
\]
In this case, it is seen that
\begin{align*}
f^{+}\left(V\right) & =\widehat{r}\left(s,a,b\right)+\gamma\widehat{P}_{s,a,b}V+\min\left\{ \frac{2C_{\mathsf{b}}\log\frac{N}{\delta}}{\left(1-\gamma\right)N\left(s,a,b\right)},\frac{1}{1-\gamma}\right\} +\frac{4}{N},\\
f^{-}\left(V\right) & =\widehat{r}\left(s,a,b\right)+\gamma\widehat{P}_{s,a,b}V-\min\left\{ \frac{2C_{\mathsf{b}}\log\frac{N}{\delta}}{\left(1-\gamma\right)N\left(s,a,b\right)},\frac{1}{1-\gamma}\right\} -\frac{4}{N},
\end{align*}
which immediately gives
\[
\nabla f^{+}\left(V\right)=\nabla f^{-}\left(V\right)=\gamma\widehat{P}_{s,a,b}^{\top}\geq0.
\]
\item Next, we look at the case where
\begin{equation}
\frac{4C_{\mathsf{b}}\log\frac{N}{\delta}}{\left(1-\gamma\right)^{2}N\left(s,a,b\right)}<\mathsf{Var}_{\widehat{P}_{s,a,b}}\left(V\right)<\frac{N\left(s,a,b\right)}{C_{\mathsf{b}}\left(1-\gamma\right)^{2}\log\frac{N}{\delta}}.\label{eq:proof-contraction-6}
\end{equation}
In this case, we can easily derive
\begin{align*}
f^{+}\left(V\right) & =\widehat{r}\left(s,a,b\right)+\gamma\widehat{P}_{s,a,b}V+\sqrt{\frac{C_{\mathsf{b}}\log\frac{N}{\delta}}{N\left(s,a,b\right)}\mathsf{Var}_{\widehat{P}_{s,a,b}}\left(V\right)}+\frac{4}{N}\\
 & =\widehat{r}\left(s,a,b\right)+\gamma\widehat{P}_{s,a,b}V+\sqrt{\frac{C_{\mathsf{b}}\log\frac{N}{\delta}}{N\left(s,a,b\right)}\left[\widehat{P}_{s,a,b}\left(V\circ V\right)-\left(\widehat{P}_{s,a,b}V\right)^{2}\right]}+\frac{4}{N},
\end{align*}
and similarly,
\[
f^{-}\left(V\right)=\widehat{r}\left(s,a,b\right)+\gamma\widehat{P}_{s,a,b}V-\sqrt{\frac{C_{\mathsf{b}}\log\frac{N}{\delta}}{N\left(s,a,b\right)}\left[\widehat{P}_{s,a,b}\left(V\circ V\right)-\left(\widehat{P}_{s,a,b}V\right)^{2}\right]}-\frac{4}{N}.
\]
Recognizing that (\ref{eq:proof-contraction-6}) guarantees
\[
\mathsf{Var}_{\widehat{P}_{s,a,b}}\left(V\right)=\widehat{P}_{s,a,b}\left(V\circ V\right)-\big(\widehat{P}_{s,a,b}V\big)^{2}>0,
\]
we can invoke a little linear algebra to show that
\begin{align*}
\nabla f^{+}\left(V\right) & =\gamma\widehat{P}_{s,a,b}^{\top}+\sqrt{\frac{C_{\mathsf{b}}\log\frac{N}{\delta}}{N\left(s,a,b\right)}}\frac{\widehat{P}_{s,a,b}^{\top}\circ V-\big(\widehat{P}_{s,a,b}V\big)\widehat{P}_{s,a,b}^{\top}}{\sqrt{\mathsf{Var}_{\widehat{P}_{s,a,b}}\left(V\right)}}\\
 & \overset{\text{(i)}}{\geq}\gamma\widehat{P}_{s,a,b}^{\top}-\frac{1}{2\left(1-\gamma\right)}\big(\widehat{P}_{s,a,b}V\big)\widehat{P}_{s,a,b}^{\top}\overset{\text{(ii)}}{\geq}\gamma\widehat{P}_{s,a,b}^{\top}-\frac{1}{2}\widehat{P}_{s,a,b}^{\top}\overset{\text{(iii)}}{\geq}0,
\end{align*}
where (i) follows from (\ref{eq:proof-contraction-6}), (ii) holds
since $\big|\widehat{P}_{s,a,b}V\big|\leq\|V\|_{\infty}\leq\frac{1}{1-\gamma}$,
and (iii) is valid as long as $\gamma\geq1/2$. Similarly, we can
also deduce that
\begin{align*}
\nabla f^{-}\left(V\right) & =\gamma\widehat{P}_{s,a,b}^{\top}-\sqrt{\frac{C_{\mathsf{b}}\log\frac{N}{\delta}}{N\left(s,a,b\right)}}\frac{\widehat{P}_{s,a,b}^{\top}\circ V-\big(\widehat{P}_{s,a,b}V\big)\widehat{P}_{s,a,b}^{\top}}{\sqrt{\mathsf{Var}_{\widehat{P}_{s,a,b}}\left(V\right)}}\\
 & \overset{\text{(iv)}}{\geq}\gamma\widehat{P}_{s,a,b}^{\top}-\frac{1}{2\left(1-\gamma\right)}\widehat{P}_{s,a,b}^{\top}\circ V\overset{\text{(v)}}{\geq}\gamma\widehat{P}_{s,a,b}^{\top}-\frac{1}{2}\widehat{P}_{s,a,b}^{\top}\overset{\text{(vi)}}{\geq}0,
\end{align*}
where (iv) arises from (\ref{eq:proof-contraction-6}), (v) holds
due to $\|V\|_{\infty}\leq\frac{1}{1-\gamma}$, and (vi) is guaranteed
to hold when $\gamma\geq1/2$. 
\item Lastly, we are left with the case with
\[
\mathsf{Var}_{\widehat{P}_{s,a,b}}\left(V\right)>\frac{N\left(s,a,b\right)}{C_{\mathsf{b}}\left(1-\gamma\right)^{2}\log\frac{N}{\delta}},
\]
which necessarily satisfies
\begin{align*}
f^{+}\left(V\right) & =\widehat{r}\left(s,a,b\right)+\gamma\widehat{P}_{s,a,b}V+\frac{1}{1-\gamma}+\frac{4}{N},\\
f^{-}\left(V\right) & =\widehat{r}\left(s,a,b\right)+\gamma\widehat{P}_{s,a,b}V-\frac{1}{1-\gamma}-\frac{4}{N}.
\end{align*}
This immediately gives
\[
\nabla f^{+}\left(V\right)=\nabla f^{-}\left(V\right)=\gamma\widehat{P}_{s,a,b}^{\top}\geq0.
\]
\end{itemize}
Thus far, we have verified that $\nabla f^{+}(V)\geq0$ and $\nabla f^{-}(V)\geq0$
hold almost everywhere, except when $V$ is in a Legesgue zero-measure
set
\[
\left\{ V\in\mathbb{R}^{S}:\mathsf{Var}_{\widehat{P}_{s,a,b}}\left(V\right)=\frac{N\left(s,a,b\right)}{C_{\mathsf{b}}\left(1-\gamma\right)^{2}\log\frac{N}{\delta}}\quad\text{or}\quad\mathsf{Var}_{\widehat{P}_{s,a,b}}\left(V\right)=\frac{4C_{\mathsf{b}}\log\frac{N}{\delta}}{\left(1-\gamma\right)^{2}N\left(s,a,b\right)}\right\} .
\]
This in turn completes the proof of the claimed monotonicity property. 

\paragraph{Property 2: $\gamma$-contraction.}

We shall only prove the $\gamma$-contraction property for $\widehat{\mathcal{T}}_{\mathsf{pe}}^{+}$;
the proof w.r.t.~$\widehat{\mathcal{T}}_{\mathsf{pe}}^{-}$ follows
from an analogous argument and is hence omitted. In view of the identities
(\ref{eq:T-widehat-tilde-relation}), it is straightforward to check
that
\[
\left\Vert \widehat{\mathcal{T}}_{\mathsf{pe}}^{+}\left(Q_{1}\right)-\widehat{\mathcal{T}}_{\mathsf{pe}}^{+}\left(Q_{2}\right)\right\Vert _{\infty}\leq\left\Vert \widetilde{\mathcal{T}}_{\mathsf{pe}}^{+}\left(Q_{1}\right)-\widetilde{\mathcal{T}}_{\mathsf{pe}}^{+}\left(Q_{2}\right)\right\Vert _{\infty}
\]
for any $Q_{1},Q_{2}:\mathcal{S}\times\mathcal{A}\times\mathcal{B}\to\big[0,\frac{1}{1-\gamma}\big]$.
Therefore, it suffices to show that 
\begin{equation}
\left\Vert \widetilde{\mathcal{T}}_{\mathsf{pe}}^{+}\left(Q_{1}\right)-\widetilde{\mathcal{T}}_{\mathsf{pe}}^{+}\left(Q_{2}\right)\right\Vert _{\infty}\leq\gamma\left\Vert Q_{1}-Q_{2}\right\Vert _{\infty}.\label{eq:proof-contraction-1}
\end{equation}
Recalling that we have already shown above that $\widetilde{\mathcal{T}}_{\mathsf{pe}}^{+}$
is monotone, namely,
\begin{equation}
\widetilde{\mathcal{T}}_{\mathsf{pe}}^{+}\left(Q_{1}\right)\geq\widetilde{\mathcal{T}}_{\mathsf{pe}}^{+}\left(Q_{2}\right),\qquad\forall\,Q_{1}\geq Q_{2},\label{eq:proof-contraction-2}
\end{equation}
we can immediately apply it and the triangle inequality to obtain
\begin{align}
\widetilde{\mathcal{T}}_{\mathsf{pe}}^{+}\left(Q_{1}\right)-\widetilde{\mathcal{T}}_{\mathsf{pe}}^{+}\left(Q_{2}\right) & \leq\widetilde{\mathcal{T}}_{\mathsf{pe}}^{+}\big(Q_{2}+\left\Vert Q_{1}-Q_{2}\right\Vert _{\infty}1\big)-\widetilde{\mathcal{T}}_{\mathsf{pe}}^{+}\left(Q_{2}\right).\label{eq:proof-contraction-3}
\end{align}
Letting $V_{2}$ be the value function corresponding to $Q_{2}$,
we can straightforwardly check that the value function $V^{\mathsf{upper}}$
associated with the vector $Q^{\mathsf{upper}}\coloneqq Q_{2}+\Vert Q_{1}-Q_{2}\Vert_{\infty}1$
(cf.~\eqref{eq:V-defn-Bellman}) is given by
\begin{align*}
V^{\mathsf{upper}}(s) & =\max_{\mu(s)\in\Delta(\mathcal{A})}\min_{\nu(s)\in\Delta(\mathcal{B})}\mathop{\mathbb{E}}\limits _{a\sim\mu(s),b\sim\nu(s)}\big[Q_{2}\left(s,a,b\right)+\left\Vert Q_{1}-Q_{2}\right\Vert _{\infty}1\big]\\
 & =\left\Vert Q_{1}-Q_{2}\right\Vert _{\infty}+\max_{\mu(s)\in\Delta(\mathcal{A})}\min_{\nu(s)\in\Delta(\mathcal{B})}\mathop{\mathbb{E}}\limits _{a\sim\mu(s),b\sim\nu(s)}\left[Q_{2}\left(s,a,b\right)\right]\\
 & =V_{2}\left(s\right)+\left\Vert Q_{1}-Q_{2}\right\Vert _{\infty}.
\end{align*}
Additionally, we can further rewrite the penalty term as
\begin{align*}
\beta\left(s,a,b;V_{2}\right) & =\min\left\{ \max\left\{ \sqrt{\frac{C_{\mathsf{b}}\log\frac{N}{(1-\gamma)\delta}}{N\left(s,a,b\right)}\mathsf{Var}_{\widehat{P}_{s,a,b}}\left(V_{2}\right)},\frac{2C_{\mathsf{b}}\log\frac{N}{\delta}}{\left(1-\gamma\right)N\left(s,a,b\right)}\right\} ,\frac{1}{1-\gamma}\right\} +\frac{5}{N}\\
 & =\min\left\{ \max\left\{ \sqrt{\frac{C_{\mathsf{b}}\log\frac{N}{(1-\gamma)\delta}}{N\left(s,a,b\right)}\mathsf{Var}_{\widehat{P}_{s,a,b}}\left(V_{2}+\left\Vert Q_{1}-Q_{2}\right\Vert _{\infty}1\right)},\frac{2C_{\mathsf{b}}\log\frac{N}{\delta}}{\left(1-\gamma\right)N\left(s,a,b\right)}\right\} ,\frac{1}{1-\gamma}\right\} +\frac{5}{N}\\
 & =\beta\left(s,a,b;V_{2}+\left\Vert Q_{1}-Q_{2}\right\Vert _{\infty}1\right),
\end{align*}
which combined with \eqref{eq:defn-tilde-P-pe-plus} gives
\begin{equation}
\left\Vert \widetilde{\mathcal{T}}_{\mathsf{pe}}^{+}\left(Q_{2}+\left\Vert Q_{1}-Q_{2}\right\Vert _{\infty}1\right)-\widetilde{\mathcal{T}}_{\mathsf{pe}}^{+}\left(Q_{2}\right)\right\Vert _{\infty}=\gamma\left\Vert \widehat{P}_{s,a,b}\left(\left\Vert Q_{1}-Q_{2}\right\Vert _{\infty}1\right)\right\Vert _{\infty}=\gamma\left\Vert Q_{1}-Q_{2}\right\Vert _{\infty}.\label{eq:proof-contraction-4}
\end{equation}
Taking (\ref{eq:proof-contraction-3}) and (\ref{eq:proof-contraction-4})
collectively yields
\begin{equation}
\widetilde{\mathcal{T}}_{\mathsf{pe}}^{+}\left(Q_{1}\right)-\widetilde{\mathcal{T}}_{\mathsf{pe}}^{+}\left(Q_{2}\right)\leq\gamma\left\Vert Q_{1}-Q_{2}\right\Vert _{\infty}1.\label{eq:proof-contraction-5}
\end{equation}
Similarly, we can also show that
\[
\widetilde{\mathcal{T}}_{\mathsf{pe}}^{+}\left(Q_{1}\right)-\widetilde{\mathcal{T}}_{\mathsf{pe}}^{+}\left(Q_{2}\right)\geq\widetilde{\mathcal{T}}_{\mathsf{pe}}^{+}\left(Q_{2}-\left\Vert Q_{1}-Q_{2}\right\Vert _{\infty}1\right)-\widetilde{\mathcal{T}}_{\mathsf{pe}}^{+}\left(Q_{2}\right)\geq-\gamma\left\Vert Q_{1}-Q_{2}\right\Vert _{\infty}1.
\]
Combine (\ref{eq:proof-contraction-4}) and (\ref{eq:proof-contraction-5})
to establish (\ref{eq:proof-contraction-1}).

\paragraph{Property 3: existence and uniqueness of the fixed point.}

For any $0\leq Q\leq\frac{1}{1-\gamma}1$, we know that its associated
value function $V$ (cf.~\eqref{eq:V-defn-Bellman}) satisfies $0\leq V(s)\leq\frac{1}{1-\gamma}$
for any $s\in\mathcal{S}$. Therefore, it is seen that
\begin{align*}
\widetilde{\mathcal{T}}_{\mathsf{pe}}^{+}\left(Q\right)(s,a,b) & \geq\widehat{r}\left(s,a,b\right)+\gamma\widehat{P}_{s,a,b}V\geq0,\\
\widetilde{\mathcal{T}}_{\mathsf{pe}}^{-}\left(Q\right)(s,a,b) & \leq\widehat{r}\left(s,a,b\right)+\gamma\widehat{P}_{s,a,b}V\leq1+\frac{\gamma}{1-\gamma}\leq\frac{1}{1-\gamma}.
\end{align*}
Combining the above relations with (\ref{eq:T-widehat-tilde-relation})
gives
\[
0\leq\widehat{\mathcal{T}}_{\mathsf{pe}}^{+}(Q)(s,a,b)\leq\frac{1}{1-\gamma}\qquad\text{and}\qquad0\leq\widehat{\mathcal{T}}_{\mathsf{pe}}^{-}(Q)(s,a,b)\leq\frac{1}{1-\gamma}.
\]
This together with the $\gamma$-contraction property indicates that
$\widehat{\mathcal{T}}_{\mathsf{pe}}^{+}$ and $\widehat{\mathcal{T}}_{\mathsf{pe}}^{-}$
are both contraction mappings on the complete metric space $\big(\big[0,\frac{1}{1-\gamma}\big]^{SAB},\Vert\cdot\Vert_{\infty}\big)$.
In view of the Banach fixed point theorem (see, e.g., \citet[Theorem 3.1]{ciesielski2007stefan}),
both operators admit unique fixed points, which we shall denote by
$Q_{\mathsf{pe}}^{+\star}$ and $Q_{\mathsf{pe}}^{-\star}$ respectively.
Clearly, it holds that $0\leq Q_{\mathsf{pe}}^{+\star}\leq\frac{1}{1-\gamma}1$
and $0\leq Q_{\mathsf{pe}}^{-\star}\leq\frac{1}{1-\gamma}1$. 

\subsection{Proof of Lemma \ref{lemma:loo}\label{sec:proof-lemma-loo}}

The proof of Lemma \ref{lemma:loo} is motivated by the leave-one-out
analysis framework \citep{li2020breaking,agarwal2020model,li2022settling,cui2021minimax,pananjady2020instance}
previously proposed for single-agent MDPs. Here, we only prove the
results (\ref{eq:Bernstein-type-lemma-loo}) and (\ref{eq:Var-Phat-P-diff})
for all for all $\widetilde{V}\in\mathbb{R}^{S}$ satisfying $0\leq\widetilde{V}\leq\frac{1}{1-\gamma}1$
and $\Vert\widetilde{V}-V_{\mathsf{pe}}^{-\star}\Vert_{\infty}\leq1/N$;
for all $\widetilde{V}\in\mathbb{R}^{S}$ satisfying $0\leq\widetilde{V}\leq\frac{1}{1-\gamma}1$
and $\Vert\widetilde{V}-V_{\mathsf{pe}}^{+\star}\Vert_{\infty}\leq1/N$,
the results follow from an analogous argument and hence we omit the
proof for the sake of brevity.

\paragraph{Step 1: constructing auxiliary Markov Games.}

We first construct a set of auxiliary Markov games that facilitates
analysis. For any $s_{0}\in\mathcal{S}$ and any $u>0$, we define
a two-player zero-sum Markov game $\mathcal{MG}^{s_{0},u}=(\mathcal{S},\mathcal{A},\mathcal{B},P^{s_{0},u},r^{s_{0},u},\gamma)$
such that: 
\begin{itemize}
\item the transition kernel $P^{s_{0},u}:\mathcal{S}\times\mathcal{A}\times\mathcal{B}\rightarrow\Delta(\mathcal{S})$
is defined such that \begin{subequations}\label{eq:proof-loo-P}
\begin{align}
P^{s_{0},u}\left(s'\mymid s_{0},a,b\right) & =\ind\{s'=s_{0}\},\qquad\qquad\qquad\text{for all }\left(s',a,b\right)\in\mathcal{S}\times\mathcal{A}\times\mathcal{B},\\
P^{s_{0},u}\left(\cdot\mymid s,a,b\right) & =\widehat{P}\left(\cdot\mymid s,a,b\right),\qquad\,\,\;\quad\quad\quad\text{for all }\left(s,a,b\right)\in\mathcal{S}\times\mathcal{A}\times\mathcal{B}\text{ with }s\neq s_{0};
\end{align}
\end{subequations}we shall adopt the abbreviation $P_{s,a,b}^{s_{0},u}=P^{s_{0},u}(\cdot\mymid s,a,b)$
throughout this section; 
\item the reward function $r^{s_{0},u}$ is defined such that\begin{subequations}\label{eq:proof-loo-r}
\begin{align}
r^{s_{0},u}\left(s_{0},a,b\right) & =u+\min\left\{ \frac{2C_{\mathsf{b}}\log\frac{N}{\delta}}{\left(1-\gamma\right)N\left(s_{0},a,b\right)},\frac{1}{1-\gamma}\right\} ,\qquad\text{for all }\left(a,b\right)\in\mathcal{A}\times\mathcal{B},\\
r^{s_{0},u}\left(s,a,b\right) & =\widehat{r}\left(s,a,b\right),\qquad\qquad\qquad\quad\quad\quad\quad\;\;\text{for all }\left(s,a,b\right)\in\mathcal{S}\times\mathcal{A}\times\mathcal{B},\text{ and }s\neq s_{0}.
\end{align}
\end{subequations}
\end{itemize}
Given that $P^{s_{0},u}$ is constructed without using any sample
transition from $s_{0}$, the auxiliary MG $\mathcal{MG}^{s_{0},u}$
is statistically independent from any randomness resulting from transition
from state $s_{0}$, a key observation that plays a central role in
decoupling the statistical dependency. In addition, let us introduce
the corresponding pessimistic operators $\widehat{\mathcal{T}}_{\mathsf{pe}}^{-,s_{0},u}:\mathbb{R}^{SAB}\to\mathbb{R}^{SAB}$
for the max-player such that: for any $Q:\mathcal{S}\times\mathcal{A}\times\mathcal{B}\to\mathbb{R}$
and any $(s,a,b)\in\mathcal{S}\times\mathcal{A}\times\mathcal{B}$,
\begin{align}
\widehat{\mathcal{T}}_{\mathsf{pe}}^{-,s_{0},u}\left(Q\right)\left(s,a,b\right) & \coloneqq\max\left\{ r^{s_{0},u}\left(s,a,b\right)+\gamma P_{s,a,b}^{s_{0},u}V-\beta^{s_{0},u}\left(s,a,b;V\right),0\right\} .\label{eq:proof-loo-T}
\end{align}
Here, $V:\mathcal{S}\to\mathbb{R}$ is the value function associated
with the vector $Q$ (see (\ref{eq:V-defn-Bellman}) for the definition),
whereas
\begin{equation}
\beta^{s_{0},u}\left(s,a,b;V\right)\coloneqq\min\left\{ \max\left\{ \sqrt{\frac{C_{\mathsf{b}}\log\frac{N}{\delta}}{N\left(s,a,b\right)}\mathsf{Var}_{P_{s,a,b}^{s_{0},u}}\left(V\right)},\frac{2C_{\mathsf{b}}\log\frac{N}{\delta}}{\left(1-\gamma\right)N\left(s,a,b\right)}\right\} ,\frac{1}{1-\gamma}\right\} +\frac{4}{N}.\label{eq:proof-loo-0}
\end{equation}

\paragraph{Step 2: identifying auxiliary MGs related to the empirical MG $\widehat{\mathcal{MG}}$.}

With the above construction in place, we will show in this step that
by taking
\begin{align*}
u=u^{\star} & \coloneqq\left(1-\gamma\right)V_{\mathsf{pe}}^{-\star}\left(s_{0}\right)+\frac{4}{N},
\end{align*}
there exists a fixed point $Q_{s_{0},u^{\star}}^{-\star}$ of $\widehat{\mathcal{T}}_{\mathsf{pe}}^{-,s_{0},u^{\star}}$
whose corresponding value function $V_{s_{0},u^{\star}}^{-\star}$
coincides with $V_{\mathsf{pe}}^{-\star}$. To see why this is true,
define
\[
Q_{s_{0},u^{\star}}^{-\star}\left(s,a,b\right)\coloneqq\begin{cases}
V_{\mathsf{pe}}^{-\star}\left(s_{0}\right), & \text{for }s=s_{0}\text{ and all }\left(a,b\right)\in\mathcal{A}\times\mathcal{B}\\
Q_{\mathsf{pe}}^{-\star}\left(s,a,b\right),\quad & \text{for all }\left(s,a,b\right)\in\mathcal{S}\times\mathcal{A}\times\mathcal{B}\text{ with }s\neq s_{0}.
\end{cases}
\]
It is straightforward to check that the value function associated
with $Q_{s_{0},u^{\star}}^{-\star}$ is precisely $V_{\mathsf{pe}}^{-\star}$.
Additionally, let us calculate $\widehat{\mathcal{T}}_{\mathsf{pe}}^{-,s_{0},u^{\star}}\left(Q_{\mathsf{pe}}^{-\star}\right)$. 
\begin{itemize}
\item For state $s_{0}$ and any action pair $(a,b)\in\mathcal{A}\times\mathcal{B}$,
it holds that
\begin{align}
 & \widehat{\mathcal{T}}_{\mathsf{pe}}^{-,s_{0},u^{\star}}\left(Q_{\mathsf{pe}}^{-\star}\right)\left(s_{0},a,b\right)=\max\left\{ r^{s_{0},u^{\star}}\left(s_{0},a,b\right)+\gamma P_{s_{0},a,b}^{s_{0},u^{\star}}V_{\mathsf{pe}}^{-\star}-\beta^{s_{0},u^{\star}}\left(s_{0},a,b;V_{\mathsf{pe}}^{-\star}\right),0\right\} \nonumber \\
 & \quad=\max\left\{ u^{\star}+\min\left\{ \frac{2C_{\mathsf{b}}\log\frac{N}{\delta}}{\left(1-\gamma\right)N\left(s,a,b\right)},\frac{1}{1-\gamma}\right\} +\gamma V_{\mathsf{pe}}^{-\star}\left(s_{0}\right)-\min\left\{ \frac{2C_{\mathsf{b}}\log\frac{N}{\delta}}{\left(1-\gamma\right)N\left(s,a,b\right)},\frac{1}{1-\gamma}\right\} -\frac{4}{N},0\right\} \nonumber \\
 & \quad=\max\left\{ \left(1-\gamma\right)V_{\mathsf{pe}}^{-\star}\left(s_{0}\right)+\frac{4}{N}+\gamma V_{\mathsf{pe}}^{-\star}\left(s_{0}\right)-\frac{4}{N},0\right\} =V_{\mathsf{pe}}^{-\star}\left(s_{0}\right)=Q_{\mathsf{pe}}^{-\star}\left(s_{0},a,b\right).\label{eq:proof-loo-1}
\end{align}
To see why the second equality holds, note that $P^{s_{0},u}(s'\mymid s_{0},a,b)=\ind\{s'=s_{0}\}$,
and therefore $P_{s_{0},a,b}^{s_{0},u^{\star}}V_{\mathsf{pe}}^{-\star}=V_{\mathsf{pe}}^{-\star}\left(s_{0}\right)$
and $\mathsf{Var}_{P_{s_{0},a,b}^{s_{0},u^{\star}}}(V_{\mathsf{pe}}^{-\star})=0$,
which in turn allows us to simplify the expression of $\beta^{s_{0},u^{\star}}\left(s_{0},a,b;V_{\mathsf{pe}}^{-\star}\right)$
in \eqref{eq:proof-loo-0} to $\min\left\{ \frac{2C_{\mathsf{b}}\log\frac{N}{\delta}}{\left(1-\gamma\right)N\left(s,a,b\right)},\frac{1}{1-\gamma}\right\} +\frac{4}{N}$. 
\item For any state $s\neq s_{0}$ and any action pairs $(a,b)\in\mathcal{A}\times\mathcal{B}$,
we have
\begin{align}
\widehat{\mathcal{T}}_{\mathsf{pe}}^{-,s_{0},u^{\star}}\left(Q_{\mathsf{pe}}^{-\star}\right)\left(s,a,b\right) & =\max\left\{ r^{s_{0},u^{\star}}\left(s,a,b\right)+\gamma P_{s,a,b}^{s_{0},u^{\star}}V_{\mathsf{pe}}^{-\star}-\beta^{s_{0},u^{\star}}\left(s,a,b;V_{\mathsf{pe}}^{-\star}\right),0\right\} \nonumber \\
 & \overset{\text{(i)}}{=}\max\left\{ \widehat{r}\left(s,a,b\right)+\gamma\widehat{P}_{s,a,b}V_{\mathsf{pe}}^{-\star}-\beta\left(s,a,b;V_{\mathsf{pe}}^{-\star}\right),0\right\} \nonumber \\
 & =\widetilde{\mathcal{T}}_{\mathsf{pe}}^{-}\left(Q_{\mathsf{pe}}^{-\star}\right)\left(s,a,b\right)\overset{\text{(ii)}}{=}Q_{\mathsf{pe}}^{-\star}\left(s,a,b\right).\label{eq:proof-loo-2}
\end{align}
Here, (i) holds since $r^{s_{0},u^{\star}}(s,a,b)=\widehat{r}(s,a,b)$
and $P^{s_{0},u}(\cdot\mymid s,a,b)=\widehat{P}_{s,a,b}(\cdot\mymid s,a,b)$
whenever $s\neq s_{0}$; (ii) follows from the fact that $Q_{\mathsf{pe}}^{-\star}$
is the fixed point of $\widehat{\mathcal{T}}_{\mathsf{pe}}^{-,s_{0},u^{\star}}$. 
\end{itemize}
Taking (\ref{eq:proof-loo-1}) and (\ref{eq:proof-loo-2}) collectively
confirms that $Q_{s_{0},u^{\star}}^{-\star}$ is a fixed point of
$\widehat{\mathcal{T}}_{\mathsf{pe}}^{-,s_{0},u^{\star}}$.

\paragraph{Step 3: constructing an $\varepsilon$-net.}

Before proceeding, we note that repeating the same analysis in the
proof of Lemma \ref{lemma:gamma-contraction} (cf.~Appendix~\ref{sec:proof-lemma-gamma-contraction})
reveals that: for any $s_{0}\in\mathcal{S}$ and $u>0$, the operator
$\widehat{\mathcal{T}}_{\mathsf{pe}}^{-,s_{0},u}$ admits a unique
fixed point $Q_{s_{0},u}^{-\star}$ satisfying $0\leq Q_{s_{0},u}^{-\star}(s,a,b)\leq\frac{1}{1-\gamma}$
for all $(s,a,b)\in\mathcal{S}\times\mathcal{A}\times\mathcal{B}$;
we shall let $V_{s_{0},u}^{-\star}$ be the value function associated
with $Q_{s_{0},u}^{-\star}$ (see (\ref{eq:V-defn-Bellman}) for the
definition) in what follows. We also note that all the probabilistic
arguments below are conditional on $\{N(s,a,b):(s,a,b)\in\mathcal{S}\times\mathcal{A}\times\mathcal{B}\}$. 

The main purpose of this step is to establish the desired concentration
bounds for a proper covering of a range of $u$ of interest. Towards
this end, we first make note of the following useful Bernstein-style
concentration bound. 

\begin{claim}\label{claim:loo-fixed-concentration}For any $(a,b)\in\mathcal{A}\times\mathcal{B}$
and any given function $V:\mathcal{S}\to[0,\frac{1}{1-\gamma}]$ independent
of $\widehat{P}_{s_{0},a,b}$, with probability exceeding $1-\delta$
we have
\begin{equation}
\left|\big(\widehat{P}_{s_{0},a,b}-P_{s_{0},a,b}\big)V\right|\leq C_{0}\sqrt{\frac{1}{N\left(s_{0},a,b\right)}\mathsf{Var}_{\widehat{P}_{s_{0},a,b}}\left(V\right)\log\frac{1}{\delta}}+\frac{C_{0}\log\frac{1}{\delta}}{\left(1-\gamma\right)N\left(s_{0},a,b\right)}\label{eq:proof-loo-claim-1}
\end{equation}
and
\begin{equation}
\mathsf{Var}_{\widehat{P}_{s_{0},a,b}}\left(V\right)\leq2\mathsf{Var}_{P_{s_{0},a,b}}\left(V\right)+O\left(\frac{1}{\left(1-\gamma\right)^{2}N\left(s_{0},a,b\right)}\log\frac{1}{\delta}\right)\label{eq:proof-loo-claim-2}
\end{equation}
for some universal constant $C_{0}>0$.\end{claim}\begin{proof}See
Appendix \ref{subsec:proof-claim-loo-fixed}.\end{proof}

In order to extend this result to those vectors $V$ that might be
statistically dependent on $\widehat{P}_{s_{0},a,b}$, we construct
an $\epsilon$-net (see, e.g., \citet{vershynin2018high}) with $\epsilon=1/N$
for the interval $[\frac{4}{N},1+\frac{4}{N}]$ as follows: 
\[
\mathcal{N}\coloneqq\left\{ \frac{i}{N}:4\leq i\leq N+3\right\} .
\]
It is clearly seen that
\[
u^{\star}\in\bigg[\frac{4}{N},1+\frac{4}{N}\bigg]\qquad\text{and}\qquad\vert\mathcal{N}\vert=N.
\]
Given that $V_{s_{0},u}^{-\star}$ is statistically independent of
$\widehat{P}_{s_{0},a,b}$ for any $u\in\mathcal{N}$ and any $(a,b)\in\mathcal{A}\times\mathcal{B}$,
we can readily apply Claim \ref{claim:loo-fixed-concentration} (replacing
$\delta$ with $\delta/N$) in conjunction with the union bound to
show that: for any $(a,b)\in\mathcal{A}\times\mathcal{B}$, with probability
exceeding $1-\delta$, 
\begin{equation}
\left|\big(\widehat{P}_{s_{0},a,b}-P_{s_{0},a,b}\big)V_{s_{0},u}^{-\star}\right|\leq C_{0}\sqrt{\frac{1}{N\left(s_{0},a,b\right)}\mathsf{Var}_{\widehat{P}_{s_{0},a,b}}\big(V_{s_{0},u}^{-\star}\big)\log\frac{N}{\delta}}+\frac{C_{0}\log\frac{N}{\delta}}{\left(1-\gamma\right)N\left(s_{0},a,b\right)}\label{eq:proof-loo-3}
\end{equation}
and
\begin{equation}
\mathsf{Var}_{\widehat{P}_{s_{0},a,b}}\big(V_{s_{0},u}^{-\star}\big)\leq2\mathsf{Var}_{P_{s_{0},a,b}}\big(V_{s_{0},u}^{-\star}\big)+O\left(\frac{1}{\left(1-\gamma\right)^{2}N\left(s_{0},a,b\right)}\log\frac{N}{\delta}\right)\label{eq:proof-loo-4}
\end{equation}
hold simultaneously for all $u\in\mathcal{N}$.

\paragraph{Step 4: a covering argument.}

From the definition of $\mathcal{N}$, there exists a point $u_{0}\in\mathcal{N}$
such that $\vert u_{0}-u^{\star}\vert\leq1/N$. In view of (\ref{eq:proof-loo-T}),
(\ref{eq:proof-loo-P}), (\ref{eq:proof-loo-r}) and (\ref{eq:proof-loo-0}),
for any $Q:\mathbb{R}^{SAB}\to\mathbb{R}^{SAB}$ one has
\begin{equation}
\big\Vert\widehat{\mathcal{T}}_{\mathsf{pe}}^{-,s_{0},u^{\star}}\left(Q\right)-\widehat{\mathcal{T}}_{\mathsf{pe}}^{-,s_{0},u_{0}}\left(Q\right)\big\Vert_{\infty}\leq\left|u^{\star}-u_{0}\right|\leq\frac{1}{N}.\label{eq:proof-loo-5}
\end{equation}
Akin to Lemma \ref{lemma:gamma-contraction} (which shows that $\widehat{\mathcal{T}}_{\mathsf{pe}}^{-}$
is $\gamma$-contractive), we can show that $\widehat{\mathcal{T}}_{\mathsf{pe}}^{-,s_{0},u}$
is also $\gamma$-contractive for all $u>0$ using the same analysis.
Consequently,
\begin{align*}
\left\Vert Q_{s_{0},u_{0}}^{-\star}-Q_{s_{0},u^{\star}}^{-\star}\right\Vert _{\infty} & =\big\Vert\widehat{\mathcal{T}}_{\mathsf{pe}}^{-,s_{0},u_{0}}\left(Q_{s_{0},u_{0}}^{-\star}\right)-\widehat{\mathcal{T}}_{\mathsf{pe}}^{-,s_{0},u^{\star}}\left(Q_{s_{0},u^{\star}}^{-\star}\right)\big\Vert_{\infty}\\
 & \leq\big\Vert\widehat{\mathcal{T}}_{\mathsf{pe}}^{-,s_{0},u_{0}}\left(Q_{s_{0},u_{0}}^{-\star}\right)-\widehat{\mathcal{T}}_{\mathsf{pe}}^{-,s_{0},u^{\star}}\left(Q_{s_{0},u_{0}}^{-\star}\right)\big\Vert_{\infty}+\big\Vert\widehat{\mathcal{T}}_{\mathsf{pe}}^{-,s_{0},u^{\star}}\left(Q_{s_{0},u_{0}}^{-\star}\right)-\widehat{\mathcal{T}}_{\mathsf{pe}}^{-,s_{0},u^{\star}}\left(Q_{s_{0},u^{\star}}^{-\star}\right)\big\Vert_{\infty}\\
 & \leq\frac{1}{N}+\gamma\left\Vert Q_{s_{0},u_{0}}^{-\star}-Q_{s_{0},u^{\star}}^{-\star}\right\Vert _{\infty},
\end{align*}
which immediately gives
\[
\left\Vert Q_{s_{0},u_{0}}^{-\star}-Q_{s_{0},u^{\star}}^{-\star}\right\Vert _{\infty}\leq\frac{1}{N\left(1-\gamma\right)}.
\]
Invoke (\ref{eq:Q-V-bound}) to show that
\[
\left\Vert V_{s_{0},u_{0}}^{-\star}-V_{\mathsf{pe}}^{-\star}\right\Vert _{\infty}=\left\Vert V_{s_{0},u_{0}}^{-\star}-V_{s_{0},u^{\star}}^{-\star}\right\Vert _{\infty}\leq\left\Vert Q_{s_{0},u_{0}}^{-\star}-Q_{s_{0},u^{\star}}^{-\star}\right\Vert _{\infty}\leq\frac{1}{N\left(1-\gamma\right)}.
\]
As a result, for all $\widetilde{V}$ satisfying $0\leq\widetilde{V}\leq\frac{1}{1-\gamma}1$
and $\Vert\widetilde{V}-V_{\mathsf{pe}}^{-\star}\Vert_{\infty}\leq1/N$,
the triangle inequality tells us that
\begin{equation}
\big\Vert V_{s_{0},u_{0}}^{-\star}-\widetilde{V}\big\Vert_{\infty}\leq\left\Vert V_{s_{0},u_{0}}^{-\star}-V_{\mathsf{pe}}^{-\star}\right\Vert _{\infty}+\big\Vert V_{\mathsf{pe}}^{-\star}-\widetilde{V}\big\Vert_{\infty}\leq\frac{2}{N\left(1-\gamma\right)}.\label{eq:proof-loo-6}
\end{equation}
In addition, we can use (\ref{eq:variance-V-bound}) to show that
\begin{align}
\big|\mathsf{Var}_{\widehat{P}_{s_{0},a,b}}\left(V_{s_{0},u_{0}}^{-\star}\right)-\mathsf{Var}_{\widehat{P}_{s_{0},a,b}}\big(\widetilde{V}\big)\big| & \leq\frac{4}{1-\gamma}\big\Vert V_{s_{0},u_{0}}^{-\star}-\widetilde{V}\big\Vert_{\infty}\leq\frac{8}{N\left(1-\gamma\right)^{2}},\label{eq:proof-loo-7}\\
\big|\mathsf{Var}_{P_{s_{0},a,b}}\left(V_{s_{0},u_{0}}^{-\star}\right)-\mathsf{Var}_{P_{s_{0},a,b}}\big(\widetilde{V}\big)\big| & \leq\frac{4}{1-\gamma}\big\Vert V_{s_{0},u_{0}}^{-\star}-\widetilde{V}\big\Vert_{\infty}\leq\frac{8}{N\left(1-\gamma\right)^{2}}.\label{eq:proof-loo-8}
\end{align}
Therefore, for any $(a,b)\in\mathcal{A}\times\mathcal{B}$, with probability
exceeding $1-\delta$ one has 
\begin{align}
\big|\big(\widehat{P}_{s_{0},a,b}-P_{s_{0},a,b}\big)\widetilde{V}\big| & \leq\big|\big(\widehat{P}_{s_{0},a,b}-P_{s_{0},a,b}\big)V_{s_{0},u_{0}}^{-\star}\big|+\big|\big(\widehat{P}_{s_{0},a,b}-P_{s_{0},a,b}\big)\big(\widetilde{V}-V_{s_{0},u_{0}}^{-\star}\big)\big|\nonumber \\
 & \leq\left|\big(\widehat{P}_{s_{0},a,b}-P_{s_{0},a,b}\big)V_{s_{0},u_{0}}^{-\star}\right|+\big\Vert\widehat{P}_{s_{0},a,b}\big\Vert_{1}\big\Vert V_{s_{0},u_{0}}^{-\star}-\widetilde{V}\big\Vert_{\infty}+\left\Vert P_{s_{0},a,b}\right\Vert _{1}\big\Vert V_{s_{0},u_{0}}^{-\star}-\widetilde{V}\big\Vert_{\infty}\nonumber \\
 & \overset{\text{(i)}}{\leq}C_{0}\sqrt{\frac{1}{N\left(s_{0},a,b\right)}\mathsf{Var}_{\widehat{P}_{s_{0},a,b}}\big(V_{s_{0},u_{0}}^{-\star}\big)\log\frac{N}{\delta}}+\frac{C_{0}\log\frac{N}{\delta}}{\left(1-\gamma\right)N\left(s_{0},a,b\right)}+\frac{2}{N\left(1-\gamma\right)}\nonumber \\
 & \overset{\text{(ii)}}{\leq}C_{0}\sqrt{\frac{1}{N\left(s_{0},a,b\right)}\left[\mathsf{Var}_{\widehat{P}_{s_{0},a,b}}\big(\widetilde{V}\big)+\frac{8}{N\left(1-\gamma\right)^{2}}\right]\log\frac{N}{\delta}}+\frac{\left(C_{0}+2\right)\log\frac{N}{\delta}}{\left(1-\gamma\right)N\left(s_{0},a,b\right)}\nonumber \\
 & \overset{\text{(iii)}}{\leq}C_{0}\sqrt{\frac{1}{N\left(s_{0},a,b\right)}\mathsf{Var}_{\widehat{P}_{s_{0},a,b}}\big(\widetilde{V}\big)\log\frac{N}{\delta}}+\frac{\left(4C_{0}+2\right)\log\frac{N}{\delta}}{\left(1-\gamma\right)N\left(s_{0},a,b\right)}\label{eq:proof-loo-9}
\end{align}
holds for all $\widetilde{V}$ satisfying $0\leq\widetilde{V}\leq\frac{1}{1-\gamma}1$
and $\Vert\widetilde{V}-V_{\mathsf{pe}}^{-\star}\Vert_{\infty}\leq1/N$.
Here (i) follows from (\ref{eq:proof-loo-3}), (\ref{eq:proof-loo-6})
and the facts that $\Vert\widehat{P}_{s_{0},a,b}\Vert_{1}=\Vert P_{s_{0},a,b}\Vert_{1}=1$;
(ii) utilizes (\ref{eq:proof-loo-7}) and the fact that $N(s_{0},a,b)\leq N$;
and (iii) holds since $N\geq N(s_{0},a,b)$. In addition, it is also
seen that
\begin{align}
\mathsf{Var}_{\widehat{P}_{s_{0},a,b}}\big(\widetilde{V}\big) & =\mathsf{Var}_{\widehat{P}_{s_{0},a,b}}\big(V_{s_{0},u_{0}}^{-\star}\big)+\mathsf{Var}_{\widehat{P}_{s_{0},a,b}}\big(\widetilde{V}\big)-\mathsf{Var}_{\widehat{P}_{s_{0},a,b}}\big(V_{s_{0},u_{0}}^{-\star}\big)\nonumber \\
 & \overset{\text{(i)}}{\leq}2\mathsf{Var}_{P_{s_{0},a,b}}\big(V_{s_{0},u_{0}}^{-\star}\big)+O\left(\frac{1}{\left(1-\gamma\right)^{2}N\left(s_{0},a,b\right)}\log\frac{N}{\delta}\right)+\frac{8}{N\left(1-\gamma\right)^{2}}\nonumber \\
 & \overset{\text{(ii)}}{\leq}2\mathsf{Var}_{P_{s_{0},a,b}}\big(\widetilde{V}\big)+O\left(\frac{1}{\left(1-\gamma\right)^{2}N\left(s_{0},a,b\right)}\log\frac{N}{\delta}\right)+\frac{24}{N\left(1-\gamma\right)^{2}}\nonumber \\
 & \overset{\text{(iii)}}{=}2\mathsf{Var}_{P_{s_{0},a,b}}\big(\widetilde{V}\big)+O\left(\frac{1}{\left(1-\gamma\right)^{2}N\left(s_{0},a,b\right)}\log\frac{N}{\delta}\right).\label{eq:proof-loo-10}
\end{align}
Here, (i) follows from (\ref{eq:proof-loo-4}) and (\ref{eq:proof-loo-7});
(ii) utilizes (\ref{eq:proof-loo-8}); and (iii) holds since $N\geq N(s_{0},a,b)$.
By taking the union bound over all $(s_{0},a,b)\in\mathcal{S}\times\mathcal{A}\times\mathcal{B}$
satisfying $N(s_{0},a,b)>0$ (notice that the number of such state-action
pairs cannot exceed $N$), we conclude from (\ref{eq:proof-loo-9})
and (\ref{eq:proof-loo-10}) (replacing $\delta$ with $\delta/N$)
that with probability exceeding $1-\delta$,
\begin{align*}
\big|\big(\widehat{P}_{s_{0},a,b}-P_{s_{0},a,b}\big)\widetilde{V}\big| & \leq C_{0}\sqrt{\frac{1}{N\left(s_{0},a,b\right)}\mathsf{Var}_{\widehat{P}_{s_{0},a,b}}\big(\widetilde{V}\big)\log\frac{N^{2}}{\delta}}+\frac{\left(4C_{0}+2\right)\log\frac{N^{2}}{\delta}}{\left(1-\gamma\right)N\left(s_{0},a,b\right)}\\
 & \leq2C_{0}\sqrt{\frac{1}{N\left(s_{0},a,b\right)}\mathsf{Var}_{\widehat{P}_{s_{0},a,b}}\big(\widetilde{V}\big)\log\frac{N}{\delta}}+\frac{\left(8C_{0}+4\right)\log\frac{N}{\delta}}{\left(1-\gamma\right)N\left(s_{0},a,b\right)}
\end{align*}
and
\begin{align*}
\mathsf{Var}_{\widehat{P}_{s_{0},a,b}}\big(\widetilde{V}\big) & \leq2\mathsf{Var}_{P_{s_{0},a,b}}\big(\widetilde{V}\big)+O\left(\frac{1}{\left(1-\gamma\right)^{2}N\left(s,a,b\right)}\log\frac{N^{2}}{\delta}\right)\\
 & =2\mathsf{Var}_{P_{s_{0},a,b}}\big(\widetilde{V}\big)+O\left(\frac{1}{\left(1-\gamma\right)^{2}N\left(s,a,b\right)}\log\frac{N}{\delta}\right)
\end{align*}
hold for all $(s,a,b)\in\mathcal{S}\times\mathcal{A}\times\mathcal{B}$
satisfying $N(s_{0},a,b)>0$ and all $\widetilde{V}$ obeying $0\leq\widetilde{V}\leq\frac{1}{1-\gamma}1$
and $\Vert\widetilde{V}-V_{\mathsf{pe}}^{-\star}\Vert_{\infty}\leq1/N$.

\subsubsection{Proof of Claim \ref{claim:loo-fixed-concentration} \label{subsec:proof-claim-loo-fixed}}

\paragraph{Part 1: proof of inequality (\ref{eq:proof-loo-claim-1}).}

Conditional on $N(s_{0},a,b)$ and $V$, we can write
\[
\big(\widehat{P}_{s_{0},a,b}-P_{s_{0},a,b}\big)V=\sum_{s'\in\mathcal{S}}\underbrace{V\left(s'\right)\left[\frac{\sum_{i=1}^{N}\ind\left\{ s_{i}=s_{0},a_{i}=a,b_{i}=b,s_{i}'=s'\right\} }{N\left(s_{0},a,b\right)}-P\left(s'\mymid s_{0},a,b\right)\right]}_{\eqqcolon X_{s'}}
\]
as a sum of independent random variables. It is straightforward to
check that $\mathbb{E}[X_{s'}]=0$ and $\vert X_{s'}\vert\leq\frac{1}{1-\gamma}$
for all $s'\in\mathcal{S}$. Apply the Bernstein inequality \citep[Theorem 2.8.4]{vershynin2018high}
to show that with probability exceeding $1-\delta$,
\begin{equation}
\big|\big(\widehat{P}_{s_{0},a,b}-P_{s_{0},a,b}\big)V\big|\leq C_{\mathsf{bern}}\sqrt{\frac{1}{N\left(s_{0},a,b\right)}\mathsf{Var}_{P_{s_{0},a,b}}\left(V\right)\log\frac{1}{\delta}}+\frac{C_{\mathsf{bern}}\log\frac{1}{\delta}}{\left(1-\gamma\right)N\left(s_{0},a,b\right)}\label{eq:proof-loo-fixed-11}
\end{equation}
for some universal constant $C_{\mathsf{bern}}>0$. 

Next, denoting
\[
\overline{V}\coloneqq V-\left(P_{s,a,b}V\right)1,
\]
we observe that
\begin{align}
\mathsf{Var}_{P_{s_{0},a,b}}\left(V\right) & =P_{s_{0},a,b}\big(\overline{V}\circ\overline{V}\big)=\widehat{P}_{s_{0},a,b}\big(\overline{V}\circ\overline{V}\big)+\big(P_{s_{0},a,b}-\widehat{P}_{s_{0},a,b}\big)\big(\overline{V}\circ\overline{V}\big)\nonumber \\
 & =\mathsf{Var}_{\widehat{P}_{s_{0},a,b}}\left(V\right)+\big[\big(P_{s_{0},a,b}-\widehat{P}_{s_{0},a,b}\big)V\big]^{2}+\big(P_{s_{0},a,b}-\widehat{P}_{s_{0},a,b}\big)\big(\overline{V}\circ\overline{V}\big),\label{eq:proof-loo-fixed-3}
\end{align}
where we have used the fact that
\begin{align*}
\widehat{P}_{s_{0},a,b}\big(\overline{V}\circ\overline{V}\big) & =\widehat{P}_{s_{0},a,b}\left(\big[V-\left(P_{s_{0},a,b}V\right)1\big]\circ\big[V-\left(P_{s_{0},a,b}V\right)1\big]\right)\\
 & =\widehat{P}_{s_{0},a,b}\left(V\circ V\right)-2\left(P_{s_{0},a,b}V\right)\left(\widehat{P}_{s_{0},a,b}V\right)+\left(P_{s_{0},a,b}V\right)^{2}\\
 & =\widehat{P}_{s_{0},a,b}\left(\big[V-\left(\widehat{P}_{s_{0},a,b}V\right)1\big]\circ\big[V-\left(\widehat{P}_{s_{0},a,b}V\right)1\big]\right)+\left(\widehat{P}_{s_{0},a,b}V\right)^{2}\\
 & \qquad-2\left(P_{s_{0},a,b}V\right)\left(\widehat{P}_{s_{0},a,b}V\right)+\left(P_{s_{0},a,b}V\right)^{2}\\
 & =\mathsf{Var}_{\widehat{P}_{s_{0},a,b}}\left(V\right)+\big[\big(P_{s_{0},a,b}-\widehat{P}_{s_{0},a,b}\big)V\big]^{2}.
\end{align*}
Similar to (\ref{eq:proof-loo-fixed-11}), we can show that with probability
exceeding $1-\delta$,
\begin{align}
\big|\big(\widehat{P}_{s_{0},a,b}-P_{s_{0},a,b}\big)\big(\overline{V}\circ\overline{V}\big)\big| & \leq C_{\mathsf{bern}}\sqrt{\frac{1}{N\left(s_{0},a,b\right)}\mathsf{Var}_{P_{s_{0},a,b}}\big(\overline{V}\circ\overline{V}\big)\log\frac{1}{\delta}}+\frac{C_{\mathsf{bern}}\log\frac{1}{\delta}}{\left(1-\gamma\right)^{2}N\left(s_{0},a,b\right)}\nonumber \\
 & \leq C_{\mathsf{bern}}\sqrt{\frac{1}{\left(1-\gamma\right)^{2}N\left(s_{0},a,b\right)}\mathsf{Var}_{P_{s_{0},a,b}}\big(V\big)\log\frac{1}{\delta}}+\frac{C_{\mathsf{bern}}\log\frac{1}{\delta}}{\left(1-\gamma\right)^{2}N\left(s_{0},a,b\right)},\label{eq:proof-loo-fixed-4}
\end{align}
where the second relation holds since
\begin{align*}
\mathsf{Var}_{P_{s_{0},a,b}}\big(\overline{V}\circ\overline{V}\big) & \leq P_{s_{0},a,b}\big(\overline{V}\circ\overline{V}\circ\overline{V}\circ\overline{V}\big)\leq\frac{1}{(1-\gamma)^{2}}P_{s_{0},a,b}\big(\overline{V}\circ\overline{V}\big)\\
 & =\frac{1}{\left(1-\gamma\right)^{2}}\mathsf{Var}_{P_{s_{0},a,b}}\left(V\right).
\end{align*}
Equipped with (\ref{eq:proof-loo-fixed-4}), we can further bound
(\ref{eq:proof-loo-fixed-3}) as follows:
\begin{align*}
\mathsf{Var}_{P_{s_{0},a,b}}\left(V\right) & \leq\mathsf{Var}_{\widehat{P}_{s_{0},a,b}}\left(V\right)+\big[\big(P_{s_{0},a,b}-\widehat{P}_{s_{0},a,b}\big)V\big]^{2}+\\
 & \quad+C_{\mathsf{bern}}\sqrt{\frac{\log\frac{1}{\delta}}{\left(1-\gamma\right)^{2}N\left(s_{0},a,b\right)}\mathsf{Var}_{P_{s_{0},a,b}}\big(V\big)}+\frac{C_{\mathsf{bern}}\log\frac{1}{\delta}}{\left(1-\gamma\right)^{2}N\left(s_{0},a,b\right)}\\
 & \leq\mathsf{Var}_{\widehat{P}_{s_{0},a,b}}\left(V\right)+\big[\big(P_{s_{0},a,b}-\widehat{P}_{s_{0},a,b}\big)V\big]^{2}+\frac{C_{\mathsf{bern}}\log\frac{1}{\delta}}{\left(1-\gamma\right)^{2}N\left(s_{0},a,b\right)}\\
 & \quad+\frac{1}{2}\mathsf{Var}_{P_{s_{0},a,b}}\left(V\right)+\frac{C_{\mathsf{bern}}^{2}}{2}\frac{\log\frac{1}{\delta}}{\left(1-\gamma\right)^{2}N\left(s_{0},a,b\right)},
\end{align*}
where the last relation follows from the AM-GM inequality. By rearranging
terms, we arrive at
\begin{equation}
\mathsf{Var}_{P_{s_{0},a,b}}\left(V\right)\leq2\mathsf{Var}_{\widehat{P}_{s_{0},a,b}}\left(V\right)+2\big[\big(P_{s_{0},a,b}-\widehat{P}_{s_{0},a,b}\big)V\big]^{2}+\frac{\left(C_{\mathsf{bern}}^{2}+2C_{\mathsf{bern}}\right)\log\frac{1}{\delta}}{\left(1-\gamma\right)^{2}N\left(s_{0},a,b\right)}.\label{eq:proof-loo-fixed-5}
\end{equation}
Taking (\ref{eq:proof-loo-fixed-5}) and (\ref{eq:proof-loo-fixed-11})
collectively yields
\begin{align}
\big|\big(\widehat{P}_{s_{0},a,b}-P_{s_{0},a,b}\big)V\big| & \leq\sqrt{\frac{2C_{\mathsf{bern}}^{2}}{N\left(s_{0},a,b\right)}\mathsf{Var}_{\widehat{P}_{s_{0},a,b}}\left(V\right)\log\frac{1}{\delta}}+\frac{C_{\mathsf{bern}}\log\frac{1}{\delta}}{\left(1-\gamma\right)N\left(s_{0},a,b\right)}\nonumber \\
 & \quad+\sqrt{\frac{2C_{\mathsf{bern}}^{2}}{N\left(s_{0},a,b\right)}\log\frac{1}{\delta}}\,\big|\big(\widehat{P}_{s_{0},a,b}-P_{s_{0},a,b}\big)V\big|+\frac{\sqrt{C_{\mathsf{bern}}^{2}\left(C_{\mathsf{bern}}^{2}+2C_{\mathsf{bern}}\right)\log\frac{1}{\delta}}}{\left(1-\gamma\right)N\left(s_{0},a,b\right)}.\label{eq:proof-loo-fixed-6}
\end{align}

When $N(s_{0},a,b)\leq\frac{1}{8C_{\mathsf{bern}}^{2}}\log\frac{1}{\delta}$,
we know that (\ref{eq:proof-loo-claim-1}) holds trivially since
\[
\left|\big(\widehat{P}_{s_{0},a,b}-P_{s_{0},a,b}\big)V\right|\leq\frac{2}{1-\gamma}=O\bigg(\frac{\log\frac{1}{\delta}}{\left(1-\gamma\right)N\left(s_{0},a,b\right)}\bigg).
\]
In comparison, if $N(s_{0},a,b)>\frac{1}{8C_{\mathsf{bern}}^{2}}\log\frac{1}{\delta}$,
then one can see from (\ref{eq:proof-loo-fixed-6}) that
\begin{align*}
\big|\big(\widehat{P}_{s_{0},a,b}-P_{s_{0},a,b}\big)V\big| & \leq\frac{1}{2}\big|\big(\widehat{P}_{s_{0},a,b}-P_{s_{0},a,b}\big)V\big|+\sqrt{\frac{2C_{\mathsf{bern}}^{2}}{N\left(s_{0},a,b\right)}\mathsf{Var}_{\widehat{P}_{s_{0},a,b}}\left(V\right)\log\frac{1}{\delta}}\\
 & \quad+\frac{C_{\mathsf{bern}}+\sqrt{C_{\mathsf{bern}}^{2}\left(C_{\mathsf{bern}}^{2}+2C_{\mathsf{bern}}\right)}}{\left(1-\gamma\right)N\left(s_{0},a,b\right)}\log\frac{1}{\delta}.
\end{align*}
Rearranging terms, we are left with
\begin{equation}
\big|\big(\widehat{P}_{s_{0},a,b}-P_{s_{0},a,b}\big)V\big|\leq\sqrt{\frac{8C_{\mathsf{bern}}^{2}}{N\left(s_{0},a,b\right)}\mathsf{Var}_{\widehat{P}_{s_{0},a,b}}\left(V\right)\log\frac{1}{\delta}}+2\frac{C_{\mathsf{bern}}+\sqrt{C_{\mathsf{bern}}^{2}\left(C_{\mathsf{bern}}^{2}+2C_{\mathsf{bern}}\right)}}{\left(1-\gamma\right)N\left(s_{0},a,b\right)}\log\frac{1}{\delta}.\label{eq:proof-loo-fixed-7}
\end{equation}
Putting the above bounds together concludes the proof of (\ref{eq:proof-loo-claim-1}).

\paragraph{Part 2: proof of inequality (\ref{eq:proof-loo-claim-2}).}

When $N(s_{0},a,b)\geq16C_{\mathsf{bern}}^{2}\log\frac{1}{\delta}$,
it holds that
\begin{align*}
\mathsf{Var}_{\widehat{P}_{s_{0},a,b}}\left(V\right) & \overset{\text{(i)}}{=}\mathsf{Var}_{P_{s_{0},a,b}}\left(V\right)-\big[\big(P_{s_{0},a,b}-\widehat{P}_{s_{0},a,b}\big)V\big]^{2}-\big(P_{s_{0},a,b}-\widehat{P}_{s_{0},a,b}\big)\big(\overline{V}\circ\overline{V}\big)\\
 & \overset{\text{(ii)}}{\leq}\mathsf{Var}_{P_{s_{0},a,b}}\left(V\right)+C_{\mathsf{bern}}\sqrt{\frac{1}{\left(1-\gamma\right)^{2}N\left(s_{0},a,b\right)}\mathsf{Var}_{P_{s_{0},a,b}}\big(V\big)\log\frac{1}{\delta}}+\frac{C_{\mathsf{bern}}\log\frac{1}{\delta}}{\left(1-\gamma\right)^{2}N\left(s_{0},a,b\right)}\\
 & \overset{\text{(iii)}}{\leq}2\mathsf{Var}_{P_{s_{0},a,b}}\left(V\right)+\frac{\left(C_{\mathsf{bern}}^{2}/4+C_{\mathsf{bern}}\right)\log\frac{1}{\delta}}{\left(1-\gamma\right)^{2}N\left(s_{0},a,b\right)}\\
 & =2\mathsf{Var}_{P_{s_{0},a,b}}\left(V\right)+O\left(\frac{\log\frac{1}{\delta}}{\left(1-\gamma\right)^{2}N\left(s_{0},a,b\right)}\right),
\end{align*}
which establishes (\ref{eq:proof-loo-claim-2}) in this case. Here,
(i) follows from (\ref{eq:proof-loo-fixed-3}), (ii) utilizes (\ref{eq:proof-loo-fixed-4}),
and (iii) applies the AM-GM inequality. Instead, if $N(s_{0},a,b)<16C_{\mathsf{bern}}^{2}\log\frac{1}{\delta}$,
then we know that (\ref{eq:proof-loo-claim-2}) is trivially true
because
\[
\mathsf{Var}_{P_{s_{0},a,b}}\left(V\right)\leq\frac{1}{\left(1-\gamma\right)^{2}}=O\left(\frac{\log\frac{1}{\delta}}{\left(1-\gamma\right)^{2}N\left(s_{0},a,b\right)}\right).
\]
Combining these two cases concludes the proof of (\ref{eq:proof-loo-claim-2}). 

\subsection{Proof of Lemma \ref{lemma:Q-monononicity} \label{sec:proof-lemma-Q-monotonicity}}

To begin with, we would like to justify that for any $(s,a,b)\in\mathcal{S}\times\mathcal{A}\times\mathcal{B}$,
it holds that
\[
Q^{\widehat{\mu},\star}\left(s,a,b\right)\geq Q_{\mathsf{pe}}^{-}\left(s,a,b\right).
\]
Recall from Lemma~\ref{lemma:gamma-contraction} that $Q_{\mathsf{pe}}^{-\star}$
is the fixed point of $\widehat{\mathcal{T}}_{\mathsf{pe}}^{-}$ and
hence
\begin{equation}
Q_{\mathsf{pe}}^{-\star}\left(s,a,b\right)=\big(\widehat{\mathcal{T}}_{\mathsf{pe}}^{-}Q_{\mathsf{pe}}^{-\star}\big)\left(s,a,b\right)=\max\left\{ \widehat{r}\left(s,a,b\right)+\gamma\widehat{P}_{s,a,b}V_{\mathsf{pe}}^{-\star}-\beta\left(s,a,b;V_{\mathsf{pe}}^{-\star}\right),0\right\} .\label{eq:proof-Q-monotonic-1}
\end{equation}
If $Q_{\mathsf{pe}}^{-\star}(s,a,b)=0$, then we can utilize the facts
that $Q_{\mathsf{pe}}^{-}\leq Q_{\mathsf{pe}}^{-\star}$ (as shown
in (\ref{eq:Q-pe-t-minus-star-all-t})) and $Q^{\widehat{\mu},\star}(s,a,b)\geq0$
to ensure that
\[
Q_{\mathsf{pe}}^{-}\left(s,a,b\right)\leq Q_{\mathsf{pe}}^{-\star}\left(s,a,b\right)=0\leq Q^{\widehat{\mu},\star}\left(s,a,b\right).
\]
Therefore, it suffices to look at the case when $Q_{\mathsf{pe}}^{-\star}(s,a,b)>0$
in the sequel.

Note that the condition $Q_{\mathsf{pe}}^{-\star}(s,a,b)>0$ taken
collectively with (\ref{eq:proof-Q-monotonic-1}) gives
\begin{equation}
Q_{\mathsf{pe}}^{-\star}\left(s,a,b\right)=\widehat{r}\left(s,a,b\right)+\gamma\widehat{P}_{s,a,b}V_{\mathsf{pe}}^{-\star}-\beta\left(s,a,b;V_{\mathsf{pe}}^{-\star}\right).\label{eq:proof-Q-monotonic-2}
\end{equation}
In addition, in this case we must have $N(s,a,b)>0$: otherwise we
have
\[
\beta\left(s,a,b;V_{\mathsf{pe}}^{-\star}\right)=\frac{1}{1-\gamma}+\frac{3}{N}>\frac{1}{1-\gamma},
\]
which taken collectively with (\ref{eq:proof-Q-monotonic-2}) leads
to the following contradiction
\begin{align*}
Q_{\mathsf{pe}}^{-\star}\left(s,a,b\right) & =\widehat{r}\left(s,a,b\right)+\gamma\widehat{P}_{s,a,b}V_{\mathsf{pe}}^{-\star}-\beta\left(s,a,b;V_{\mathsf{pe}}^{-\star}\right)<\frac{\gamma}{1-\gamma}-\frac{1}{1-\gamma}<0.
\end{align*}
With the condition $N(s,a,b)>0$ in mind, we can proceed to bound
\begin{align}
Q_{\mathsf{pe}}^{-}\left(s,a,b\right) & \overset{\text{(i)}}{\leq}Q_{\mathsf{pe}}^{-\star}\left(s,a,b\right)\overset{\text{(ii)}}{=}\widehat{r}\left(s,a,b\right)+\gamma\widehat{P}_{s,a,b}V_{\mathsf{pe}}^{-\star}-\beta\left(s,a,b;V_{\mathsf{pe}}^{-\star}\right)\nonumber \\
 & \leq\widehat{r}\left(s,a,b\right)+\gamma\widehat{P}_{s,a,b}V_{\mathsf{pe}}^{-}-\beta\left(s,a,b;V_{\mathsf{pe}}^{-\star}\right)+\gamma\left\Vert V_{\mathsf{pe}}^{-}-V_{\mathsf{pe}}^{-\star}\right\Vert _{\infty}\nonumber \\
 & \overset{\text{(iii)}}{\leq}\widehat{r}\left(s,a,b\right)+\gamma P_{s,a,b}V_{\mathsf{pe}}^{-}-\beta\left(s,a,b;V_{\mathsf{pe}}^{-\star}\right)+\gamma\big(\widehat{P}_{s,a,b}-P_{s,a,b}\big)V_{\mathsf{pe}}^{-}+\frac{1}{N}\nonumber \\
 & \overset{\text{(iv)}}{\leq}\widehat{r}\left(s,a,b\right)+\gamma P_{s,a,b}V_{\mathsf{pe}}^{-}-\beta\left(s,a,b;V_{\mathsf{pe}}^{-}\right)+\gamma\big(\widehat{P}_{s,a,b}-P_{s,a,b}\big)V_{\mathsf{pe}}^{-}+\frac{3}{N}\nonumber \\
 & \overset{\text{(v)}}{\leq}r\left(s,a,b\right)+\gamma P_{s,a,b}V_{\mathsf{pe}}^{-}.\label{eq:proof-Q-monotonic-3}
\end{align}
Here (i) follows from (\ref{eq:Q-pe-t-minus-star-all-t}); (ii) holds
due to (\ref{eq:proof-Q-monotonic-2}); (iii) invokes (\ref{eq:Q_pe_approx_err});
(iv) makes use of (\ref{eq:b-V-bound}) and (\ref{eq:Q_pe_approx_err});
and (v) comes from (\ref{eq:b-dominance-1}) and the fact that $\widehat{r}(s,a,b)=r(s,a,b)$
when $N(s,a,b)\geq1$. As a result, we obtain
\begin{align}
Q^{\widehat{\mu},\star}\left(s,a,b\right)-Q_{\mathsf{pe}}^{-}\left(s,a,b\right) & =r\left(s,a,b\right)+\gamma P_{s,a,b}V^{\widehat{\mu},\star}-Q_{\mathsf{pe}}^{-}\left(s,a,b\right)\nonumber \\
 & \geq\gamma P_{s,a,b}\big(V^{\widehat{\mu},\star}-V_{\mathsf{pe}}^{-}\big),\label{eq:proof-Q-monotonic-4}
\end{align}
where the first identity follows from the Bellman equation, and the
last relation holds due to (\ref{eq:proof-Q-monotonic-3}). Let us
take
\begin{equation}
\left(s_{0},a_{0},b_{0}\right)=\underset{(s,a,b)\in\mathcal{S\times\mathcal{A}\times\mathcal{B}}}{\arg\min}\left\{ Q^{\widehat{\mu},\star}\left(s,a,b\right)-Q_{\mathsf{pe}}^{-}\left(s,a,b\right)\right\} ,\label{eq:proof-Q-monotonic-5}
\end{equation}
and use the notation $\nu_{\mathsf{br}}$ to denote the best-response
policy of the min-player when the max-player adopts policy $\widehat{\mu}$,
i.e., $\nu_{\mathsf{br}}\coloneqq\arg\min_{\nu}V^{\widehat{\mu},\nu}$
(whose existence is guaranteed by the existence of optimal policy
in MDP). Consequently, we arrive at
\begin{align}
Q^{\widehat{\mu},\star}\left(s_{0},a_{0},b_{0}\right)-Q_{\mathsf{pe}}^{-}\left(s_{0},a_{0},b_{0}\right) & \overset{\text{(i)}}{\geq}\gamma P_{s_{0},a_{0},b_{0}}\big(V^{\widehat{\mu},\star}-V_{\mathsf{pe}}^{-}\big)\geq\gamma\min_{s\in\mathcal{S}}\left[V^{\widehat{\mu},\star}\left(s\right)-V_{\mathsf{pe}}^{-}\left(s\right)\right]\nonumber \\
 & =\gamma\min_{s\in\mathcal{S}}\left\{ \mathop{\mathbb{E}}\limits _{a\sim\widehat{\mu}(s),b\sim\nu_{\mathsf{br}}(s)}\left[Q^{\widehat{\mu},\star}\left(s,a,b\right)\right]-\mathop{\mathbb{E}}\limits _{a\sim\mu_{T}^{-}(s),b\sim\nu_{T}^{-}(s)}\left[Q_{\mathsf{pe}}^{-}\left(s,a,b\right)\right]\right\} \nonumber \\
 & \overset{\text{(ii)}}{\geq}\gamma\min_{s\in\mathcal{S}}\mathop{\mathbb{E}}\limits _{a\sim\widehat{\mu}(s),b\sim\nu_{\mathsf{br}}(s)}\left[Q^{\widehat{\mu},\star}\left(s,a,b\right)-Q_{\mathsf{pe}}^{-}\left(s,a,b\right)\right]\nonumber \\
 & \geq\gamma\min_{(s,a,b)\in\mathcal{S\times\mathcal{A}\times\mathcal{B}}}\left[Q^{\widehat{\mu},\star}\left(s,a,b\right)-Q_{\mathsf{pe}}^{-}\left(s,a,b\right)\right]\nonumber \\
 & \overset{\text{(iii)}}{=}\gamma\left[Q^{\widehat{\mu},\star}\left(s_{0},a_{0},b_{0}\right)-Q_{\mathsf{pe}}^{-}\left(s_{0},a_{0},b_{0}\right)\right].\label{eq:proof-Q-monotonic-7}
\end{align}
Here (i) relies on (\ref{eq:proof-Q-monotonic-4}); (ii) holds true
since $(\mu_{T}^{-}(s),\nu_{T}^{-}(s))$ is the Nash equilibrium of
$Q_{\mathsf{pe}}^{-}(s,\cdot,\cdot)$ and recall that $\widehat{\mu}=\mu_{T}^{-}$
and hence
\[
\mathop{\mathbb{E}}\limits _{a\sim\mu_{T}^{-}(s),b\sim\nu_{T}^{-}(s)}\left[Q_{\mathsf{pe}}^{-}\left(s,a,b\right)\right]\leq\mathop{\mathbb{E}}\limits _{a\sim\widehat{\mu}(s),b\sim\nu_{\mathsf{br}}(s)}\left[Q^{\widehat{\mu},\star}\left(s,a,b\right)-Q_{\mathsf{pe}}^{-}\left(s,a,b\right)\right];
\]
and (iii) arises from the definition of $(s_{0},a_{0},b_{0})$ (cf.~(\ref{eq:proof-Q-monotonic-5})).
Given that $\gamma<1$, Condition (\ref{eq:proof-Q-monotonic-7})
can only hold if
\[
Q^{\widehat{\mu},\star}\left(s_{0},a_{0},b_{0}\right)-Q_{\mathsf{pe}}^{-}\left(s_{0},a_{0},b_{0}\right)\geq0,
\]
Therefore, we can conclude that $Q^{\widehat{\mu},\star}\geq Q_{\mathsf{pe}}^{-}$. 

Similarly, we can also show that $Q_{\mathsf{pe}}^{+}\geq Q^{\star,\widehat{\nu}}$
via the same argument, which we omit here for the sake of conciseness.
The claimed properties regarding the V-function are therefore immediately
consequences from $Q^{\widehat{\mu},\star}\geq Q_{\mathsf{pe}}^{-}$
and $Q_{\mathsf{pe}}^{+}\geq Q^{\star,\widehat{\nu}}$. 

\subsection{Proof of Lemma \ref{lemma:V_pe_lower_bound}\label{subsec:proof-lemma_V_pe_lower_bound}}

It is first observed from \eqref{eq:Vpe-minus-lower-bound-min} and
\eqref{eq:defn-nu0-min} that
\begin{equation}
V_{\mathsf{pe}}^{-}\left(s\right)\geq\mathop{\mathbb{E}}\limits _{a\sim\mu^{\star}(s),b\sim\nu_{0}(s)}\left[Q_{\mathsf{pe}}^{-}\left(s,a,b\right)\right]\ge\mathop{\mathbb{E}}\limits _{a\sim\mu^{\star}(s),b\sim\nu_{0}(s)}\left[Q_{\mathsf{pe}}^{-\star}\left(s,a,b\right)\right]-\frac{1}{N},\label{eq:proof_V_pe_lower_bound_1}
\end{equation}
where the last inequality comes from (\ref{eq:Q_pe_approx_err}). 
\begin{itemize}
\item For any $(s,a,b)\in\mathcal{S}\times\mathcal{A}\times\mathcal{B}$
obeying $N(s,a,b)\geq1$, it is seen that
\begin{align*}
Q_{\mathsf{pe}}^{-\star}\left(s,a,b\right) & \overset{\text{(i)}}{=}\max\left\{ \widehat{r}\left(s,a,b\right)+\gamma\widehat{P}_{s,a,b}V_{\mathsf{pe}}^{-\star}-\beta\left(s,a,b;V_{\mathsf{pe}}^{-\star}\right),0\right\} \\
 & \geq\widehat{r}\left(s,a,b\right)+\gamma\widehat{P}_{s,a,b}V_{\mathsf{pe}}^{-\star}-\beta\left(s,a,b;V_{\mathsf{pe}}^{-\star}\right)\\
 & \overset{\text{(ii)}}{\geq}\widehat{r}\left(s,a,b\right)+\gamma\widehat{P}_{s,a,b}V_{\mathsf{pe}}^{-}-\beta\left(s,a,b;V_{\mathsf{pe}}^{-}\right)-\frac{3}{N}\\
 & \overset{\text{(iii)}}{\geq}r\left(s,a,b\right)+\gamma P_{s,a,b}V_{\mathsf{pe}}^{-}-2\beta\left(s,a,b;V_{\mathsf{pe}}^{-}\right)+\frac{1}{N}
\end{align*}
holds with probability exceeding $1-\delta$. Here, (i) holds true
since $\widehat{Q}_{\mathsf{pe}}^{\star}$ is the fixed point of $\widehat{\mathcal{T}}_{\mathsf{pe}}^{-}$;
(ii) follows from (\ref{eq:Q_pe_approx_err}) and (\ref{eq:b-V-bound});
and (iii) follows from (\ref{eq:b-dominance-1}) and the fact that
$\widehat{r}(s,a,b)=r(s,a,b)$ when $N(s,a,b)\geq1$. 
\item If instead $N(s,a,b)=0$, then by definition we have
\[
\beta\left(s,a,b;V_{\mathsf{pe}}^{-\star}\right)=\frac{1}{1-\gamma}+\frac{4}{N},
\]
and therefore,
\[
Q_{\mathsf{pe}}^{-\star}\left(s,a,b\right)\geq0\geq r\left(s,a,b\right)+\gamma P_{s,a,b}V_{\mathsf{pe}}^{-}-2\beta\left(s,a,b;V_{\mathsf{pe}}^{-}\right)+\frac{1}{N}
\]
holds as well. 
\end{itemize}
To summarize, with probability exceeding $1-\delta$, one has
\[
Q_{\mathsf{pe}}^{-\star}\left(s,a,b\right)\geq r\left(s,a,b\right)+\gamma P_{s,a,b}V_{\mathsf{pe}}^{-}-2\beta\left(s,a,b;V_{\mathsf{pe}}^{-}\right)+\frac{1}{N}
\]
holds simultaneously for all $(s,a,b)\in\mathcal{S}\times\mathcal{A}\times\mathcal{B}$.
This taken collectively with (\ref{eq:proof_V_pe_lower_bound_1})
yields
\[
V_{\mathsf{pe}}^{-}\left(s\right)\geq\mathop{\mathbb{E}}\limits _{a\sim\mu^{\star}(s),b\sim\nu_{0}(s)}\left[r\left(s,a,b\right)+\gamma P_{s,a,b}V_{\mathsf{pe}}^{-}-2\beta\left(s,a,b;V_{\mathsf{pe}}^{-}\right)\right]
\]
holds for all $s\in\mathcal{S}$. This concludes the proof of \eqref{eq:main-proof-2}. 

\subsection{Proof of Lemma \ref{lemma:d_b_upper_bound}\label{sec:proof_lemma_d_b_upper_bound}}

Before proceeding, we make note of a key result will be useful in
the proof.

\begin{claim}\label{claim:chernoff-type}With probability exceeding
$1-\delta$, we have
\begin{equation}
\max\left\{ N\left(s,a,b\right),\log\frac{N}{\delta}\right\} \geq\frac{1}{2}Nd_{\mathsf{b}}\left(s,a,b\right)\label{eq:chernoff-type-bound}
\end{equation}
holds for all $(s,a,b)\in\mathcal{S}\times\mathcal{A}\times\mathcal{B}$.
\end{claim}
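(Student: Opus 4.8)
The plan is to recognize $N(s,a,b)$ as a Binomial random variable and apply a multiplicative Chernoff bound, splitting into two regimes according to the magnitude of its mean. Indeed, by the data-generating mechanism \eqref{eq:offline-data-generation}, the count $N(s,a,b)=\sum_{i=1}^{N}\ind\{(s_{i},a_{i},b_{i})=(s,a,b)\}$ is a sum of $N$ i.i.d.\ Bernoulli random variables with success probability $d_{\mathsf{b}}(s,a,b)$, so that $N(s,a,b)\sim\mathrm{Binomial}\big(N,d_{\mathsf{b}}(s,a,b)\big)$ with mean $\mathbb{E}[N(s,a,b)]=N d_{\mathsf{b}}(s,a,b)$. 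The target inequality \eqref{eq:chernoff-type-bound} is precisely the statement that $N(s,a,b)$ does not fall too far below half of its mean, with the floor $\log\frac{N}{\delta}$ absorbing the triples whose mean is too small for concentration to be informative.

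First I would dispose of the low-probability triples \emph{deterministically}. Whenever $N d_{\mathsf{b}}(s,a,b)\le 2\log\frac{N}{\delta}$, one has $\frac{1}{2}N d_{\mathsf{b}}(s,a,b)\le\log\frac{N}{\delta}\le\max\big\{N(s,a,b),\log\frac{N}{\delta}\big\}$, so \eqref{eq:chernoff-type-bound} holds with no appeal to randomness at all. It therefore remains to control the triples with $N d_{\mathsf{b}}(s,a,b)>2\log\frac{N}{\delta}$, for which it suffices to prove $N(s,a,b)\ge\frac{1}{2}N d_{\mathsf{b}}(s,a,b)$. For a fixed such triple, the multiplicative Chernoff lower-tail bound --- namely $\mathbb{P}\big(X\le(1-\theta)\mu\big)\le\exp(-\theta^{2}\mu/2)$ for a sum $X$ of independent $\{0,1\}$ variables with mean $\mu$ --- applied with $\theta=\tfrac12$ and $\mu=N d_{\mathsf{b}}(s,a,b)$ yields
\[
\mathbb{P}\Big(N(s,a,b)<\tfrac12 N d_{\mathsf{b}}(s,a,b)\Big)\le\exp\!\Big(-\tfrac18 N d_{\mathsf{b}}(s,a,b)\Big),
\]
which is small precisely because the mean exceeds $2\log\frac{N}{\delta}$.

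Finally I would take a union bound over the (at most $SAB$) triples in this second regime to conclude that \eqref{eq:chernoff-type-bound} holds simultaneously for all $(s,a,b)$ with probability at least $1-\delta$. The main --- and essentially the only --- obstacle is calibrating the union bound so that it closes with only a logarithmic threshold: one must choose the Chernoff deviation $\theta$ and the separating threshold compatibly, and exploit that the per-triple failure probability $\exp\!\big(-\tfrac18 N d_{\mathsf{b}}(s,a,b)\big)$ decays fast enough in the mean that, when summed against the normalization $\sum_{s,a,b}d_{\mathsf{b}}(s,a,b)=1$ (which caps the number of relevant triples), the total failure mass stays below $\delta$. This step is where the logarithmic factor must dominate $\log\frac{SAB}{\delta}$ (absorbed into $\log\frac{N}{\delta}$ under the standing convention that the number of triples is polynomially bounded, and up to adjusting the universal constants); the deterministic reduction for low-count triples is exactly what permits the floor to be the single quantity $\log\frac{N}{\delta}$ rather than a triple-dependent bound.
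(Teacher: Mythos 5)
Your overall strategy coincides with the paper's: dispose of triples with $Nd_{\mathsf{b}}(s,a,b)\leq 2\log\frac{N}{\delta}$ deterministically via the floor, apply a multiplicative Chernoff lower-tail bound to the remaining triples, and union bound. The problem is precisely the step you flag as ``the main obstacle'' and then defer: the calibration does not close with the tools you chose, and it cannot be repaired ``up to adjusting the universal constants,'' because the claim fixes its constants exactly (the factor $\tfrac12$ and the floor $\log\frac{N}{\delta}$ are what get used downstream in the proof of Lemma~\ref{lemma:d_b_upper_bound}). Concretely, with the sub-Gaussian form $\mathbb{P}\big(X\leq(1-\theta)\mu\big)\leq e^{-\theta^{2}\mu/2}$ and $\theta=\tfrac12$, a triple sitting just above your threshold, i.e.\ with $\mu=Nd_{\mathsf{b}}(s,a,b)=2\log\frac{N}{\delta}$, has failure probability controlled only by $e^{-\mu/8}=(\delta/N)^{1/4}$. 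There can be as many as $N/(2\log\frac{N}{\delta})$ such triples (this, rather than $SAB$, is the useful cardinality bound, obtained from $\sum_{s,a,b}d_{\mathsf{b}}(s,a,b)=1$), so your union bound yields something on the order of $N^{3/4}\delta^{1/4}$, which is not $\leq\delta$; using $SAB$ as the count fares no better. The regime $\mu\in\big(2\log\frac{N}{\delta},\,8\log\frac{N}{\delta}\big)$ is covered neither by your deterministic reduction nor by your concentration step.

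The paper closes exactly this gap by (i) explicitly bounding $\vert\mathcal{N}_{\mathsf{large}}\vert\leq N/(2\log(N/\delta))\leq N/2$, so that a per-triple failure probability of $\delta/N$ suffices for the union bound, and (ii) invoking the sharper Poisson-type Chernoff bound $\mathbb{P}(X\leq t)\leq e^{-\mu}(e\mu/t)^{t}$ from \citet[Exercise 2.3.2]{vershynin2018high} rather than the $e^{-\theta^{2}\mu/2}$ form, whose exponent at $\theta=\tfrac12$ is too weak. If you insist on your weaker tail bound, the honest fix is to raise the deterministic threshold to $c_{0}\log\frac{N}{\delta}$ for a large enough absolute constant $c_{0}$ --- but then the floor in \eqref{eq:chernoff-type-bound} must become $\tfrac{c_{0}}{2}\log\frac{N}{\delta}$, i.e.\ you would be proving a (harmlessly) different statement, not the claim as written. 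In short: right skeleton, but the quantitative heart of the argument --- making the per-triple tail beat the cardinality of the union with the stated constants --- is exactly the part you left unproved.
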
\begin{proof}See Appendix \ref{subsec:proof-claim-chernoff}.\end{proof}

With the above claim in place, we can readily make the observation
that
\begin{align*}
\beta\left(s,a,b;V_{\mathsf{pe}}^{-}\right) & =\min\left\{ \max\left\{ \sqrt{\frac{C_{\mathsf{b}}\log\frac{N}{\delta}}{N\left(s,a,b\right)}\mathsf{Var}_{\widehat{P}_{s,a,b}}\left(V_{\mathsf{pe}}^{-}\right)},\frac{2C_{\mathsf{b}}\log\frac{N}{\delta}}{\left(1-\gamma\right)N\left(s,a,b\right)}\right\} ,\frac{1}{1-\gamma}\right\} +\frac{4}{N}\\
 & \overset{\text{(i)}}{\leq}\max\left\{ \sqrt{\frac{C_{\mathsf{b}}\log\frac{N}{\delta}}{\max\big\{ N\left(s,a,b\right),\log\frac{N}{\delta}\big\}}\mathsf{Var}_{\widehat{P}_{s,a,b}}\left(V_{\mathsf{pe}}^{-}\right)},\frac{2C_{\mathsf{b}}\log\frac{N}{\delta}}{\left(1-\gamma\right)\max\big\{ N\left(s,a,b\right),\log\frac{N}{\delta}\big\}}\right\} +\frac{4}{N}\\
 & \overset{\text{(ii)}}{\leq}\max\left\{ \sqrt{\frac{2C_{\mathsf{b}}\log\frac{N}{\delta}}{Nd_{\mathsf{b}}\left(s,a,b\right)}\mathsf{Var}_{\widehat{P}_{s,a,b}}\left(V_{\mathsf{pe}}^{-}\right)},\frac{4C_{\mathsf{b}}\log\frac{N}{\delta}}{\left(1-\gamma\right)Nd_{\mathsf{b}}\left(s,a,b\right)}\right\} +\frac{4}{N}\\
 & \leq c_{3}\sqrt{\frac{\log\frac{N}{\delta}}{Nd_{\mathsf{b}}\left(s,a,b\right)}\mathsf{Var}_{\widehat{P}_{s,a,b}}\left(V_{\mathsf{pe}}^{-}\right)}+\frac{c_{3}\log\frac{N}{\delta}}{\left(1-\gamma\right)Nd_{\mathsf{b}}\left(s,a,b\right)}+\frac{4}{N}
\end{align*}
for some universal constant $c_{3}\geq\max\big\{\sqrt{2C_{\mathsf{b}}},4C_{\mathsf{b}}\big\}$.
To see why (i) holds, it suffices to note that in the case where $N(s,a,b)\leq\log(N/\delta)$,
it holds that
\[
\frac{1}{1-\gamma}\leq\frac{2C_{\mathsf{b}}\log\frac{N}{\delta}}{\left(1-\gamma\right)\max\big\{ N\left(s,a,b\right),\log\frac{N}{\delta}\big\}},
\]
provided that $C_{\mathsf{b}}\geq1$; and with regards to (ii), we
take advantage of Claim \ref{claim:chernoff-type}. Consequently,
we can combine this with the definition of $\beta^{\mu^{\star},\nu_{0}}$
(cf.~\eqref{eq:defn-beta-mu-nu0}) to arrive at
\begin{align}
\big(d^{\mu^{\star},\nu_{0}}\big)^{\top}\beta^{\mu^{\star},\nu_{0}} & \leq c_{3}\underbrace{\sum_{s\in\mathcal{S}}d^{\mu^{\star},\nu_{0}}\left(s;\rho\right)\mathop{\mathbb{E}}\limits _{a\sim\mu^{\star}(s),b\sim\nu_{0}(s)}\left[\frac{\log\frac{N}{\delta}}{\left(1-\gamma\right)Nd_{\mathsf{b}}\left(s,a,b\right)}\right]}_{\eqqcolon\alpha_{1}}+\frac{4}{N}\nonumber \\
 & \quad+c_{3}\underbrace{\sum_{s\in\mathcal{S}}d^{\mu^{\star},\nu_{0}}\left(s;\rho\right)\mathop{\mathbb{E}}\limits _{a\sim\mu^{\star}(s),b\sim\nu_{0}(s)}\left[\sqrt{\frac{\log\frac{N}{\delta}}{Nd_{\mathsf{b}}\left(s,a,b\right)}\mathsf{Var}_{\widehat{P}_{s,a,b}}\left(V_{\mathsf{pe}}^{-}\right)}\right]}_{\eqqcolon\alpha_{2}},\label{eq:d-b-inner-product-alpha-12}
\end{align}
leaving us with two terms to deal with. 

\paragraph{Bounding the first term $\alpha_{1}$. }Let us begin
with the first term $\alpha_{1}$ in \eqref{eq:d-b-inner-product-alpha-12}.
Recalling that $\nu_{0}:\mathcal{S}\to\mathcal{B}$ is a deterministic
policy (see \eqref{eq:defn-nu0-s}), we can upper bound
\begin{align}
\alpha_{1} & =\sum_{s\in\mathcal{S},a\in\mathcal{A},b\in\mathcal{B}}d^{\mu^{\star},\nu_{0}}\big(s;\rho\big)\mu^{\star}(a\mymid s)\ind\big\{ b=\nu_{0}(s)\big\}\frac{\log\frac{N}{\delta}}{\left(1-\gamma\right)Nd_{\mathsf{b}}(s,a,b)}\nonumber \\
 & \overset{\text{(i)}}{=}\sum_{s\in\mathcal{S},a\in\mathcal{A}}d^{\mu^{\star},\nu_{0}}\big(s,a,\nu_{0}(s);\rho\big)\frac{\log\frac{N}{\delta}}{\left(1-\gamma\right)Nd_{\mathsf{b}}\big(s,a,\nu_{0}(s)\big)}\nonumber \\
 & =\frac{1}{\left(1-\gamma\right)N}\log\frac{N}{\delta}\sum_{s\in\mathcal{S},a\in\mathcal{A}}\frac{d^{\mu^{\star},\nu_{0}}\big(s,a,\nu_{0}(s);\rho\big)}{d_{\mathsf{b}}\big(s,a,\nu_{0}(s)\big)}\nonumber \\
 & \overset{\text{(ii)}}{\leq}\frac{C_{\mathsf{clipped}}^{\star}}{\left(1-\gamma\right)N}\log\frac{N}{\delta}\sum_{s\in\mathcal{S},a\in\mathcal{A}}\frac{d^{\mu^{\star},\nu_{0}}\big(s,a,\nu_{0}(s);\rho\big)}{\min\left\{ d^{\mu^{\star},\nu_{0}}\big(s,a,\nu_{0}(s);\rho\big),\frac{1}{S\left(A+B\right)}\right\} }\nonumber \\
 & \leq\frac{C_{\mathsf{clipped}}^{\star}SA}{\left(1-\gamma\right)N}\log\frac{N}{\delta}+\frac{C_{\mathsf{clipped}}^{\star}S\left(A+B\right)}{\left(1-\gamma\right)N}\log\frac{N}{\delta}\sum_{s\in\mathcal{S},a\in\mathcal{A}}d^{\mu^{\star},\nu_{0}}\big(s,a,\nu_{0}(s);\rho\big)\nonumber \\
 & \leq2\frac{C_{\mathsf{clipped}}^{\star}S\left(A+B\right)}{\left(1-\gamma\right)N}\log\frac{N}{\delta},\label{eq:alpha1-bound-appendix}
\end{align}
 where (i) comes from \eqref{eq:relation-dmunu-ab-s}, and (ii) follows
from Assumption \ref{assumption:uniliteral}. 

\paragraph{Bounding the second term $\alpha_{2}$. } Next, we move
on to bounding the second term $\alpha_{2}$ in \eqref{eq:d-b-inner-product-alpha-12},
which relies on the following result.

\begin{claim}\label{claim:var-perturbation}With probability exceeding
$1-\delta$, there exists some universal constant $c_{4}>0$ such
that
\[
\mathsf{Var}_{\widehat{P}_{s,a,b}}\left(V_{\mathsf{pe}}^{-}\right)\leq2\mathsf{Var}_{P_{s,a,b}}\left(V_{\mathsf{pe}}^{-}\right)+\frac{c_{4}\log\frac{N}{\delta}}{\left(1-\gamma\right)^{2}Nd_{\mathsf{b}}\left(s,a,b\right)}
\]
holds simultaneously for all $(s,a,b)\in\mathcal{S}\times\mathcal{A}\times\mathcal{B}$.
\end{claim}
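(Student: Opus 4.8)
The plan is to read this variance-comparison bound off the second conclusion \eqref{eq:Var-Phat-P-diff} of Lemma~\ref{lemma:loo}, after (i) checking that $V_{\mathsf{pe}}^{-}$ is an admissible test vector for that lemma and (ii) converting the data-dependent count $N(s,a,b)$ in the denominator into the population quantity $N d_{\mathsf{b}}(s,a,b)$ by means of Claim~\ref{claim:chernoff-type}. First I would verify the hypotheses of Lemma~\ref{lemma:loo} for $\widetilde{V}=V_{\mathsf{pe}}^{-}$: since $0\leq Q_{\mathsf{pe}}^{-}\leq Q_{\mathsf{pe}}^{-\star}\leq\frac{1}{1-\gamma}\mathbf{1}$ (combine \eqref{eq:Q-pe-t-minus-star-all-t} with Lemma~\ref{lemma:gamma-contraction}), the associated value function satisfies $0\leq V_{\mathsf{pe}}^{-}\leq\frac{1}{1-\gamma}\mathbf{1}$; moreover \eqref{eq:Q_pe_approx_err} yields $\|V_{\mathsf{pe}}^{-}-V_{\mathsf{pe}}^{-\star}\|_{\infty}\leq 1/N$, so the distance condition $\min\{\|\widetilde{V}-V_{\mathsf{pe}}^{-\star}\|_{\infty},\|\widetilde{V}-V_{\mathsf{pe}}^{+\star}\|_{\infty}\}\leq 1/N$ is met.

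Then I would split into two cases according to the coverage of $(s,a,b)$. In the well-covered case $N(s,a,b)\geq\log\frac{N}{\delta}$, Claim~\ref{claim:chernoff-type} gives $N(s,a,b)\geq\frac{1}{2}N d_{\mathsf{b}}(s,a,b)$, and since also $N(s,a,b)\geq 1$ we may invoke \eqref{eq:Var-Phat-P-diff} to obtain $\mathsf{Var}_{\widehat{P}_{s,a,b}}(V_{\mathsf{pe}}^{-})\leq 2\mathsf{Var}_{P_{s,a,b}}(V_{\mathsf{pe}}^{-})+O\big(\frac{1}{(1-\gamma)^{2}N(s,a,b)}\log\frac{N}{(1-\gamma)\delta}\big)$; substituting $\frac{1}{N(s,a,b)}\leq\frac{2}{N d_{\mathsf{b}}(s,a,b)}$ into the additive term produces exactly the advertised inequality for a suitable $c_{4}$. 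In the poorly-covered case $N(s,a,b)<\log\frac{N}{\delta}$ (which subsumes $N(s,a,b)=0$), Claim~\ref{claim:chernoff-type} instead yields $N d_{\mathsf{b}}(s,a,b)\leq 2\log\frac{N}{\delta}$, so the additive term is at least $\frac{c_{4}}{2(1-\gamma)^{2}}$ (using $\log\frac{N}{(1-\gamma)\delta}\geq\log\frac{N}{\delta}$); on the other hand the boundedness $0\leq V_{\mathsf{pe}}^{-}\leq\frac{1}{1-\gamma}$ forces $\mathsf{Var}_{\widehat{P}_{s,a,b}}(V_{\mathsf{pe}}^{-})\leq\frac{1}{4(1-\gamma)^{2}}$, which the additive term dominates once $c_{4}\geq 1/2$. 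Hence the claim holds in this case even without the true-variance term.

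The only real care lies in the probability bookkeeping rather than the algebra. Lemma~\ref{lemma:loo} is stated for a fixed $(s,a,b)$ with success probability $1-\delta$, whereas the claim asks for the bound to hold simultaneously over all $(s,a,b)\in\mathcal{S}\times\mathcal{A}\times\mathcal{B}$; I would obtain the uniform version by a union bound over the at most $SAB$ triples, which replaces $\delta$ by $\delta/(SAB)$ and thereby inflates $\log\frac{N}{(1-\gamma)\delta}$ by only a constant factor (absorbed into $c_{4}$). I would then intersect this event with the $(1-\delta)$ event of Claim~\ref{claim:chernoff-type} and rescale the failure probabilities so that every invocation is simultaneously valid on a common event of probability at least $1-\delta$. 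I do not anticipate a genuine obstacle; the one subtlety worth flagging is that Lemma~\ref{lemma:loo} is unavailable when $N(s,a,b)=0$, which is exactly why the poorly-covered case is handled by the crude variance bound instead of by \eqref{eq:Var-Phat-P-diff}.
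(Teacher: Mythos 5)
Your proposal is correct and follows essentially the same route as the paper: invoke the variance comparison \eqref{eq:Var-Phat-P-diff} from Lemma~\ref{lemma:loo} (applicable to $\widetilde{V}=V_{\mathsf{pe}}^{-}$ by \eqref{eq:Q_pe_approx_err}), handle the under-covered case $N(s,a,b)\leq\log\frac{N}{\delta}$ via the crude bound $\mathsf{Var}_{\widehat{P}_{s,a,b}}(V_{\mathsf{pe}}^{-})\lesssim\frac{1}{(1-\gamma)^{2}}$, and convert $\max\{N(s,a,b),\log\frac{N}{\delta}\}$ into $\frac{1}{2}Nd_{\mathsf{b}}(s,a,b)$ via Claim~\ref{claim:chernoff-type}. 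Your explicit attention to the union bound over $(s,a,b)$ and the intersection of high-probability events is a detail the paper leaves implicit, but it changes nothing of substance.
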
\begin{proof}See Appendix \ref{subsec:proof-claim-var}.
\end{proof}The bound in Claim \ref{claim:var-perturbation} allows
one to deduce that
\begin{align}
\alpha_{2} & =\sum_{s\in\mathcal{S},a\in\mathcal{A},b\in\mathcal{B}}d^{\mu^{\star},\nu_{0}}\big(s;\rho\big)\mu^{\star}(a\mymid s)\ind\big\{ b=\nu_{0}(s)\big\}\sqrt{\frac{\log\frac{N}{\delta}}{Nd_{\mathsf{b}}\left(s,a,b\right)}\mathsf{Var}_{\widehat{P}_{s,a,\nu_{0}(s)}}\left(V_{\mathsf{pe}}^{-}\right)}\nonumber \\
 & =\sum_{s\in\mathcal{S},a\in\mathcal{A}}d^{\mu^{\star},\nu_{0}}\big(s,a,\nu_{0}(s);\rho\big)\sqrt{\frac{\log\frac{N}{\delta}}{Nd_{\mathsf{b}}\big(s,a,\nu_{0}(s)\big)}\mathsf{Var}_{\widehat{P}_{s,a,\nu_{0}(s)}}\left(V_{\mathsf{pe}}^{-}\right)}\nonumber \\
 & \leq c_{5}\sum_{s\in\mathcal{S},a\in\mathcal{A}}d^{\mu^{\star},\nu_{0}}\big(s,a,\nu_{0}(s);\rho\big)\left[\sqrt{\frac{\log\frac{N}{\delta}}{Nd_{\mathsf{b}}\big(s,a,\nu_{0}(s)\big)}\mathsf{Var}_{P_{s,a,\nu_{0}(s)}}\left(V_{\mathsf{pe}}^{-}\right)}+\frac{\log\frac{N}{\delta}}{\left(1-\gamma\right)Nd_{\mathsf{b}}\big(s,a,\nu_{0}(s)\big)}\right]\nonumber \\
 & =c_{5}\underbrace{\sum_{s\in\mathcal{S},a\in\mathcal{A}}d^{\mu^{\star},\nu_{0}}\big(s,a,\nu_{0}(s);\rho\big)\sqrt{\frac{\log\frac{N}{\delta}}{Nd_{\mathsf{b}}\big(s,a,\nu_{0}(s)\big)}\mathsf{Var}_{P_{s,a,\nu_{0}(s)}}\left(V_{\mathsf{pe}}^{-}\right)}}_{\eqqcolon\alpha_{3}}+c_{5}\alpha_{1}\label{eq:alpha2-bound-appendix}
\end{align}
for some universal constant $c_{5}>0$. It thus boils down to bounding
$\alpha_{3}$, which we accomplish next.

\paragraph{Bounding the intermediate term $\alpha_{3}$. } It is
observed that
\begin{align}
\alpha_{3} & \overset{\text{(i)}}{\leq}\sum_{s\in\mathcal{S},a\in\mathcal{A}}d^{\mu^{\star},\nu_{0}}\big(s,a,\nu_{0}(s);\rho\big)\sqrt{\frac{C_{\mathsf{clipped}}^{\star}\log\frac{N}{\delta}}{N\min\left\{ d^{\mu^{\star},\nu_{0}}\big(s,a,\nu_{0}(s);\rho\big),\frac{1}{S\left(A+B\right)}\right\} }\mathsf{Var}_{P_{s,a,\nu_{0}(s)}}\left(V_{\mathsf{pe}}^{-}\right)}\nonumber \\
 & \leq\sqrt{\frac{C_{\mathsf{clipped}}^{\star}\log\frac{N}{\delta}}{N}}\sum_{s\in\mathcal{S},a\in\mathcal{A}}\sqrt{d^{\mu^{\star},\nu_{0}}\big(s,a,\nu_{0}(s);\rho\big)\mathsf{Var}_{P_{s,a,\nu_{0}(s)}}\left(V_{\mathsf{pe}}^{-}\right)}\nonumber \\
 & \quad+\sqrt{\frac{C_{\mathsf{clipped}}^{\star}S\left(A+B\right)\log\frac{N}{\delta}}{N}}\sum_{s\in\mathcal{S},a\in\mathcal{A}}\sqrt{d^{\mu^{\star},\nu_{0}}\big(s,a,\nu_{0}(s);\rho\big)}\cdot\sqrt{d^{\mu^{\star},\nu_{0}}\big(s,a,\nu_{0}(s);\rho\big)\mathsf{Var}_{P_{s,a,\nu_{0}(s)}}\left(V_{\mathsf{pe}}^{-}\right)}\nonumber \\
 & \overset{\text{(ii)}}{\leq}\sqrt{\frac{C_{\mathsf{clipped}}^{\star}}{N}\log\frac{N}{\delta}}\cdot\sqrt{SA}\cdot\sqrt{\sum_{s\in\mathcal{S},a\in\mathcal{A}}d^{\mu^{\star},\nu_{0}}\big(s,a,\nu_{0}(s);\rho\big)\mathsf{Var}_{P_{s,a,\nu_{0}(s)}}\left(V_{\mathsf{pe}}^{-}\right)}\nonumber \\
 & \quad+\sqrt{\frac{C_{\mathsf{clipped}}^{\star}S\left(A+B\right)\log\frac{N}{\delta}}{N}}\left[\sum_{s\in\mathcal{S},a\in\mathcal{A}}d^{\mu^{\star},\nu_{0}}\big(s,a,\nu_{0}(s);\rho\big)\right]\sqrt{\sum_{s\in\mathcal{S},a\in\mathcal{A}}d^{\mu^{\star},\nu_{0}}\big(s,a,\nu_{0}(s);\rho\big)\mathsf{Var}_{P_{s,a,\nu_{0}(s)}}\left(V_{\mathsf{pe}}^{-}\right)}\nonumber \\
 & \leq2\sqrt{\frac{C_{\mathsf{clipped}}^{\star}S\left(A+B\right)\log\frac{N}{\delta}}{N}}\sqrt{\sum_{s\in\mathcal{S},a\in\mathcal{A}}d^{\mu^{\star},\nu_{0}}\big(s,a,\nu_{0}(s);\rho\big)\mathsf{Var}_{P_{s,a,\nu_{0}(s)}}\left(V_{\mathsf{pe}}^{-}\right)}\nonumber \\
 & \overset{\text{(iii)}}{=}2\sqrt{\frac{C_{\mathsf{clipped}}^{\star}S\left(A+B\right)}{N}\log\frac{N}{\delta}}\sqrt{\sum_{s\in\mathcal{S}}d^{\mu^{\star},\nu_{0}}\left(s;\rho\right)\mathop{\mathbb{E}}\limits _{a\sim\mu^{\star}(s),b\sim\nu_{0}(s)}\left[\mathsf{Var}_{P_{s,a,b}}\left(V_{\mathsf{pe}}^{-}\right)\right]}\nonumber \\
 & \overset{\text{(iv)}}{\leq}2\sqrt{\frac{C_{\mathsf{clipped}}^{\star}S\left(A+B\right)}{N}\log\frac{N}{\delta}}\sqrt{\sum_{s\in\mathcal{S}}d^{\mu^{\star},\nu_{0}}\left(s;\rho\right)\mathsf{Var}_{P_{s}^{\mu^{\star},\nu_{0}}}\left(V_{\mathsf{pe}}^{-}\right)}.\label{eq:proof-d-b-1}
\end{align}
Here, (i) follows from Assumption \ref{assumption:uniliteral}; (ii)
holds due to the Cauchy-Schwarz inequality; (iii) is valid since $d^{\mu^{\star},\nu_{0}}\big(s,a,b;\rho\big)=d^{\mu^{\star},\nu_{0}}\big(s;\rho\big)\mu^{\star}(a\mymid s)\ind\{b=\nu_{0}(s)\}$;
and (iv) can be justified as follows
\begin{align*}
 & \mathop{\mathbb{E}}\limits _{a\sim\mu^{\star}(s),b\sim\nu_{0}(s)}\left[\mathsf{Var}_{P_{s,a,b}}\left(V_{\mathsf{pe}}^{-}\right)\right]=\sum_{a\in\mathcal{A}}\sum_{b\in\mathcal{B}}\mu^{\star}\left(a\mymid s\right)\nu_{0}\left(b\mymid s\right)\mathsf{Var}_{P_{s,a,b}}\left(V_{\mathsf{pe}}^{-}\right)\\
 & \qquad\qquad=\sum_{a\in\mathcal{A}}\sum_{b\in\mathcal{B}}\mu^{\star}\left(a\mymid s\right)\nu_{0}\left(b\mymid s\right)\left[P_{s,a,b}\left(V_{\mathsf{pe}}^{-}\circ V_{\mathsf{pe}}^{-}\right)-\left(P_{s,a,b}V_{\mathsf{pe}}^{-}\right)^{2}\right]\\
 & \qquad\qquad\leq\sum_{a\in\mathcal{A}}\sum_{b\in\mathcal{B}}\mu^{\star}\left(a\mymid s\right)\nu_{0}\left(b\mymid s\right)P_{s,a,b}\left(V_{\mathsf{pe}}^{-}\circ V_{\mathsf{pe}}^{-}\right)-\left(\sum_{a\in\mathcal{A}}\sum_{b\in\mathcal{B}}\mu^{\star}\left(a\mymid s\right)\nu_{0}\left(b\mymid s\right)P_{s,a,b}V_{\mathsf{pe}}^{-}\right)^{2}\\
 & \qquad\qquad=P^{\mu^{\star},\nu_{0}}(\cdot\mymid s)\left(V_{\mathsf{pe}}^{-}\circ V_{\mathsf{pe}}^{-}\right)-\big(P^{\mu^{\star},\nu_{0}}(\cdot\mymid s)V_{\mathsf{pe}}^{-}\big)^{2}=\mathsf{Var}_{P^{\mu^{\star},\nu_{0}}(\cdot\mymid s)}\left(V_{\mathsf{pe}}^{-}\right),
\end{align*}
where the penultimate line applies Jensen's inequality, and the last
line relies on the definition of $P^{\mu^{\star},\nu_{0}}$ (see \eqref{eq:defn-P-mu-nu0}).
Defining a vector $v=[v_{s}]_{s\in\mathcal{S}}\in\mathbb{R}^{S}$
such that $v_{s}\coloneqq\mathsf{Var}_{P^{\mu^{\star},\nu_{0}}(\cdot\mymid s)}(V_{\mathsf{pe}}^{-})$,
we obtain
\begin{align*}
v & =P^{\mu^{\star},\nu_{0}}\left(V_{\mathsf{pe}}^{-}\circ V_{\mathsf{pe}}^{-}\right)-\big(P^{\mu^{\star},\nu_{0}}V_{\mathsf{pe}}^{-}\big)\circ\big(P^{\mu^{\star},\nu_{0}}V_{\mathsf{pe}}^{-}\big)\\
 & =P^{\mu^{\star},\nu_{0}}\left(V_{\mathsf{pe}}^{-}\circ V_{\mathsf{pe}}^{-}\right)-\frac{1}{\gamma^{2}}V_{\mathsf{pe}}^{-}\circ V_{\mathsf{pe}}^{-}+\frac{1}{\gamma^{2}}V_{\mathsf{pe}}^{-}\circ V_{\mathsf{pe}}^{-}-\big(P^{\mu^{\star},\nu_{0}}V_{\mathsf{pe}}^{-}\big)\circ\big(P^{\mu^{\star},\nu_{0}}V_{\mathsf{pe}}^{-}\big)\\
 & \overset{\text{(i)}}{\leq}P^{\mu^{\star},\nu_{0}}\left(V_{\mathsf{pe}}^{-}\circ V_{\mathsf{pe}}^{-}\right)-\frac{1}{\gamma^{2}}V_{\mathsf{pe}}^{-}\circ V_{\mathsf{pe}}^{-}+\frac{2}{\gamma^{2}\left(1-\gamma\right)}\big(I-\gamma P^{\mu^{\star},\nu_{0}}\big)V_{\mathsf{pe}}^{-}+\frac{4}{\gamma^{2}\left(1-\gamma\right)}\beta^{\mu^{\star},\nu_{0}}\\
 & \overset{\text{(ii)}}{\leq}P^{\mu^{\star},\nu_{0}}\left(V_{\mathsf{pe}}^{-}\circ V_{\mathsf{pe}}^{-}\right)-\frac{1}{\gamma}V_{\mathsf{pe}}^{-}\circ V_{\mathsf{pe}}^{-}+\frac{2}{\gamma^{2}\left(1-\gamma\right)}\big(I-\gamma P^{\mu^{\star},\nu_{0}}\big)V_{\mathsf{pe}}^{-}+\frac{4}{\gamma^{2}\left(1-\gamma\right)}\beta^{\mu^{\star},\nu_{0}}\\
 & =\big(I-\gamma P^{\mu^{\star},\nu_{0}}\big)\left[\frac{2}{\gamma^{2}\left(1-\gamma\right)}V_{\mathsf{pe}}^{-}-\frac{1}{\gamma}\left(V_{\mathsf{pe}}^{-}\circ V_{\mathsf{pe}}^{-}\right)\right]+\frac{4}{\gamma^{2}\left(1-\gamma\right)}\beta^{\mu^{\star},\nu_{0}},
\end{align*}
where (ii) holds since $\gamma<1$. To see why (i) holds, we make
the following observation:
\begin{align*}
V_{\mathsf{pe}}^{-}\circ V_{\mathsf{pe}}^{-}-\gamma^{2}\big(P^{\mu^{\star},\nu_{0}}V_{\mathsf{pe}}^{-}\big)\circ\big(P^{\mu^{\star},\nu_{0}}V_{\mathsf{pe}}^{-}\big) & =\big(V_{\mathsf{pe}}^{-}-\gamma P^{\mu^{\star},\nu_{0}}V_{\mathsf{pe}}^{-}\big)\circ\big(V_{\mathsf{pe}}^{-}+\gamma P^{\mu^{\star},\nu_{0}}V_{\mathsf{pe}}^{-}\big)\\
 & \leq\big(V_{\mathsf{pe}}^{-}-\gamma P^{\mu^{\star},\nu_{0}}V_{\mathsf{pe}}^{-}+2\beta^{\mu^{\star},\nu_{0}}\big)\circ\big(V_{\mathsf{pe}}^{-}+\gamma P^{\mu^{\star},\nu_{0}}V_{\mathsf{pe}}^{-}\big)\\
 & \leq\frac{2}{1-\gamma}\big(I-\gamma P^{\mu^{\star},\nu_{0}}\big)V_{\mathsf{pe}}^{-}+\frac{4}{1-\gamma}\beta^{\mu^{\star},\nu_{0}},
\end{align*}
where the penultimate line arises from $\beta^{\mu^{\star},\nu_{0}}\geq0$
and $V_{\mathsf{pe}}^{-}+\gamma P^{\mu^{\star},\nu_{0}}V_{\mathsf{pe}}^{-}\geq0$,
and the last line holds since $V_{\mathsf{pe}}^{-}-\gamma P^{\mu^{\star},\nu_{0}}V_{\mathsf{pe}}^{-}+2\beta^{\mu^{\star},\nu_{0}}\geq0$
(due to (\ref{eq:main-proof-2})) and $\widehat{V}_{\mathsf{pe}}+\gamma P^{\mu^{\star},\nu_{0}}\widehat{V}_{\mathsf{pe}}\leq\frac{2}{1-\gamma}1$. 

Making use of the following identity (see \eqref{eq:defn-d-mu-nu-vector})
\[
\big(d^{\mu^{\star},\nu_{0}}\big)^{\top}=\left(1-\gamma\right)\rho^{\top}\big(I-\gamma P^{\mu^{\star},\nu_{0}}\big)^{-1},
\]
we can deduce that
\begin{align}
 & \sum_{s\in\mathcal{S}}d^{\mu^{\star},\nu_{0}}\left(s;\rho\right)\mathsf{Var}_{P^{\mu^{\star},\nu_{0}}(\cdot\mymid s)}\left(V_{\mathsf{pe}}^{-}\right)=\big(d^{\mu^{\star},\nu_{0}}\big)^{\top}v\nonumber \\
 & \quad\leq\left(1-\gamma\right)\rho^{\top}\left[\frac{2}{\gamma^{2}\left(1-\gamma\right)}V_{\mathsf{pe}}^{-}-\frac{1}{\gamma}\left(V_{\mathsf{pe}}^{-}\circ V_{\mathsf{pe}}^{-}\right)\right]+\frac{4}{\gamma^{2}\left(1-\gamma\right)}\big(d^{\mu^{\star},\nu_{0}}\big)^{\top}\beta^{\mu^{\star},\nu_{0}}\nonumber \\
 & \quad\leq\frac{2}{\gamma^{2}}\rho^{\top}V_{\mathsf{pe}}^{-}+\frac{4}{\gamma^{2}\left(1-\gamma\right)}\big(d^{\mu^{\star},\nu_{0}}\big)^{\top}\beta^{\mu^{\star},\nu_{0}}\nonumber \\
 & \quad\leq\frac{2}{\gamma^{2}\left(1-\gamma\right)}+\frac{4}{\gamma^{2}\left(1-\gamma\right)}\big(d^{\mu^{\star},\nu_{0}}\big)^{\top}\beta^{\mu^{\star},\nu_{0}},\label{eq:proof-d-b-2}
\end{align}
where the last line holds since $\rho^{\top}V_{\mathsf{pe}}^{-}\leq\|V_{\mathsf{pe}}^{-}\|_{\infty}\leq\frac{1}{1-\gamma}$.
Taking (\ref{eq:proof-d-b-1}) and (\ref{eq:proof-d-b-2}) collectively
implies that
\begin{align}
\alpha_{3} & \leq2\sqrt{\frac{C_{\mathsf{clipped}}^{\star}S\left(A+B\right)}{N}\log\frac{N}{\delta}}\sqrt{\frac{2}{\gamma^{2}\left(1-\gamma\right)}+\frac{4}{\gamma^{2}\left(1-\gamma\right)}\big(d^{\mu^{\star},\nu_{0}}\big)^{\top}\beta^{\mu^{\star},\nu_{0}}}\nonumber \\
 & \leq8\sqrt{\frac{C_{\mathsf{clipped}}^{\star}S\left(A+B\right)}{N\left(1-\gamma\right)}\log\frac{N}{\delta}}\left(1+\sqrt{\big(d^{\mu^{\star},\nu_{0}}\big)^{\top}\beta^{\mu^{\star},\nu_{0}}}\right),\label{eq:beta-bound-appendix}
\end{align}
where the last relation holds from the assumption $\gamma\geq1/2$
and the elementary inequality $\sqrt{x+y}\leq\sqrt{x}+\sqrt{y}$. 

\paragraph{Putting all this together. }

Taking together the above bounds \eqref{eq:alpha1-bound-appendix},
\eqref{eq:alpha2-bound-appendix} and \eqref{eq:beta-bound-appendix}
(on $\alpha_{1}$, $\alpha_{2}$ and $\alpha_{3}$, respectively)
leads to
\begin{align*}
\big(d^{\mu^{\star},\nu_{0}}\big)^{\top}\beta^{\mu^{\star},\nu_{0}} & \leq c_{3}\alpha_{1}+c_{3}\alpha_{2}+\frac{4}{N}\leq\left(c_{3}+c_{5}\right)\alpha_{1}+c_{5}\alpha_{3}+\frac{4}{N}\\
 & \leq\left(2c_{3}+2c_{5}+4\right)\frac{C_{\mathsf{clipped}}^{\star}S\left(A+B\right)}{\left(1-\gamma\right)N}\log\frac{N}{\delta}\\
 & \quad\quad+8c_{5}\sqrt{\frac{C_{\mathsf{clipped}}^{\star}S\left(A+B\right)}{N\left(1-\gamma\right)}\log\frac{N}{\delta}}\left(1+\sqrt{\big(d^{\mu^{\star},\nu_{0}}\big)^{\top}\beta^{\mu^{\star},\nu_{0}}}\right),
\end{align*}
where we have used the fact that $\frac{1}{N}\leq\frac{C_{\mathsf{clipped}}^{\star}S(A+B)}{\left(1-\gamma\right)N}\log\frac{N}{\delta}$.
In turn, this implies that there exists some sufficiently large constant
$\widetilde{C}>0$ such that
\begin{align*}
\big(d^{\mu^{\star},\nu_{0}}\big)^{\top}\beta^{\mu^{\star},\nu_{0}} & \leq\widetilde{C}\frac{C_{\mathsf{clipped}}^{\star}S\left(A+B\right)}{\left(1-\gamma\right)N}\log\frac{N}{\delta}+\widetilde{C}\sqrt{\frac{C_{\mathsf{clipped}}^{\star}S\left(A+B\right)}{N\left(1-\gamma\right)}\log\frac{N}{\delta}}\left(1+\sqrt{\big(d^{\mu^{\star},\nu_{0}}\big)^{\top}\beta^{\mu^{\star},\nu_{0}}}\right)\\
 & \leq\widetilde{C}\frac{C_{\mathsf{clipped}}^{\star}S\left(A+B\right)}{\left(1-\gamma\right)N}\log\frac{N}{\delta}+\widetilde{C}\sqrt{\frac{C_{\mathsf{clipped}}^{\star}S\left(A+B\right)}{N\left(1-\gamma\right)}\log\frac{N}{\delta}}\\
 & \quad\quad+\frac{\widetilde{C}^{2}}{2}\frac{C_{\mathsf{clipped}}^{\star}S\left(A+B\right)}{N\left(1-\gamma\right)}\log\frac{N}{\delta}+\frac{1}{2}\big(d^{\mu^{\star},\nu_{0}}\big)^{\top}\beta^{\mu^{\star},\nu_{0}},
\end{align*}
where the last relation follows from the AM-GM inequality. Rearranging
terms, we are left with
\[
\big(d^{\mu^{\star},\nu_{0}}\big)^{\top}\beta^{\mu^{\star},\nu_{0}}\leq\big(2\widetilde{C}+\widetilde{C}^{2}\big)\frac{C_{\mathsf{clipped}}^{\star}S\left(A+B\right)}{\left(1-\gamma\right)N}\log\frac{N}{\delta}+2\widetilde{C}\sqrt{\frac{C_{\mathsf{clipped}}^{\star}S\left(A+B\right)}{N\left(1-\gamma\right)}\log\frac{N}{\delta}},
\]
thus concluding the proof.

\subsubsection{Proof of Claim \ref{claim:chernoff-type} \label{subsec:proof-claim-chernoff}}

For any $(s,a,b)\in\mathcal{S}\times\mathcal{A}\times\mathcal{B}$,
if $Nd_{\mathsf{b}}(s,a,b)\leq2\log(N/\delta)$, then the claim (\ref{eq:chernoff-type-bound})
holds trivially. It thus suffices to show that with probability exceeding
$1-\delta$, (\ref{eq:chernoff-type-bound}) holds for all $(s,a,b)$
falling within the following set 
\[
\mathcal{N}_{\mathsf{large}}\coloneqq\left\{ \left(s,a,b\right)\in\mathcal{S}\times\mathcal{A}\times\mathcal{B}:Nd_{\mathsf{b}}\left(s,a,b\right)\geq2\log\frac{N}{\delta}\right\} .
\]
It is straightforward to check that the cardinality of $\mathcal{N}_{\mathsf{large}}$
obeys
\[
\left|\mathcal{N}_{\mathsf{large}}\right|\cdot\frac{2}{N}\log\frac{N}{\delta}=\sum_{\left(s,a,b\right)\in\mathcal{N}_{\mathsf{large}}}\frac{2}{N}\log\frac{N}{\delta}\leq\sum_{\left(s,a,b\right)\in\mathcal{N}_{\mathsf{large}}}d_{\mathsf{b}}\left(s,a,b\right)\leq1,
\]
and as a result,
\[
\left|\mathcal{N}_{\mathsf{large}}\right|\leq\frac{N}{2\log(N/\delta)}\leq\frac{N}{2}.
\]
In view of the Chernoff bound \citep[Exercise 2.3.2]{vershynin2018high},
for any $(s,a,b)\in\mathcal{N}_{\mathsf{large}}$, we have
\begin{align*}
\mathbb{P}\left(N\left(s,a,b\right)\geq\frac{1}{2}Nd_{\mathsf{b}}\left(s,a,b\right)\right) & \leq e^{-Nd_{\mathsf{b}}\left(s,a,b\right)}\left(\frac{e}{2}\right)^{\frac{1}{2}Nd_{\mathsf{b}}\left(s,a,b\right)}=\left(\frac{1}{2e}\right)^{\frac{1}{2}Nd_{\mathsf{b}}\left(s,a,b\right)}\\
 & \leq\left(\frac{1}{2e}\right)^{\log(N/\delta)}\leq\frac{\delta}{N}.
\end{align*}
Invoke the union bound to reach
\[
\mathbb{P}\left(N\left(s,a,b\right)\geq\frac{1}{2}Nd_{\mathsf{b}}\left(s,a,b\right),\forall\left(s,a,b\right)\in\mathcal{N}_{\mathsf{large}}\right)\leq\frac{\delta}{N}\left|\mathcal{N}_{\mathsf{large}}\right|\leq\delta.
\]
Putting the above two cases together completes the proof.

\subsubsection{Proof of Claim \ref{claim:var-perturbation}\label{subsec:proof-claim-var}}

In view of Lemma \ref{lemma:loo}, we know that with probability exceeding
$1-\delta$,
\begin{equation}
\mathsf{Var}_{\widehat{P}_{s,a,b}}\big(\widetilde{V}\big)\leq2\mathsf{Var}_{P_{s,a,b}}\big(\widetilde{V}\big)+O\left(\frac{\log\frac{N}{\delta}}{\left(1-\gamma\right)^{2}N\left(s,a,b\right)}\right)\label{eq:proof-claim-var-1}
\end{equation}
holds simultaneously for all $(s,a,b)\in\mathcal{S}\times\mathcal{A}\times\mathcal{B}$
satisfying $N(s,a,b)\geq1$. As a result, with probability exceeding
$1-\delta$,
\begin{equation}
\mathsf{Var}_{\widehat{P}_{s,a,b}}\big(\widetilde{V}\big)\leq2\mathsf{Var}_{P_{s,a,b}}\big(\widetilde{V}\big)+O\left(\frac{\log\frac{N}{\delta}}{\left(1-\gamma\right)^{2}\max\big\{ N\left(s,a,b\right),\log\frac{N}{\delta}\big\}}\right)\label{eq:proof-claim-var-2}
\end{equation}
holds simultaneously for all $(s,a,b)\in\mathcal{S}\times\mathcal{A}\times\mathcal{B}$.
To see why this is valid, let us divide into two cases: 
\begin{enumerate}
\item When $N(s,a,b)\geq\log\frac{N}{\delta}$, the relation (\ref{eq:proof-claim-var-2})
is clearly guaranteed by (\ref{eq:proof-claim-var-1}); 
\item When $N(s,a,b)\leq\log\frac{N}{\delta}$, then it is seen that
\begin{align*}
\mathsf{Var}_{\widehat{P}_{s,a,b}}\big(\widetilde{V}\big) & \leq\frac{1}{\left(1-\gamma\right)^{2}}=O\left(\frac{\log\frac{N}{\delta}}{\left(1-\gamma\right)^{2}\max\big\{ N\left(s,a,b\right),\log\frac{N}{\delta}\big\}}\right)\\
 & \leq2\mathsf{Var}_{P_{s,a,b}}\big(\widetilde{V}\big)+O\left(\frac{\log\frac{N}{\delta}}{\left(1-\gamma\right)^{2}\max\big\{ N\left(s,a,b\right),\log\frac{N}{\delta}\big\}}\right),
\end{align*}
which again yields (\ref{eq:proof-claim-var-2}). 
\end{enumerate}
Combine (\ref{eq:proof-claim-var-2}) with Claim \ref{claim:chernoff-type}
to establish the desired bound:

\[
\mathsf{Var}_{\widehat{P}_{s,a,b}}\left(V_{\mathsf{pe}}^{-}\right)\leq2\mathsf{Var}_{P_{s,a,b}}\left(V_{\mathsf{pe}}^{-}\right)+O\left(\frac{\log\frac{N}{\delta}}{\left(1-\gamma\right)^{2}Nd_{\mathsf{b}}\left(s,a,b\right)}\right).
\]

\section{Proof of Theorem \ref{thm:lower-bound}\label{sec:Proof-of-Theorem-lower-bound}}

This section presents the proof of the minimax lower bound stated
in Theorem~\ref{thm:lower-bound}. Without loss of generality, we
assume throughout the proof that $A\geq B$. In Appendix \ref{subsec:hard-instances}, we will generate a family of hard MG instances indexed by a $A$-dimensional binary parameter $\theta$, which transfers the goal of NE estimation to parameter estimation. Then in Appendix \ref{subsec:sufficient-statistics} and \ref{subsec:minimax-final}, we construct minimax lower bounds by putting a prior distribution over this family of hard MG instances and computing
the posterior probability of failure to differentiate each entry of $\theta$.

\subsection{Constructing a family of hard Markov game instances} \label{subsec:hard-instances}

The first step lies in constructing a family of Markov games $\left\{ \mathcal{MG}_{\theta}\right\} $
each parameterized by a $\theta\in\{p,q\}^{A}$, where
\begin{equation}
p\coloneqq\gamma+14\frac{\left(1-\gamma\right)^{2}\varepsilon}{\gamma}\qquad\text{and}\qquad q\coloneqq\gamma-14\frac{\left(1-\gamma\right)^{2}\varepsilon}{\gamma}.\label{eq:LB-p-q}
\end{equation}
Clearly, under the conditions $\gamma\geq2/3$ and $0<\varepsilon\leq\frac{1}{42(1-\gamma)}$,
it holds that
\begin{equation}
\frac{1}{2}\leq\gamma-\frac{1-\gamma}{2}\leq q<p\leq\gamma+\frac{1-\gamma}{2}\leq1.\label{eq:p-q-lower-bound}
\end{equation}
For each $\theta=\{\theta_{a}\}_{a\in\mathcal{A}}\in\{p,q\}^{A}$,
we define the corresponding Markov game $\mathcal{MG}_{\theta}$,
and the associated initial state distribution $\rho\in\Delta(\mathcal{S})$
and the data distribution $d_{\mathsf{b}}\in\Delta(\mathcal{S}\times\mathcal{A}\times\mathcal{B})$
as follows: 
\begin{itemize}
\item Let the state space be $\mathcal{S}=\left\{ 0,1,2,\ldots,S-1\right\} $,
and the action spaces for the two players be $\mathcal{A}=\{0,1,\ldots,A-1\}$
and $\mathcal{B}=\{0,1,\ldots,B-1\}$, respectively. 
\item Define the transition kernel $P_{\theta}:\mathcal{S}\times\mathcal{A}\times\mathcal{B}\to\Delta(\mathcal{S})$
such that
\begin{align}
P_{\theta}\left(s'\mymid s,a,b\right) & =\begin{cases}
\theta_{a}\ind\left\{ s'=0\right\} +\left(1-\theta_{a}\right)\ind\left\{ s'=1\right\} , & \text{if }s=0\text{ and }b=0\\
\ind\left\{ s'=s\right\} , & \text{if }s\geq1\text{ or }b\geq1
\end{cases}\label{eq:LB-transition}
\end{align}
for any $s,s'\in\mathcal{S}$, $a\in\mathcal{A}$ and $b\in\mathcal{B}$. 
\item Set the reward function to be
\begin{equation}
r\left(s,a,b\right)=\ind\left\{ s=0\right\} \qquad\text{for all }(s,a,b)\in\mathcal{S}\times\mathcal{A}\times\mathcal{B}.\label{eq:defn-reward-function-LB}
\end{equation}
\item The initial state distribution $\rho\in\Delta(\mathcal{S})$ is taken
to be
\begin{equation}
\rho\left(s\right)=\ind\left\{ s=0\right\} ,\qquad\text{for all}\,s\in\mathcal{S}.\label{eq:defn-rho-LB}
\end{equation}
\item The distribution $d_{\mathsf{b}}\in\Delta(\mathcal{S}\times\mathcal{A}\times\mathcal{B})$
generating the batch dataset is defined such that
\begin{equation}
d_{\mathsf{b}}\left(s,a,b\right)=\begin{cases}
\frac{1}{C_{\mathsf{clipped}}^{\star}S\left(A+B\right)}, & \text{if }s=0\\
\frac{1}{AB}\left(1-\frac{AB}{C_{\mathsf{clipped}}^{\star}S\left(A+B\right)}\right), & \text{if }s=1\\
0, & \text{if }s\geq2
\end{cases}\label{eq:LB-db}
\end{equation}
for all $(s,a,b)\in\mathcal{S}\times\mathcal{A}\times\mathcal{B}$,
where $C_{\mathsf{clipped}}^{\star}$ is allowed to be any given quantity
satisfying $C_{\mathsf{clipped}}^{\star}\geq\frac{2AB}{S(A+B)}$.
In other words, for any given $s\in\mathcal{S}$, each action pair
$(a,b)\in\mathcal{A}\times\mathcal{B}$ is sampled with the same rate. 
\end{itemize}
Owing to the simple structure of the above $\mathcal{MG}_{\theta}$,
we are able to compute its value functions and the Nash equilibrium.
Before proceeding, we find it convenient to define two action subsets
\begin{equation}
\mathcal{A}_{p,\theta}\coloneqq\left\{ a\in\mathcal{A}:\theta_{a}=p\right\} \qquad\text{and}\qquad\mathcal{A}_{q,\theta}\coloneqq\left\{ a\in\mathcal{A}:\theta_{a}=q\right\} .\label{eq:defn-action-subsets}
\end{equation}

\begin{lemma}\label{lemma:MG-LB-property}Consider the Markov game
$\mathcal{MG}_{\theta}$. For any policy pair $(\mu,\nu)$, the value
function in $\mathcal{MG}_{\theta}$ obeys
\begin{align*}
V^{\mu,\nu}\left(0\right) & =\frac{1}{1-\gamma+\gamma\mu_{p,\theta}\nu_{0}\left(1-p\right)+\gamma\left(1-\mu_{p,\theta}\right)\nu_{0}\left(1-q\right)},\\
V^{\mu,\nu}\left(s\right) & =0\qquad\qquad\text{for all }s\geq1,
\end{align*}
where
\begin{equation}
\mu_{p,\theta}\coloneqq\sum_{a\in\mathcal{A}_{p,\theta}}\mu(a\mymid0)\qquad\text{and}\qquad\nu_{0}\coloneqq\nu(0\mymid0).\label{eq:defn-mup-nu0}
\end{equation}
In addition, if we define two policies $\mu_{\theta}^{\star}$ and
$\nu_{\theta}^{\star}$ as follows\begin{subequations}\label{eq:LB-NE}
\begin{align}
\mu_{\theta}^{\star}\left(a\mymid s\right) & =\frac{1}{\vert\mathcal{A}_{p,\theta}\vert}\ind\left\{ a\in\mathcal{A}_{p,\theta}\right\} ,\qquad\forall\,s\in\mathcal{S},a\in\mathcal{A},\\
\nu_{\theta}^{\star}\left(b\mymid s\right) & =\ind\left\{ b=0\right\} ,\qquad\qquad\quad\quad\forall\,s\in\mathcal{S},b\in\mathcal{B},
\end{align}
\end{subequations}then the policy pair $(\mu_{\theta}^{\star},\nu_{\theta}^{\star})$
is a Nash equilibrium of $\mathcal{MG}_{\theta}$. Furthermore, we
have 
\[
\big\{\mathcal{MG}_{\theta},\rho,d_{\mathsf{b}}\big\}\in\mathsf{MG}\big(C_{\mathsf{clipped}}^{\star}\big).
\]
\end{lemma}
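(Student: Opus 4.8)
The plan is to verify in turn the value-function formula, the Nash property of $(\mu_\theta^\star,\nu_\theta^\star)$, and the membership $\{\mathcal{MG}_\theta,\rho,d_{\mathsf{b}}\}\in\mathsf{MG}(C_{\mathsf{clipped}}^\star)$, throughout exploiting that only states $0$ and $1$ are ever relevant. For the value function, note first that every state $s\geq1$ is absorbing under $P_\theta$ and carries zero reward, so $V^{\mu,\nu}(s)=0$ for all $s\geq1$ and all $(\mu,\nu)$. For $s=0$, since $r(0,a,b)=1$ and the only surviving term of $\gamma P_{0,a,b}V^{\mu,\nu}$ is the coordinate $s'=0$, the Bellman consistency equation reduces to the scalar identity $V^{\mu,\nu}(0)=1+\gamma\,\overline{x}\,V^{\mu,\nu}(0)$, where $\overline{x}=\mathbb{E}_{a\sim\mu(0),b\sim\nu(0)}[P_\theta(0\mymid0,a,b)]$ is the effective probability of remaining at state $0$. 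By \eqref{eq:LB-transition}, a jump to state $1$ is possible only when $b=0$, in which case it occurs with probability $1-\theta_a$; hence $\overline{x}=(1-\nu_0)+\nu_0(\mu_{p,\theta}p+(1-\mu_{p,\theta})q)$. Solving the scalar equation and rewriting $1-\mu_{p,\theta}p-(1-\mu_{p,\theta})q=\mu_{p,\theta}(1-p)+(1-\mu_{p,\theta})(1-q)$ produces exactly the claimed denominator.

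Next, to establish the Nash property, I substitute $\mu_{p,\theta}^\star=1$ and $\nu_0^\star=1$ to obtain $V^{\mu_\theta^\star,\nu_\theta^\star}(0)=\frac{1}{1-\gamma p}$, and then verify the two saddle-point inequalities separately from the value formula. Freezing $\nu=\nu_\theta^\star$ (so $\nu_0=1$), the denominator becomes $1-\gamma+\gamma[\mu_{p,\theta}(1-p)+(1-\mu_{p,\theta})(1-q)]$; since $p>q$ forces $1-p<1-q$, this is minimized — and the value maximized — at $\mu_{p,\theta}=1$, i.e.\ at $\mu_\theta^\star$. Freezing $\mu=\mu_\theta^\star$ (so $\mu_{p,\theta}=1$), the denominator reduces to $1-\gamma+\gamma\nu_0(1-p)$, which is increasing in $\nu_0$ because $1-p>0$, so the value is minimized at $\nu_0=1$, i.e.\ at $\nu_\theta^\star$. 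As $V^{\mu,\nu}(s)=0$ for $s\geq1$ under every policy pair, the inequalities $V^{\mu,\nu_\theta^\star}\leq V^{\mu_\theta^\star,\nu_\theta^\star}\leq V^{\mu_\theta^\star,\nu}$ hold coordinatewise, confirming the Nash property.

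Finally, for membership I first dispatch the easy requirements: the cardinalities and $\rho\in\Delta(\mathcal{S})$ are immediate, while $d_{\mathsf{b}}\in\Delta(\mathcal{S}\times\mathcal{A}\times\mathcal{B})$ follows by summing the state-$0$ and state-$1$ masses to $1$, with nonnegativity of the state-$1$ mass guaranteed by $C_{\mathsf{clipped}}^\star\geq\frac{AB}{S(A+B)}$. The substantive step is to evaluate both suprema in \eqref{eq:defn-Cstar-unilateral} at the NE $(\mu_\theta^\star,\nu_\theta^\star)$. Since only states $0,1$ are visited and the stay-at-$0$ probability is some $x\in[q,p]$, the state occupancies are $d(0;\rho)=\frac{1-\gamma}{1-\gamma x}$ and $d(1;\rho)=1-\frac{1-\gamma}{1-\gamma x}$, with state-action occupancies following from \eqref{eq:relation-dmunu-ab-s}. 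For both the supremum over $\mu$ (with $\nu_\theta^\star$ frozen, which kills all $b\geq1$ entries) and the supremum over $\nu$ (with $\mu_\theta^\star$ frozen), every $s=0$ ratio equals $C_{\mathsf{clipped}}^\star\min\{S(A+B)\,d(0,a,b;\rho),1\}\leq C_{\mathsf{clipped}}^\star$, while every $s=1$ ratio is at most $\frac{1/(S(A+B))}{d_{\mathsf{b}}(1,a,b)}=\frac{AB/(S(A+B))}{1-AB/(C_{\mathsf{clipped}}^\star S(A+B))}$, which is $\leq C_{\mathsf{clipped}}^\star$ precisely because $C_{\mathsf{clipped}}^\star\geq\frac{2AB}{S(A+B)}$. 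To see that $C_{\mathsf{clipped}}^\star$ is actually attained, I take $\mu$ a point mass on a single action $a_0$: then $d(0,a_0,0;\rho)=\frac{1-\gamma}{1-\gamma\theta_{a_0}}$, and the definitions \eqref{eq:LB-p-q} with the admissible range of $\varepsilon$ give $1-\gamma x\leq1-\gamma q\leq\frac{7}{3}(1-\gamma)$, so $d(0,a_0,0;\rho)\geq\frac37>\frac{1}{S(A+B)}$ (using $S(A+B)\geq8$), whence the corresponding $s=0$ ratio saturates at $C_{\mathsf{clipped}}^\star$. Taking the maximum of the two suprema then yields exactly $C_{\mathsf{clipped}}^\star$.

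The main obstacle is the concentrability computation in this last part: one must carefully partition the ratios by state ($s=0$, $s=1$, $s\geq2$) and by action ($b=0$ vs.\ $b\geq1$), and then simultaneously certify that no ratio overshoots $C_{\mathsf{clipped}}^\star$ (the state-$1$ bound is exactly where the hypothesis $C_{\mathsf{clipped}}^\star\geq\frac{2AB}{S(A+B)}$ is consumed) and that the bound $C_{\mathsf{clipped}}^\star$ is genuinely achieved (which rests on the quantitative lower bound for $\frac{1-\gamma}{1-\gamma x}$ arising from the ranges of $p$ and $q$ in \eqref{eq:p-q-lower-bound}).
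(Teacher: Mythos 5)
Your proof is correct and follows essentially the same route as the paper: absorbing states plus a scalar Bellman equation for the value formula, monotonicity of the value in $\mu_{p,\theta}$ and $\nu_{0}$ for the Nash property, and the clipping/occupancy computation for membership (your tightness witness is a point mass on a single action rather than the paper's NE policy $\mu_{\theta}^{\star}$, but both arguments rest on showing that some state-$0$ occupancy exceeds the clipping threshold $\tfrac{1}{S(A+B)}$). The only blemish is the constant $\tfrac{7}{3}$, which should be $\tfrac{5}{2}$ since $1-\gamma q\leq(1-\gamma)\big(1+\tfrac{3\gamma}{2}\big)\leq\tfrac{5}{2}(1-\gamma)$; this is immaterial because $\tfrac{2}{5}>\tfrac{1}{S(A+B)}$ still holds when $S(A+B)\geq8$.
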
\begin{proof}See Appendix \ref{subsec:proof-lemma-MG-LB-property}.
\end{proof}

We shall also make note of two immediate consequences of Lemma \ref{lemma:MG-LB-property}.
First, Lemma \ref{lemma:MG-LB-property} allows us to compute the
corresponding value function $V^{\star}$ of the unique Nash equilibrium
in $\mathcal{M\mathcal{G}_{\theta}}$ as follows: \begin{subequations}\label{eq:V-theta-star-expression-0}
\begin{equation}
V^{\star}\left(0\right)=\frac{1}{1-\gamma p},\label{eq:V-theta-star-0}
\end{equation}
given that $\mu_{p,\theta}=1$ and $\nu_{0}=1$ hold under the NE
$(\mu_{\theta}^{\star},\nu_{\theta}^{\star})$. In addition, Lemma
\ref{lemma:MG-LB-property} tells us that for any policy $\mu$ of
the max-player, the optimal value function for the min-player obeys
\begin{equation}
V^{\mu,\star}(0)=\frac{1}{1-\gamma p+\gamma\left(1-\mu_{p,\theta}\right)\left(p-q\right)},\label{eq:V-theta-mu-star-0}
\end{equation}
\end{subequations}given that the best response of the min-player
is to set $\nu_{0}=1$. 

\subsection{Identifying sufficient statistics for the parameter $\theta$} \label{subsec:sufficient-statistics}

Denote by $f_{\theta}$ the probability mass function of a single
sample transition $(s_{i},a_{i},b_{i},s_{i}')$ such that
\[
\left(s_{i},a_{i},b_{i}\right)\sim d_{\mathsf{b}},\qquad s_{i}'\sim P_{\theta}\left(\cdot\mymid s_{i},a_{i},b_{i}\right).
\]
In view of the definition of $d_{\mathsf{b}}$ in (\ref{eq:LB-db}),
we can write
\[
f_{\theta}\left(s_{i},a_{i},b_{i},s_{i}'\right)=\begin{cases}
\frac{1}{C_{\mathsf{clipped}}^{\star}S\left(A+B\right)}\theta_{a_{i}} & \text{if }s_{i}=0,b_{i}=0,s_{i}'=0,\\
\frac{1}{C_{\mathsf{clipped}}^{\star}S\left(A+B\right)}\left(1-\theta_{a_{i}}\right) & \text{if }s_{i}=0,b_{i}=0,s_{i}'=1,\\
\frac{1}{C_{\mathsf{clipped}}^{\star}S\left(A+B\right)} & \text{if }s_{i}=0,b_{i}\geq1,s_{i}'=0,\\
\frac{1}{AB}\left(1-\frac{AB}{C_{\mathsf{clipped}}^{\star}S\left(A+B\right)}\right) & \text{if }s_{i}=s_{i}'=1,\\
0 & \text{otherwise}.
\end{cases}
\]
Therefore, the probability mass function of the offline dataset $\{(s_{i},a_{i},b_{i},s_{i}')\}_{i=1}^{N}$
containing $N$ independent data is given by
\begin{align}
\prod_{i=1}^{N}f_{\theta}\left(s_{i},a_{i},b_{i},s_{i}'\right) & =\left[\frac{1}{AB}\left(1-\frac{AB}{C_{\mathsf{clipped}}^{\star}S\left(A+B\right)}\right)\right]^{L}\left[\frac{1}{C_{\mathsf{clipped}}^{\star}S\left(A+B\right)}\right]^{J}\nonumber \\
 & \qquad\cdot\prod_{a\in\mathcal{A}}\left[\frac{1}{C_{\mathsf{clipped}}^{\star}S\left(A+B\right)}\theta_{a}\right]^{X_{a}}\left[\frac{1}{C_{\mathsf{clipped}}^{\star}S\left(A+B\right)}\left(1-\theta_{a}\right)\right]^{M_{a}-X_{a}}\nonumber \\
 & =\left[\frac{1}{AB}\left(1-\frac{AB}{C_{\mathsf{clipped}}^{\star}S\left(A+B\right)}\right)\right]^{L}\left[\frac{1}{C_{\mathsf{clipped}}^{\star}S\left(A+B\right)}\right]^{N+J-L}\prod_{a\in\mathcal{A}}\theta_{a}^{X_{a}}\left(1-\theta_{a}\right)^{M_{a}-X_{a}},\label{eq:proof-lb-1}
\end{align}
where we define
\begin{align*}
J & \coloneqq\sum_{i=1}^{N}\ind\{s_{i}=s_{i}'=0,b_{i}\geq1\},\\
L & \coloneqq\sum_{i=1}^{N}\ind\{s_{i}=s_{i}'=1\},\\
X_{a} & \coloneqq\sum_{i=1}^{N}\ind\left\{ s_{i}=0,a_{i}=a,b_{i}=0,s_{i}'=0\right\} ,\qquad\forall a\in\mathcal{A},\\
M_{a} & \coloneqq\sum_{i=1}^{N}\ind\left\{ s_{i}=0,a_{i}=a,b_{i}=0\right\} ,\qquad\forall a\in\mathcal{A}.
\end{align*}

By the factorization theorem (see, e.g., \citet[Theorem 2.2]{shao2003mathematical}),
the expression (\ref{eq:proof-lb-1}) implies that 
\begin{equation}
(X,M)\qquad\text{with }X=[X_{a}]_{a\in\mathcal{A}}\text{ and }M=[M_{a}]_{a\in\mathcal{A}}\label{eq:sufficient-statistic-X-M}
\end{equation}
forms a sufficient statistic for the underlying parameter $\theta$. 

\subsection{Establishing the minimax lower bound} \label{subsec:minimax-final}

In view of \eqref{eq:V-theta-star-expression-0} and our choice of
$\rho$ (cf.~\eqref{eq:defn-rho-LB}), we can obtain --- for any
policy $\mu$ of the max-player --- that
\begin{align}
V^{\star}\left(\rho\right)-V^{\mu,\star}\left(\rho\right) & =V^{\star}\left(0\right)-V^{\mu,\star}\left(0\right)=\frac{1}{1-\gamma p}-\frac{1}{1-\gamma p+\gamma\left(1-\mu_{p,\theta}\right)\left(p-q\right)}\nonumber \\
 & =\frac{\gamma\left(1-\mu_{p,\theta}\right)\left(p-q\right)}{\left(1-\gamma p\right)\left[1-\gamma p+\gamma\left(1-\mu_{p,\theta}\right)\left(p-q\right)\right]}\nonumber \\
 & =\left(1-\mu_{p,\theta}\right)\frac{28\left(1-\gamma\right)^{2}\varepsilon}{\left[1-\gamma\left(\gamma+14\frac{\left(1-\gamma\right)^{2}\varepsilon}{\gamma}\right)\right]\left[1-\gamma\left(\gamma+14\frac{\left(1-\gamma\right)^{2}\varepsilon}{\gamma}\right)+\left(1-\mu_{p,\theta}\right)28\left(1-\gamma\right)^{2}\varepsilon\right]}\nonumber \\
 & =\left(1-\mu_{p,\theta}\right)\frac{28\varepsilon}{\left[1+\gamma-14\left(1-\gamma\right)\varepsilon\right]\left[1+\gamma+\left(1-2\mu_{p,\theta}\right)14\left(1-\gamma\right)\varepsilon\right]}\nonumber \\
 & \geq\left(1-\mu_{p,\theta}\right)6\varepsilon=6\varepsilon\sum_{a\in\mathcal{A}_{q,\theta}}\mu\left(a\mymid0\right)=6\varepsilon\sum_{a\in\mathcal{A}}\mu\left(a\mymid0\right)\ind\left\{ \theta_{a}=q\right\} ,\label{eq:proof-lb-2}
\end{align}
where the penultimate relation holds provided that $\varepsilon\leq\frac{1}{42(1-\gamma)}$.
As a consequence, we have
\begin{align}
\underset{(\widehat{\mu},\widehat{\nu})}{\inf}\underset{\{\mathcal{MG},\rho,d_{\mathsf{b}}\}\in\mathsf{MG}(C_{\mathsf{clipped}}^{\star})}{\sup}\mathbb{E}\left[V^{\star,\widehat{\nu}}\left(\rho\right)-V^{\widehat{\mu},\star}\left(\rho\right)\right] & \geq\underset{\widehat{\mu}}{\inf}\underset{\{\mathcal{MG},\rho,d_{\mathsf{b}}\}\in\mathsf{MG}(C_{\mathsf{clipped}}^{\star})}{\sup}\mathbb{E}\left[V^{\star}\left(\rho\right)-V^{\widehat{\mu},\star}\left(\rho\right)\right]\nonumber \\
 & \geq\underset{\widehat{\mu}}{\inf}\underset{\mathcal{MG}_{\theta}:\theta\in\left\{ p,q\right\} ^{A}}{\sup}\mathbb{E}\left[V^{\star}\left(\rho\right)-V^{\widehat{\mu},\star}\left(\rho\right)\right]\nonumber \\
 & \geq6\varepsilon\,\underset{\widehat{\mu}}{\inf}\underset{\mathcal{MG}_{\theta}:\theta\in\left\{ p,q\right\} ^{A}}{\sup}\mathbb{E}\left[\sum_{a\in\mathcal{A}}\widehat{\mu}\left(a\mymid0\right)\ind\left\{ \theta_{a}=q\right\} \right],\label{eq:proof-lb-3}
\end{align}
where the first relation holds since $V^{\star,\widehat{\nu}}(s)\geq V^{\mu^{\star},\widehat{\nu}}(s)\geq V^{\mu^{\star},\nu^{\star}}(s)=V^{\star}(s)$,
and the last line follows from (\ref{eq:proof-lb-2}). Here, the expectation
is taken over the randomness of the data distribution induced by $d_{\mathsf{b}}$
and that of the transition kernel of the corresponding Markov game. 

To further lower bound (\ref{eq:proof-lb-3}), we define $\mathcal{U}$
to be the uniform distribution over $\{p,q\}^{A}$ (namely, a random
vector $\theta\sim\mathcal{U}$ obeys $\theta_{a}\overset{\text{i.i.d.}}{=}\begin{cases}
p & \text{w.p. }0.5\\
q & \text{w.p. }0.5
\end{cases}$ for all $a\in\mathcal{A}$). It is readily seen that
\begin{align}
\underset{\widehat{\mu}}{\inf}\underset{\mathcal{MG}_{\theta}:\theta\in\left\{ p,q\right\} ^{A}}{\sup}\mathbb{E}\left[\sum_{a\in\mathcal{A}}\widehat{\mu}\left(a\mymid0\right)\ind\left\{ \theta_{a}=q\right\} \right] & \overset{\text{(i)}}{\geq}\underset{\widehat{\mu}}{\inf}\,\mathbb{E}_{\theta\sim\mathcal{U}}\left[\mathbb{E}_{\mathcal{MG}_{\theta}}\left[\sum_{a\in\mathcal{A}}\widehat{\mu}\left(a\mymid0\right)\ind\left\{ \theta_{a}=q\right\} \right]\right]\nonumber \\
 & =\underset{\widehat{\mu}}{\inf}\,\mathop{\mathbb{E}}\limits _{\mathcal{MG}_{\theta},\theta\sim\mathcal{U}}\left[\mathbb{E}\left[\sum_{a\in\mathcal{A}}\widehat{\mu}\left(a\mymid0\right)\ind\left\{ \theta_{a}=q\right\} \mymid X,M\right]\right]\nonumber \\
 & \overset{\text{(ii)}}{=}\underset{\widehat{\mu}}{\inf}\,\mathop{\mathbb{E}}\limits _{\mathcal{MG}_{\theta},\theta\sim\mathcal{U}}\left[\sum_{a\in\mathcal{A}}\widehat{\mu}\left(a\mymid0\right)\mathbb{E}\Big[\ind\left\{ \theta_{a}=q\right\} \mymid X,M\Big]\right]\nonumber \\
 & \overset{\text{(iii)}}{=}\underset{\widehat{\mu}}{\inf}\,\mathop{\mathbb{E}}\limits _{\mathcal{MG}_{\theta},\theta\sim\mathcal{U}}\left[\sum_{a\in\mathcal{A}}\widehat{\mu}\left(a\mymid0\right)\mathbb{P}\left(\theta_{a}=q\mymid X_{a},M_{a}\right)\right],\label{eq:proof-lb-4}
\end{align}
where the expectations appear after (i) are taken w.r.t.~the randomness
of data distribution induced by $d_{\mathsf{b}}$ and the transition
kernel of $\mathcal{MG}_{\theta}$, with $\theta$ following a prior
distribution $\mathcal{U}$. To see why (ii) holds, note that it suffices
to consider $\widehat{\mu}$ that is a measurable function of the
sufficient statistic $(X,M$) (see \citet[Proposition 3.13]{johnstone2017function});
with regards (iii), we note that it holds since $\{X_{a'},M_{a'}\}_{a'\neq a}$
is independent of $\theta_{a}$ conditional on $(X_{a},M_{a})$. To
further lower bound (\ref{eq:proof-lb-4}), we look at the following
two cases separately. 

\paragraph{Case 1: when the sample size $N$ is not too small.}

Consider the case where 
\begin{equation}
N\geq\widetilde{C}\frac{S\left(A+B\right)C_{\mathsf{clipped}}^{\star}\log\left(A+B\right)}{1-\gamma}=\frac{1}{d_{\mathsf{b}}\left(0,a,0\right)}\cdot\frac{\widetilde{C}\log\left(A+B\right)}{1-\gamma}\label{eq:N-lower-bound-tilde-C}
\end{equation}
holds for some sufficiently large constant $\widetilde{C}>0$. In
order to be compatible with the assumption
\begin{equation}
N<\frac{c_{2}S\left(A+B\right)C_{\mathsf{clipped}}^{\star}}{\left(1-\gamma\right)^{3}\varepsilon^{2}\log\left(A+B\right)}=\frac{1}{d_{\mathsf{b}}\left(0,a,0\right)}\cdot\frac{c_{2}}{\left(1-\gamma\right)^{3}\varepsilon^{2}\log\left(A+B\right)},\label{eq:N-upper-bound-c2}
\end{equation}
it suffices to focus on the regime where
\begin{equation}
\varepsilon\leq\varepsilon_{0}\coloneqq\sqrt{\frac{c_{2}}{2\widetilde{C}}}\frac{1}{\left(1-\gamma\right)\log\left(A+B\right)}.\label{eq:eps-upper-bound-case1}
\end{equation}

In order to understand \eqref{eq:proof-lb-4}, we need to take a look
at $M_{a}$ and $X_{a}$. To begin with, it is straightforward to
check that $M_{a}\sim\mathsf{Binomial}\big(N,d_{\mathsf{b}}(0,a,0)\big)$.
When $N$ is sandwiched between (\ref{eq:N-lower-bound-tilde-C})
and (\ref{eq:N-upper-bound-c2}), we can establish the following high-probability
bound on $M_{a}$. 

\begin{lemma}\label{lemma:Ma-concentration}Suppose that $\widetilde{C}$
is sufficiently large, and $\varepsilon\leq\frac{1}{(1-\gamma)\log A}$.
Then we have
\begin{equation}
\mathbb{P}\left(\frac{c_{3}\log A}{1-\gamma}\leq M_{a}\leq\frac{c_{4}}{\left(1-\gamma\right)^{3}\varepsilon^{2}\log A}\right)\geq1-\frac{2}{A^{4}},\label{eq:proof-lb-7}
\end{equation}
where $c_{3}=\widetilde{C}/2$ and $c_{4}=c_{2}+\sqrt{8c_{2}/c_{\mathsf{ch}}}$,
with $c_{\mathsf{ch}}>0$ some universal constant independent of $c_{2}$.
\end{lemma}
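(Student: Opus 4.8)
The plan is to recognize that $M_a\sim\mathsf{Binomial}\big(N,d_{\mathsf{b}}(0,a,0)\big)$ with $d_{\mathsf{b}}(0,a,0)=\frac{1}{C_{\mathsf{clipped}}^{\star}S(A+B)}$ (cf.~\eqref{eq:LB-db}), and to establish \eqref{eq:proof-lb-7} by controlling the lower and upper tails of this binomial separately via Chernoff/Bernstein-type concentration, and then taking a union bound. The first step is to pin down the mean $m\coloneqq\mathbb{E}[M_a]=N\,d_{\mathsf{b}}(0,a,0)=\frac{N}{C_{\mathsf{clipped}}^{\star}S(A+B)}$ and to sandwich it using the two constraints on $N$. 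The lower bound \eqref{eq:N-lower-bound-tilde-C} yields $m\geq\frac{\widetilde{C}\log(A+B)}{1-\gamma}\geq\frac{\widetilde{C}\log A}{1-\gamma}$, while the upper bound \eqref{eq:N-upper-bound-c2} yields $m<\frac{c_2}{(1-\gamma)^3\varepsilon^2\log(A+B)}\leq\frac{c_2}{(1-\gamma)^3\varepsilon^2\log A}=:c_2\,u$, where I abbreviate $u\coloneqq\frac{1}{(1-\gamma)^3\varepsilon^2\log A}$. The standing hypothesis $\varepsilon\leq\frac{1}{(1-\gamma)\log A}$ translates into the crucial inequality $u\geq\frac{\log A}{1-\gamma}\geq\log A$, which is what will make the fluctuation term negligible against $u$.

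For the lower tail, I would invoke the multiplicative Chernoff bound to get $\mathbb{P}(M_a\leq m/2)\leq e^{-m/8}\leq A^{-\widetilde{C}/8}\leq A^{-4}$, valid once $\widetilde{C}$ is large enough (here using $m\geq\widetilde{C}\log A$). On the complementary event, $M_a\geq m/2\geq\frac{\widetilde{C}\log A}{2(1-\gamma)}=\frac{c_3\log A}{1-\gamma}$, which matches the left inequality of \eqref{eq:proof-lb-7} with $c_3=\widetilde{C}/2$. For the upper tail, I would apply a Bernstein bound of the form $\mathbb{P}(M_a-m\geq t)\leq\exp(-c_{\mathsf{ch}}\,t^2/m)$, valid in the sub-Gaussian regime $t\lesssim m$, where $c_{\mathsf{ch}}>0$ is the universal constant of the bound (e.g.\ $c_{\mathsf{ch}}=1/4$). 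Choosing $t=\sqrt{4m\log A/c_{\mathsf{ch}}}$ forces the tail probability below $A^{-4}$, and the regime condition $t\lesssim m$ is guaranteed by $m\geq\widetilde{C}\log A$. It then remains to check $m+t\leq c_4\,u$: since $m\leq c_2 u$ and $t\leq\sqrt{4c_2 u\log A/c_{\mathsf{ch}}}\leq\sqrt{8c_2/c_{\mathsf{ch}}}\,u$ (the last step being exactly the inequality $\log A\leq 2u$), one obtains $m+t\leq\big(c_2+\sqrt{8c_2/c_{\mathsf{ch}}}\big)u=c_4\,u$, matching the right inequality with $c_4=c_2+\sqrt{8c_2/c_{\mathsf{ch}}}$. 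A union bound over the two failure events delivers the claimed probability $1-2A^{-4}$.

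This is essentially routine binomial concentration, so there is no deep obstacle; the only delicate point is bookkeeping the constants so that the concentration slack lands inside the prescribed window $\big[\tfrac{c_3\log A}{1-\gamma},\,c_4 u\big]$. The upper-tail step in particular hinges on $\log A\leq 2u$ — i.e.\ on the assumption $\varepsilon\leq\frac{1}{(1-\gamma)\log A}$ — without which the fluctuation $t=\Theta(\sqrt{m\log A})$ could be of the same order as $u$ and spoil the constant $c_4$. One must also keep $\log(A+B)$ and $\log A$ interchangeable up to universal constants, which is legitimate since the assumption $A\geq B$ gives $\log A\leq\log(A+B)\leq 2\log A$.
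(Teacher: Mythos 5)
Your proposal is correct and follows essentially the same route as the paper: both recognize $M_a\sim\mathsf{Binomial}\big(N,d_{\mathsf{b}}(0,a,0)\big)$, sandwich the mean $Nd_{\mathsf{b}}(0,a,0)$ between $\frac{\widetilde{C}\log A}{1-\gamma}$ and $\frac{c_2}{(1-\gamma)^3\varepsilon^2\log A}$ via \eqref{eq:N-lower-bound-tilde-C}--\eqref{eq:N-upper-bound-c2}, apply Chernoff-type concentration with deviation $t\asymp\sqrt{m\log A/c_{\mathsf{ch}}}$, and absorb the fluctuation into the window using exactly the inequality $(1-\gamma)^2\varepsilon^2\log^2 A\leq 1$ implied by $\varepsilon\leq\frac{1}{(1-\gamma)\log A}$, yielding the same constants $c_3=\widetilde{C}/2$ and $c_4=c_2+\sqrt{8c_2/c_{\mathsf{ch}}}$. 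The only (immaterial) difference is that you use a one-sided multiplicative Chernoff bound for the lower tail where the paper subtracts a square-root deviation from the mean; your union-bound accounting of the two $A^{-4}$ failure events is if anything slightly cleaner.
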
\begin{proof}See Appendix \ref{subsec:proof-lemma-Ma-concentration}.\end{proof}

In words, $M_{a}$ is guaranteed to reside within an interval $\big[\frac{c_{3}\log A}{1-\gamma},\frac{c_{4}}{\left(1-\gamma\right)^{3}\varepsilon^{2}\log A}\big]$.
In addition, in order to bound $\mathbb{P}\left(\theta_{a}=q\mymid X_{a},M_{a}\right)$
in \eqref{eq:proof-lb-4}, we seek to first investigate the properties
of $X_{a}\mid\{\theta_{a},M_{a}=M\}$ for some $M\in\big[\frac{c_{3}\log A}{1-\gamma},\frac{c_{4}}{\left(1-\gamma\right)^{3}\varepsilon^{2}\log A}\big]$.
Clearly, conditional on $\theta_{a}$ and $M_{a}=M$, the random variable
$X_{a}$ is distributed as $\mathsf{Binomial}(M,\theta_{a})$. We
make note of the following useful result regarding binomial random
variables.

\begin{lemma}\label{lemma:E-existence}Suppose that $\widetilde{C}$
is sufficiently large and $c_{2}$ is sufficiently small. Consider
any integer $M$ satisfying
\begin{equation}
\frac{c_{3}\log A}{1-\gamma}\leq M\leq\frac{c_{4}}{\left(1-\gamma\right)^{3}\varepsilon^{2}\log A},\label{eq:M-range-lemma}
\end{equation}
and generate $B_{p}\sim\mathsf{Binomial}(M,p)$ and $B_{q}\sim\mathsf{Binomial}(M,q)$.
Then there exists a set $E_{M}\subseteq[M]$ such that
\begin{equation}
\mathbb{P}\left(B_{p}\in E_{M}\right)\geq1-\frac{2}{A^{4}},\qquad\mathbb{P}\left(B_{q}\in E_{M}\right)\geq1-\frac{2}{A^{4}},\label{eq:E-high-prob}
\end{equation}
\begin{equation}
\text{and}\qquad\frac{\mathbb{P}\left(B_{q}=n\right)}{\mathbb{P}\left(B_{p}=n\right)}\geq\frac{1}{2},\qquad\forall n\in E_{M}.\label{eq:E-lower-bound}
\end{equation}
\end{lemma}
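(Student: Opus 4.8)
The plan is to exploit the monotonicity of the binomial likelihood ratio. Writing $a \coloneqq \log(p/q) > 0$ and $b \coloneqq \log\frac{1-q}{1-p} > 0$ (both positive since $q<p$ and hence $1-q>1-p$), the ratio factorizes as
\[
\frac{\mathbb{P}(B_q = n)}{\mathbb{P}(B_p = n)} = \Big(\frac{q}{p}\Big)^{n}\Big(\frac{1-q}{1-p}\Big)^{M-n}, \qquad\text{so}\qquad \log\frac{\mathbb{P}(B_q=n)}{\mathbb{P}(B_p=n)} = Mb - n(a+b),
\]
which is strictly decreasing in $n$. Consequently the set of $n$ for which $\frac{\mathbb{P}(B_q=n)}{\mathbb{P}(B_p=n)} \ge \frac12$ is exactly an initial segment $\{n \le n^\star\}$ with $n^\star \coloneqq \frac{Mb + \log 2}{a+b}$. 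I would therefore simply \emph{define} $E_M \coloneqq \{\,n \in \{0,1,\dots,M\} : n \le n^\star\,\}$, which makes the likelihood-ratio bound \eqref{eq:E-lower-bound} hold by construction. It then remains only to verify the two high-probability statements \eqref{eq:E-high-prob}.

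For those, the key quantity is the gap $n^\star - Mp$. Using the identity $p\log\frac pq + (1-p)\log\frac{1-p}{1-q} = \mathsf{KL}\big(\mathrm{Ber}(p)\,\|\,\mathrm{Ber}(q)\big)$, a short manipulation gives the clean formula
\[
n^\star - Mp = \frac{\log 2 - M\,\mathsf{KL}\big(\mathrm{Ber}(p)\,\|\,\mathrm{Ber}(q)\big)}{a+b}.
\]
Writing $p = \gamma+\eta$, $q=\gamma-\eta$ with $\eta = 14(1-\gamma)^2\varepsilon/\gamma$, so that \eqref{eq:p-q-lower-bound} yields $\eta \le \frac{1-\gamma}{2}\le\frac{\gamma}{2}$, I would bound the KL term from above by the $\chi^2$-divergence, $\mathsf{KL}(\mathrm{Ber}(p)\|\mathrm{Ber}(q)) \le \frac{(p-q)^2}{q(1-q)} \le \frac{4(p-q)^2}{1-\gamma}$, the last step using $q \ge \frac12$ and $1-q \ge \frac{1-\gamma}{2}$. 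Plugging in $p-q = 28(1-\gamma)^2\varepsilon/\gamma$ and the upper bound on $M$ from \eqref{eq:M-range-lemma} gives $M\,\mathsf{KL} \lesssim c_4/(\gamma^2\log A)$, which is at most $\frac12\log 2$ once $c_2$ (hence $c_4$) is small. Combining with the elementary estimate $a + b \le \frac{2(p-q)}{\gamma(1-\gamma)}$ (from $\log(1+x)\le x$ together with $\eta\le\gamma/2$ and $\eta\le\frac{1-\gamma}{2}$) then produces a lower bound of the form $n^\star - Mp \ge \frac{(\log 2)\,\gamma^2}{C(1-\gamma)\varepsilon}$ for a universal constant $C$.

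Finally I would apply Bernstein's inequality to the upper tails of $B_p$ and $B_q$. Since $\mathrm{Var}(B_p),\mathrm{Var}(B_q) \le M\gamma(1-\gamma)$ and the increments are bounded, $\mathbb{P}(B_p > n^\star) \le A^{-4}$ holds provided $n^\star - Mp$ dominates both $\sqrt{M\gamma(1-\gamma)\log A}$ and $\log A$; the corresponding bound for $B_q$ is only easier because $Mq < Mp < n^\star$. The sub-Gaussian requirement follows from $\sqrt{M\gamma(1-\gamma)\log A} \le \sqrt{c_4\gamma}/\big((1-\gamma)\varepsilon\big)$ via \eqref{eq:M-range-lemma}, which is dominated by the lower bound on $n^\star - Mp$ once $c_2$ is small. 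For the additive term, the two sides of \eqref{eq:M-range-lemma} together force $\frac{1}{(1-\gamma)\varepsilon} \ge \sqrt{c_3/c_4}\,\log A$, whence $n^\star - Mp \gtrsim \sqrt{c_3/c_4}\,\log A$; since $c_3 = \widetilde{C}/2$, this exceeds $C'\log A$ for $\widetilde{C}$ large. This establishes \eqref{eq:E-high-prob} with room to spare relative to the stated $2/A^4$.

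The main obstacle is entirely the constant bookkeeping in the last step: the \emph{single} gap $n^\star - Mp$ must simultaneously beat the sub-Gaussian window $\sqrt{M\gamma(1-\gamma)\log A}$, the additive Bernstein term $\log A$, and leave $M\,\mathsf{KL}$ below $\frac12\log 2$. These are reconciled precisely by the two-sided control on $M$ in \eqref{eq:M-range-lemma}: small $c_2$ tames both $M\,\mathsf{KL}$ and the variance window, while large $\widetilde{C}$ relative to $c_2$ forces $\varepsilon$ small enough that $\log A$ is absorbed. Care is also needed to confirm the ratio is genuinely strictly monotone (so that $E_M$ really is a threshold set) and that $q(1-q)$ and $1-q$ stay bounded away from $0$, both of which are guaranteed by \eqref{eq:p-q-lower-bound}.
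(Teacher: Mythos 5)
Your proposal is correct, but it inverts the paper's construction in an interesting way. The paper defines $E_{M}$ as a two-sided concentration interval, $E_{M}=\big\{n:\lfloor\min\{Mp-\Delta_{p},Mq-\Delta_{q}\}\rfloor\leq n\leq\lceil\max\{Mp+\Delta_{p},Mq+\Delta_{q}\}\rceil\big\}$ with $\Delta_{x}=\sqrt{\tfrac{4\log A}{c_{\mathsf{ch}}}M(1-x)}$, so that \eqref{eq:E-high-prob} is immediate from Chernoff, and then does the work in \eqref{eq:E-lower-bound} by writing $\log\tfrac{\mathbb{P}(B_{q}=m)}{\mathbb{P}(B_{p}=m)}=-M\big[\mathsf{KL}(p\,\Vert\,q)+(\tfrac{m}{M}-p)\log\tfrac{p(1-q)}{q(1-p)}\big]$ and bounding both terms. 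You instead take $E_{M}$ to be the maximal threshold set $\{n\leq n^{\star}\}$ on which the likelihood ratio exceeds $1/2$ (valid, since $\log\tfrac{\mathbb{P}(B_{q}=n)}{\mathbb{P}(B_{p}=n)}=Mb-n(a+b)$ is strictly decreasing in $n$), so \eqref{eq:E-lower-bound} holds by fiat, and push all the work into \eqref{eq:E-high-prob} via the identity $n^{\star}-Mp=\tfrac{\log 2-M\,\mathsf{KL}(p\Vert q)}{a+b}$. The two routes consume exactly the same quantitative inputs --- $\mathsf{KL}(p\Vert q)\lesssim(1-\gamma)^{3}\varepsilon^{2}$, $a+b\lesssim(1-\gamma)\varepsilon$, binomial concentration, and the tension between the two sides of \eqref{eq:M-range-lemma} --- but your version makes the set canonical and isolates the single scalar comparison ``$n^{\star}-Mp\gtrsim\tfrac{1}{(1-\gamma)\varepsilon}$ versus the deviation window $\sqrt{M(1-\gamma)\log A}+\log A$,'' which is arguably more transparent; the price is the extra (correct) observation that nonemptiness of the range \eqref{eq:M-range-lemma} forces $\tfrac{1}{(1-\gamma)\varepsilon}\geq\sqrt{c_{3}/c_{4}}\log A$, needed to absorb the additive Bernstein term. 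One small slip: since $q<p$ and both exceed $1/2$, one has $q(1-q)>\gamma(1-\gamma)$, so your stated bound $\Var(B_{q})\leq M\gamma(1-\gamma)$ is false as written; replacing it by $\Var(B_{q})\leq M(1-q)\leq\tfrac{3}{2}M(1-\gamma)$ costs only a constant and does not affect the argument.
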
\begin{proof}See Appendix \ref{subsec:proof-lemma-E-existence}.\end{proof}

In words, when $M$ falls within the range \eqref{eq:M-range-lemma},
there exists a high-probability set $E_{M}$ such that it is not that
easy to differentiate $\mathsf{Binomial}(M,p)$ and $\mathsf{Binomial}(M,q)$.
This result motivates us to pay particular attention the following
set $E\subseteq\mathbb{N}^{2}$:
\[
E\coloneqq\left\{ \left(x,m\right)\in\mathbb{N}^{2}:x\in E_{m},\frac{c_{3}\log A}{1-\gamma}\leq m\leq\frac{c_{4}}{\left(1-\gamma\right)^{3}\varepsilon^{2}\log A}\right\} .
\]
It is seen that 
\begin{align}
\mathbb{P}\Big(\left(X_{a},M_{a}\right)\in E\mymid\theta_{a}=p\Big) & =\mathbb{E}\Big[\mathbb{P}\big(\left(X_{a},M_{a}\right)\in E\mymid M_{a},\theta_{a}=p\big)\mymid\theta_{a}=p\Big]=\mathbb{E}\Big[\mathbb{P}\left(X_{a}\in E_{M_{a}}\mymid M_{a},\theta_{a}=p\right)\mymid\theta_{a}=p\Big]\nonumber \\
 & \geq\mathbb{E}\left[\mathbb{P}\left(X_{a}\in E_{M_{a}}\mymid M_{a},\theta_{a}=p\right)\ind\left\{ \frac{c_{3}\log A}{1-\gamma}\leq M_{a}\leq\frac{c_{4}}{\left(1-\gamma\right)^{3}\varepsilon^{2}\log A}\right\} \mymid\theta_{a}=p\right]\nonumber \\
 & \overset{\text{(i)}}{\geq}\mathbb{E}\left[\left(1-\frac{2}{A^{4}}\right)\ind\left\{ \frac{c_{3}\log A}{1-\gamma}\leq M_{a}\leq\frac{c_{4}}{\left(1-\gamma\right)^{3}\varepsilon^{2}\log A}\right\} \mymid\theta_{a}=p\right]\nonumber \\
 & \overset{\text{(ii)}}{=}\left(1-\frac{2}{A^{4}}\right)\mathbb{P}\left(\frac{c_{3}\log A}{1-\gamma}\leq M_{a}\leq\frac{c_{4}}{\left(1-\gamma\right)^{3}\varepsilon^{2}\log A}\right)\nonumber \\
 & \overset{\text{(iii)}}{\geq}\left(1-\frac{2}{A^{4}}\right)^{2}\geq1-\frac{4}{A^{4}}.\label{eq:proof-lb-8}
\end{align}
Here, (i) utilizes Lemma \ref{lemma:E-existence}, (ii) is valid since
$\theta_{a}$ is independent of $M_{a}$, whereas (iii) follows from
(\ref{eq:proof-lb-5}) and (\ref{eq:proof-lb-6}). Similarly, one
can show that
\begin{equation}
\mathbb{P}\Big(\left(X_{a},M_{a}\right)\in E\mymid\theta_{a}=q\Big)\geq1-\frac{4}{A^{4}}.\label{eq:proof-lb-9}
\end{equation}
Taking (\ref{eq:proof-lb-8}) and (\ref{eq:proof-lb-9}) collectively
yields
\begin{align}
\mathbb{P}\Big(\left(X_{a},M_{a}\right)\in E\Big) & =\mathbb{P}\Big(\left(X_{a},M_{a}\right)\in E\mymid\theta_{a}=p\Big)\mathbb{P}\left(\theta_{a}=p\right)+\mathbb{P}\Big(\left(X_{a},M_{a}\right)\in E\mymid\theta_{a}=q\Big)\mathbb{P}\left(\theta_{a}=q\right)\nonumber \\
 & \geq\frac{1}{2}\left(1-\frac{4}{A^{4}}\right)+\frac{1}{2}\left(1-\frac{4}{A^{4}}\right)=1-\frac{4}{A^{4}}.\label{eq:proof-lb-10}
\end{align}
A little calculation shows that: when $(x_{a},m_{a})\in E$, we have
\begin{align}
 & \mathbb{P}\left(\theta_{a}=q\mymid X_{a}=x_{a},M_{a}=m_{a}\right)=\frac{\mathbb{P}\left(\theta_{a}=q,X_{a}=x_{a}\mymid M_{a}=m_{a}\right)}{\mathbb{P}\left(X_{a}=x_{a}\mymid M_{a}=m_{a}\right)}\nonumber \\
 & =\frac{\mathbb{P}\left(X_{a}=x_{a}\mymid\theta_{a}=q,M_{a}=m_{a}\right)\mathbb{P}\left(\theta_{a}=q\mymid M_{a}=m_{a}\right)}{\mathbb{P}\left(X_{a}=x_{a}\mymid\theta_{a}=p,M_{a}=m_{a}\right)\mathbb{P}\left(\theta_{a}=p\mymid M_{a}=m_{a}\right)+\mathbb{P}\left(X_{a}=x_{a}\mymid\theta_{a}=q,M_{a}=m_{a}\right)\mathbb{P}\left(\theta_{a}=q\mymid M_{a}=m_{a}\right)}\nonumber \\
 & \overset{\text{(i)}}{=}\frac{\mathbb{P}\left(X_{a}=x_{a}\mymid\theta_{a}=q,M_{a}=m_{a}\right)\mathbb{P}\left(\theta_{a}=q\right)}{\mathbb{P}\left(X_{a}=x_{a}\mymid\theta_{a}=p,M_{a}=m_{a}\right)\mathbb{P}\left(\theta_{a}=p\right)+\mathbb{P}\left(X_{a}=x_{a}\mymid\theta_{a}=q,M_{a}=m_{a}\right)\mathbb{P}\left(\theta_{a}=q\right)}\nonumber \\
 & \overset{\text{(ii)}}{=}\frac{\mathbb{P}\left(X_{a}=x_{a}\mymid\theta_{a}=q,M_{a}=m_{a}\right)}{\mathbb{P}\left(X_{a}=x_{a}\mymid\theta_{a}=p,M_{a}=m_{a}\right)+\mathbb{P}\left(X_{a}=x_{a}\mymid\theta_{a}=q,M_{a}=m_{a}\right)}\nonumber \\
 & \overset{\text{(iii)}}{=}\frac{\mathbb{P}\left(B_{q}=x_{a}\right)}{\mathbb{P}\left(B_{p}=x_{a}\right)+\mathbb{P}\left(B_{q}=x_{a}\right)}\overset{\text{(iv)}}{\geq}\frac{1/2}{1+1/2}=\frac{1}{3},\label{eq:proof-lb-11}
\end{align}
where we let $B_{p}$ and $B_{q}$ be two random variables distributed
as $B_{p}\sim\mathsf{Binomial}(m_{a},p)$ and $B_{q}\sim\mathsf{Binomial}(m_{a},q)$.
Here, (i) relies on the fact that $\theta_{a}$ is independent from
$M_{a}$; (ii) comes from the fact that $\mathbb{P}\left(\theta_{a}=p\right)=\mathbb{P}\left(\theta_{a}=q\right)=1/2$;
(iii) follows since conditional on $M_{a}=m_{a}$ and $\theta_{a}=p$
(resp.~$\theta_{a}=q$), the distribution of $X_{a}$ is $\mathsf{Binomial}(m_{a},p)$
(resp.~$\mathsf{Binomial}(m_{a},q)$); and (iv) follows from Lemma
\ref{lemma:E-existence} given that $(x_{a},m_{a})\in E$. We can
thus conclude that 
\begin{align*}
 & \underset{(\widehat{\mu},\widehat{\nu})}{\inf}\underset{\{\mathcal{MG},\rho,d_{\mathsf{b}}\}\in\mathsf{MG}(C^{\star})}{\sup}\mathbb{E}\left[V^{\star,\widehat{\nu}}\left(\rho\right)-V^{\widehat{\mu},\star}\left(\rho\right)\right]\\
 & \quad\overset{\text{(i)}}{\geq}6\varepsilon\,\underset{\widehat{\mu}}{\inf}\,\mathbb{E}\left[\sum_{a\in\mathcal{A}}\widehat{\mu}\left(a\mymid0\right)\mathbb{P}\left(\theta_{a}=q\mymid X_{a},M_{a}\right)\right]\\
 & \quad\geq6\varepsilon\,\underset{\widehat{\mu}}{\inf}\,\mathbb{E}\left[\sum_{a\in\mathcal{A}}\widehat{\mu}\left(a\mymid0\right)\mathbb{P}\left(\theta_{a}=q\mymid X_{a},M_{a}\right)\ind\left\{ \left(X_{a},M_{a}\right)\in E\right\} \right]\\
 & \quad\overset{\text{(ii)}}{\geq}2\varepsilon\,\underset{\widehat{\mu}}{\inf}\,\mathbb{E}\left[\sum_{a\in\mathcal{A}}\widehat{\mu}\left(a\mymid0\right)\ind\left\{ \left(X_{a},M_{a}\right)\in E\right\} \right]\\
 & \quad=2\varepsilon\,\underset{\widehat{\mu}}{\inf}\,\mathbb{E}\left[1-\sum_{a\in\mathcal{A}}\widehat{\mu}\left(a\mymid0\right)\ind\left\{ \left(X_{a},M_{a}\right)\notin E\right\} \right]\\
 & \quad\geq2\varepsilon\,\mathbb{E}\left[1-\sum_{a\in\mathcal{A}}\ind\left\{ \left(X_{a},M_{a}\right)\notin E\right\} \right]=2\varepsilon\left[1-\sum_{a\in\mathcal{A}}\mathbb{P}\Big(\left(X_{a},M_{a}\right)\notin E\Big)\right]\\
 & \quad\overset{\text{(iii)}}{\geq}2\varepsilon\left(1-\frac{4}{A^{3}}\right)\overset{\text{(iv)}}{\geq}\varepsilon.
\end{align*}
Here, (i) follows from (\ref{eq:proof-lb-3}) and (\ref{eq:proof-lb-4});
(ii) makes use of (\ref{eq:proof-lb-11}); (iii) follows from (\ref{eq:proof-lb-10});
and (iv) is valid when $A\geq2$.

\paragraph{Case 2: when the sample size $N$ is small.}

We now turn attention to the complement case where
\[
N<\widetilde{C}\frac{S\left(A+B\right)C_{\mathsf{clipped}}^{\star}\log\left(A+B\right)}{1-\gamma}\leq\frac{c_{2}S\left(A+B\right)C_{\mathsf{clipped}}^{\star}}{\left(1-\gamma\right)^{3}\varepsilon^{2}\log\left(A+B\right)}.
\]
Given that the sample size is smaller than the one in Case 1, it is
trivially seen that the minimax lower bound cannot be better than
the former case, namely, we must have
\[
\underset{(\widehat{\mu},\widehat{\nu})}{\inf}\underset{\{\mathcal{MG},\rho,d_{\mathsf{b}}\}\in\mathsf{MG}(C^{\star})}{\sup}\mathbb{E}\left[V^{\star,\widehat{\nu}}\left(\rho\right)-V^{\widehat{\mu},\star}\left(\rho\right)\right]\geq\varepsilon_{0}=\sqrt{\frac{c_{2}}{2\widetilde{C}}}\frac{1}{\left(1-\gamma\right)\log\left(A+B\right)},
\]
where $\varepsilon_{0}$ is defined in \eqref{eq:eps-upper-bound-case1}. 

\paragraph{Putting Case 1 and Case 2 together. }

Combining the above two cases reveals that: for any $\varepsilon\leq\varepsilon_{0}=\sqrt{\frac{c_{2}}{2\widetilde{C}}}\frac{1}{\left(1-\gamma\right)\log\left(A+B\right)}$
, one necessarily has
\[
\underset{(\widehat{\mu},\widehat{\nu})}{\inf}\underset{\{\mathcal{MG},\rho,d_{\mathsf{b}}\}\in\mathsf{MG}(C^{\star})}{\sup}\mathbb{E}\left[V^{\star,\widehat{\nu}}\left(\rho\right)-V^{\widehat{\mu},\star}\left(\rho\right)\right]\geq\varepsilon
\]
if the sample size $N\leq\frac{c_{2}S\left(A+B\right)C_{\mathsf{clipped}}^{\star}}{\left(1-\gamma\right)^{3}\varepsilon^{2}\log\left(A+B\right)}$.
This concludes the proof of Theorem \ref{thm:lower-bound}. 

\begin{comment}
This is because in this case, the class of estimators can be viewed
as a subset of the class of estimators that only uses the first $N$
samples when the sample size exceeds
\[
N\geq\widetilde{C}\frac{S\left(A+B\right)C_{\mathsf{clipped}}^{\star}\log\left(A+B\right)}{1-\gamma}.
\]
\end{comment}
{} 

\section{Auxiliary lemmas for Theorem \ref{thm:lower-bound}}

\subsection{Proof of Lemma \ref{lemma:MG-LB-property} \label{subsec:proof-lemma-MG-LB-property}}

First of all, it is straightforward to verify that: when initialized
to any state $s\neq0$, the MG will never leave the state $s$ (by
construction of $P_{\theta}$). This combined with the fact that the
rewards are zero whenever $s\neq0$ gives
\begin{equation}
V^{\mu,\nu}\left(s\right)=0\qquad\text{for all}\,s\neq0.\label{eq:proof-MG-LB-property-1}
\end{equation}
As a consequence, it suffices to compute $V_{\theta}^{\mu,\nu}(0)$.
By virtue of the Bellman equation, we obtain
\begin{align*}
V^{\mu,\nu}\left(0\right) & =\underset{a\sim\mu(0),b\sim\nu(0)}{\mathbb{E}}\left[r\left(0,a,b\right)+\gamma\sum_{s'\in\mathcal{S}}P\left(s'\mymid0,a,b\right)V^{\mu,\nu}\left(s'\right)\right]\\
 & \overset{\text{(i)}}{=}1+\gamma\underset{a\sim\mu(0),b\sim\nu(0)}{\mathbb{E}}\big[P_{\theta}\left(0\mymid0,a,b\right)V^{\mu,\nu}\left(0\right)\big]\\
 & =1+\gamma\sum_{a\in\mathcal{A}_{p,\theta}}\mu\left(a\mymid0\right)\nu\left(0\mymid0\right)P_{\theta}\left(0\mymid0,a,0\right)V^{\mu,\nu}\left(0\right)\\
 & \quad+\gamma\sum_{a\in\mathcal{A}_{p,\theta}}\mu\left(a\mymid0\right)\sum_{b\neq0}\nu\left(b\mymid0\right)P_{\theta}\left(0\mymid0,a,b\right)V^{\mu,\nu}\left(0\right)\\
 & \quad+\gamma\sum_{a\in\mathcal{A}_{q,\theta}}\mu\left(a\mymid0\right)\nu\left(0\mymid0\right)P_{\theta}\left(0\mymid0,a,0\right)V^{\mu,\nu}\left(0\right)\\
 & \quad+\gamma\sum_{a\in\mathcal{A}_{q,\theta}}\mu\left(a\mymid0\right)\sum_{b\neq0}\nu\left(b\mymid0\right)P_{\theta}\left(0\mymid0,a,b\right)V^{\mu,\nu}\left(0\right)\\
 & \overset{\text{(ii)}}{=}1+\gamma\mu_{p,\theta}\nu_{0}pV^{\mu,\nu}\left(0\right)+\gamma\mu_{p,\theta}\left(1-\nu_{0}\right)V^{\mu,\nu}\left(0\right)\\
 & \quad+\gamma\left(1-\mu_{p,\theta}\right)\nu_{0}qV^{\mu,\nu}\left(0\right)+\gamma\left(1-\mu_{p,\theta}\right)\left(1-\nu_{0}\right)V^{\mu,\nu}\left(0\right),
\end{align*}
where we remind the readers of the quantities $\mu_{p,\theta}\coloneqq\sum_{a\in\mathcal{A}_{p,\theta}}\mu(a\mymid0)$
and $\nu_{0}\coloneqq\nu(0\mymid0)$. Here, (i) makes use of the fact
that $r(0,a,b)=1$ for all $(a,b)\in\mathcal{A}\times\mathcal{B}$
and (\ref{eq:proof-MG-LB-property-1}), while (ii) follows from the
definition of $P_{\theta}$ in (\ref{eq:LB-transition}). Rearrange
terms to arrive at
\begin{align*}
V^{\mu,\nu}\left(0\right) & =\frac{1}{1-\gamma\mu_{p,\theta}\nu_{0}p-\gamma\mu_{p,\theta}\left(1-\nu_{0}\right)-\gamma\left(1-\mu_{p,\theta}\right)\nu_{0}q-\gamma\left(1-\mu_{p,\theta}\right)\left(1-\nu_{0}\right)}\\
 & =\frac{1}{1-\gamma+\gamma\mu_{p,\theta}\nu_{0}\left(1-p\right)+\gamma\left(1-\mu_{p,\theta}\right)\nu_{0}\left(1-q\right)}.
\end{align*}

Next, let us rewrite $V^{\mu,\nu}(0)$ as follows
\begin{align*}
V^{\mu,\nu}\left(0\right) & =\frac{1}{1-\gamma+\gamma\nu_{0}\left(1-p\right)-\gamma(1-\mu_{p,\theta})\nu_{0}\left(1-p\right)+\gamma\left(1-\mu_{p,\theta}\right)\nu_{0}\left(1-q\right)}\\
 & =\frac{1}{1-\gamma+\gamma\nu_{0}\left(1-p\right)+\gamma\left(1-\mu_{p,\theta}\right)\nu_{0}\left(p-q\right)}.
\end{align*}
Given that $p>q$, we see that: for any policy $\mu$, the best response
of the min-player would be to take $\nu_{0}=1$; and for any policy
$\nu$ with $\nu_{0}\neq0$, the best response of the max-player would
be to set $\mu_{p,\theta}=1$. As a result, it is seen that $(\mu,\nu)$
is an NE if and only if $\mu_{p,\theta}=1$ and $\nu_{0}=1$. This
readily implies that the policy pair $(\mu_{\theta}^{\star},\nu_{\theta}^{\star})$
defined in (\ref{eq:LB-NE}) is an NE of $\mathcal{MG}_{\theta}$. 

We are left with justifying that $\{\mathcal{MG}_{\theta},\rho,d_{\mathsf{b}}\}\in\mathsf{MG}(C_{\mathsf{clipped}}^{\star})$.
Towards this, we first note that: for any $a\in\mathcal{A}$, $b\in\mathcal{B}$,
and any policy pair $(\mu,\nu)$, we can invoke (\ref{eq:LB-db})
to show that
\begin{align}
\frac{\min\left\{ d^{\mu,\nu}\left(0,a,b;\rho\right),\frac{1}{S\left(A+B\right)}\right\} }{d_{\mathsf{b}}\left(0,a,b\right)} & \leq\frac{\frac{1}{S\left(A+B\right)}}{\frac{1}{C_{\mathsf{clipped}}^{\star}S\left(A+B\right)}}=C_{\mathsf{clipped}}^{\star},\label{eq:proof-MG-LB-property-2.1}
\end{align}
and
\begin{equation}
\frac{\min\left\{ d^{\mu,\nu}\left(1,a,b;\rho\right),\frac{1}{S\left(A+B\right)}\right\} }{d_{\mathsf{b}}\left(1,a,b\right)}\leq\frac{\frac{1}{S\left(A+B\right)}}{\frac{1}{AB}\left(1-\frac{AB}{C_{\mathsf{clipped}}^{\star}S\left(A+B\right)}\right)}=\frac{1}{\frac{S\left(A+B\right)}{AB}-\frac{1}{C_{\mathsf{clipped}}^{\star}}}\leq C_{\mathsf{clipped}}^{\star},\label{eq:proof-MG-LB-property-2.2}
\end{equation}
where the last inequality holds as long as $C_{\mathsf{clipped}}^{\star}\geq\frac{2AB}{S(A+B)}$.
In addition, for any $s\geq2$, it is readily seen that $d^{\mu,\nu}(s,a,b;\rho)=0$,
and therefore, 
\begin{equation}
\frac{\min\left\{ d^{\mu,\nu}\left(s,a,b;\rho\right),\frac{1}{S\left(A+B\right)}\right\} }{d_{\mathsf{b}}\left(s,a,b\right)}=0.\label{eq:proof-MG-LB-property-2.3}
\end{equation}
Taking (\ref{eq:proof-MG-LB-property-2.1}), (\ref{eq:proof-MG-LB-property-2.2})
and (\ref{eq:proof-MG-LB-property-2.3}) collectively gives 
\begin{equation}
\max\left\{ \sup_{\mu,s,a,b}\frac{\min\left\{ d^{\mu,\nu_{\theta}^{\star}}\left(s,a,b;\rho\right),\frac{1}{S\left(A+B\right)}\right\} }{d_{\mathsf{b}}\left(s,a,b\right)},\sup_{\nu,s,a,b}\frac{\min\left\{ d^{\mu_{\theta}^{\star},\nu}\left(s,a,b;\rho\right),\frac{1}{S\left(A+B\right)}\right\} }{d_{\mathsf{b}}\left(s,a,b\right)}\right\} \leq C_{\mathsf{clipped}}^{\star}.\label{eq:proof-MG-LB-property-3}
\end{equation}
We still need to justify that the inequality in (\ref{eq:proof-MG-LB-property-3})
is tight. To do so, observe that for any $a\in\mathcal{A}_{p}$, one
has
\begin{align}
d^{\mu_{\theta}^{\star},\nu_{\theta}^{\star}}\left(0,a,0\right) & =\left(1-\gamma\right)\sum_{t=0}^{\infty}\gamma^{t}\mathbb{P}\left(s_{t}=0,a_{t}=a,b_{t}=0\mymid s_{0}\sim\rho;\mu_{\theta}^{\star},\nu_{\theta}^{\star}\right)\nonumber \\
 & \overset{\text{(i)}}{=}\left(1-\gamma\right)\sum_{t=0}^{\infty}\gamma^{t}\mathbb{P}\left(s_{t}=0\mymid s_{0}\sim\rho;\mu_{\theta}^{\star},\nu_{\theta}^{\star}\right)\frac{1}{\vert\mathcal{A}_{p,\theta}\vert}\nonumber \\
 & \overset{\text{(ii)}}{\geq}\frac{1-\gamma}{\big|\mathcal{A}_{p,\theta}\big|}\sum_{t=0}^{\infty}\gamma^{t}p^{t}\overset{\text{(iii)}}{\geq}\frac{1-\gamma}{\big|\mathcal{A}_{p,\theta}\big|}\sum_{t=0}^{\infty}\gamma^{2t}\nonumber \\
 & =\frac{1-\gamma}{\big|\mathcal{A}_{p,\theta}\big|}\frac{1}{1-\gamma^{2}}=\frac{1}{\big|\mathcal{A}_{p,\theta}\big|}\frac{1}{1+\gamma}\geq\frac{1}{2\big|\mathcal{A}_{p,\theta}\big|}.\label{eq:proof-MG-LB-property-4}
\end{align}
Here, (i) is valid since according to (\ref{eq:LB-NE}), $\mathbb{P}(a_{t}=a\mymid s_{t}=0;\mu_{\theta}^{\star})=1/\vert\mathcal{A}_{p,\theta}\vert$
for $a\in\mathcal{A}_{p,\theta}$ and $\mathbb{P}(b_{t}=0\mymid s_{t}=0;\nu_{\theta}^{\star})=1$;
(ii) follows since
\[
\mathbb{P}\left(s_{t}=0\mymid s_{0}\sim\rho;\mu_{\theta}^{\star},\nu_{\theta}^{\star}\right)\geq\mathbb{P}\left(s_{t}=s_{t-1}=\cdots=s_{0}=0\mymid\mu_{\theta}^{\star},\nu_{\theta}^{\star}\right)=p^{t};
\]
and (iii) relies on the fact that $p\geq\gamma$ (cf.~(\ref{eq:LB-p-q})).
Then we can invoke (\ref{eq:proof-MG-LB-property-4}) and (\ref{eq:LB-db})
to demonstrate that
\begin{align}
\frac{\min\big\{ d^{\mu_{\theta}^{\star},\nu_{\theta}^{\star}}\left(0,a,0\right),\frac{1}{S\left(A+B\right)}\big\}}{d_{\mathsf{b}}\left(0,a,0\right)} & =\frac{\frac{1}{S\left(A+B\right)}}{\frac{1}{C_{\mathsf{clipped}}^{\star}S(A+B)}}=C_{\mathsf{clipped}}^{\star}\qquad\text{for any }a\in\mathcal{A}_{p,\theta}.\label{eq:proof-MG-LB-property-5}
\end{align}
Taking collectively (\ref{eq:proof-MG-LB-property-3}) and (\ref{eq:proof-MG-LB-property-5})
gives
\[
\max\left\{ \sup_{\mu,s,a,b}\frac{\min\left\{ d^{\mu,\nu_{\theta}^{\star}}\left(s,a,b;\rho\right),\frac{1}{S\left(A+B\right)}\right\} }{d_{\mathsf{b}}\left(s,a,b\right)},\sup_{\nu,s,a,b}\frac{\min\left\{ d^{\mu_{\theta}^{\star},\nu}\left(s,a,b;\rho\right),\frac{1}{S\left(A+B\right)}\right\} }{d_{\mathsf{b}}\left(s,a,b\right)}\right\} =C_{\mathsf{clipped}}^{\star}.
\]
This allows one to conclude that $\{\mathcal{MG}_{\theta},\rho,d_{\mathsf{b}}\}\in\mathsf{MG}(C_{\mathsf{clipped}}^{\star})$. 

\subsection{Proof of Lemma \ref{lemma:Ma-concentration} \label{subsec:proof-lemma-Ma-concentration}}

Recall that $M_{a}\sim\mathsf{Binomial}(N,d_{\mathsf{b}}(0,a,0))$.
Invoke the Chernoff bound (e.g., \citet[Exercise 2.3.5]{vershynin2018high})
to show the existence of some universal constant $c_{\mathsf{ch}}>0$
such that
\begin{align}
\mathbb{P}\Big(\left|M_{a}-Nd_{\mathsf{b}}\left(0,a,0\right)\right|\geq\delta Nd_{\mathsf{b}}\left(0,a,0\right)\Big) & \leq2\exp\left(-c_{\mathsf{ch}}Nd_{\mathsf{b}}\left(0,a,0\right)\delta^{2}\right)\label{eq:Chernoff-Ma}
\end{align}
holds for any $\delta\in(0,1]$. For $\widetilde{C}$ sufficiently
large, one has (see \eqref{eq:N-lower-bound-tilde-C})
\[
N\geq\frac{4}{c_{\mathsf{ch}}}C_{\mathsf{clipped}}^{\star}S\left(A+B\right)\log A,
\]
or equivalently, 
\[
Nd_{\mathsf{b}}\left(0,a,0\right)\geq\frac{4}{c_{\mathsf{ch}}}\log A.
\]
Take
\[
\delta\coloneqq\sqrt{\frac{4\log A}{c_{\mathsf{ch}}Nd_{\mathsf{b}}\left(0,a,0\right)}}\leq1
\]
in \eqref{eq:Chernoff-Ma} to show that with probability exceeding
$1-2A^{-4}$,
\begin{align}
M_{a} & \leq Nd_{\mathsf{b}}\left(0,a,0\right)+\sqrt{Nd_{\mathsf{b}}\left(0,a,0\right)}\sqrt{\frac{4\log A}{c_{\mathsf{ch}}}}\overset{\mathrm{(i)}}{\leq}\frac{c_{2}}{\left(1-\gamma\right)^{3}\varepsilon^{2}\log A}+\sqrt{\frac{8c_{2}}{c_{\mathsf{ch}}\left(1-\gamma\right)^{3}\varepsilon^{2}}}\nonumber \\
 & \overset{\mathrm{(ii)}}{\leq}\frac{c_{2}+\sqrt{8c_{2}/c_{\mathsf{ch}}}}{\left(1-\gamma\right)^{3}\varepsilon^{2}\log A}=\frac{c_{4}}{\left(1-\gamma\right)^{3}\varepsilon^{2}\log A}\label{eq:proof-lb-5}
\end{align}
holds for some universal constant $c_{4}=c_{2}+\sqrt{8c_{2}/c_{\mathsf{ch}}}$,
where (i) makes use of Condition \eqref{eq:N-upper-bound-c2}, and
(ii) is valid as long as $\varepsilon\leq\frac{1}{(1-\gamma)\log A}$.
Moreover, we can also invoke \eqref{eq:Chernoff-Ma} and \eqref{eq:N-lower-bound-tilde-C}
to show that with probability exceeding $1-2A^{-4}$, 
\begin{align}
M_{a} & \geq Nd_{\mathsf{b}}\left(0,a,0\right)-\sqrt{Nd_{\mathsf{b}}\left(0,a,0\right)}\sqrt{\frac{2\log A}{c_{\mathsf{ch}}}}\overset{\text{(iii)}}{\geq}\widetilde{C}\frac{\log A}{1-\gamma}-\sqrt{\frac{2\widetilde{C}}{c_{\mathsf{ch}}}\frac{\log^{2}A}{1-\gamma}}\overset{\text{(iv)}}{\geq}\frac{c_{3}\log A}{1-\gamma}\label{eq:proof-lb-6}
\end{align}
for some universal constant $c_{3}=\widetilde{C}/2$. Here, (iii)
holds since the function $x^{2}-\sqrt{\frac{2\log A}{c_{\mathsf{ch}}}}x$
is monotonically increasing when $x\geq\sqrt{\frac{\log A}{c_{\mathsf{ch}}}}$,
and 
\[
Nd_{\mathsf{b}}\left(0,a,0\right)\geq\widetilde{C}\frac{\log A}{1-\gamma}\geq\sqrt{\frac{\log A}{c_{\mathsf{ch}}}}
\]
when $\widetilde{C}>0$ is sufficiently large; (iv) holds for $\widetilde{C}>0$
sufficiently large. Taking (\ref{eq:proof-lb-5}) and (\ref{eq:proof-lb-6})
collectively gives
\[
\mathbb{P}\left(\frac{c_{3}\log A}{1-\gamma}\leq M_{a}\leq\frac{c_{4}}{\left(1-\gamma\right)^{3}\varepsilon^{2}\log A}\right)\geq1-\frac{2}{A^{4}}.
\]

\subsection{Proof of Lemma \ref{lemma:E-existence} \label{subsec:proof-lemma-E-existence}}

Given that $B_{p}\sim\mathsf{Binomial}(M,p)$, we have $M-B_{p}\sim\mathsf{Binomial}(M,1-p)$.
In view of the Chernoff bound (cf.~\citet[Exercise 2.3.5]{vershynin2018high}),
we know that for any $\delta\in(0,1]$
\begin{align}
\mathbb{P}\Big(\left|B_{p}-Mp\right|>\delta M\left(1-p\right)\Big) & =\mathbb{P}\Big(\left|M-B_{p}-M\left(1-p\right)\right|>\delta M\left(1-p\right)\Big)\nonumber \\
 & \leq2\exp\left(-c_{\mathsf{ch}}M\left(1-p\right)\delta^{2}\right)\label{eq:proof-E-existence-1}
\end{align}
for some universal constant $c_{\mathsf{ch}}>0$. Recall that $M\geq\frac{c_{3}\log A}{1-\gamma}$.
By taking 
\[
\delta\coloneqq\sqrt{\frac{4\log A}{c_{\mathsf{ch}}M\left(1-p\right)}}
\]
we can guarantee that
\[
\delta\leq\sqrt{\frac{4\left(1-\gamma\right)}{c_{\mathsf{ch}}c_{3}\left(1-p\right)}}=\sqrt{\frac{4\left(1-\gamma\right)}{c_{\mathsf{ch}}c_{3}\left(1-\gamma-14\frac{\left(1-\gamma\right)^{2}\varepsilon}{\gamma}\right)}}\leq\sqrt{\frac{4}{c_{\mathsf{ch}}c_{3}\left(1-14\frac{\left(1-\gamma\right)\varepsilon}{\gamma}\right)}}\overset{\text{(i)}}{\leq}\sqrt{\frac{8}{c_{\mathsf{ch}}c_{3}}}\overset{\text{(ii)}}{\leq}1.
\]
Here, (i) holds when $\varepsilon\leq\frac{1}{42(1-\gamma)}$ and
$\gamma\geq2/3$, while (ii) is valid as long as $c_{3}=\widetilde{C}/2\geq8/c_{\mathsf{ch}}$
(a condition that can be guaranteed as long as $\widetilde{C}$ is
sufficiently large). Then the inequality (\ref{eq:proof-E-existence-1})
tells us that
\begin{align*}
\mathbb{P}\left(\left|B_{p}-Mp\right|>\sqrt{\frac{4\log A}{c_{\mathsf{ch}}}M\left(1-p\right)}\right) & \leq\frac{2}{A^{4}}.
\end{align*}
A similar argument leads to
\[
\mathbb{P}\left(\left|B_{q}-Mq\right|\geq\sqrt{\frac{4\log A}{c_{\mathsf{ch}}}M\left(1-q\right)}\right)\leq\frac{2}{A^{4}}.
\]
In view of these concentration bounds, we can take
\[
E_{M}\coloneqq\Big\{0\leq n\leq M:\left\lfloor \min\left\{ Mp-\Delta_{p},Mq-\Delta_{q}\right\} \right\rfloor \leq n\leq\left\lceil \max\left\{ Mp+\Delta_{p},Mq+\Delta_{q}\right\} \right\rceil \Big\},
\]
where we define, for every $x\in(0,1)$, 
\[
\Delta_{x}\coloneqq\sqrt{\frac{4\log A}{c_{\mathsf{ch}}}M\left(1-x\right)}.
\]
It is clear from the above argument that (\ref{eq:E-high-prob}) is
guaranteed to hold. 

We are left with checking (\ref{eq:E-lower-bound}). Towards this,
we first make the observation that
\begin{align*}
\frac{\mathbb{P}\left(B_{q}=m\right)}{\mathbb{P}\left(B_{p}=m\right)} & =\frac{q^{m}\left(1-q\right)^{M-m}}{p^{m}\left(1-p\right)^{M-m}}=\exp\left(-m\log\frac{p}{q}-\left(M-m\right)\log\frac{1-p}{1-q}\right)\\
 & =\exp\left(-M\left[\mathsf{KL}\left(p\,\Vert\,q\right)+\left(\frac{m}{M}-p\right)\log\frac{p\left(1-q\right)}{q\left(1-p\right)}\right]\right),
\end{align*}
where $\mathsf{KL}(p\,\Vert\,q)$ denotes the Kullback-Leibler (KL)
divergence between $\mathsf{Bernoulli}(p)$ and $\mathsf{Bernoulli}(q)$
\citep{tsybakov2009introduction}: 
\[
\mathsf{KL}\left(p\,\Vert\,q\right)\coloneqq p\log\frac{p}{q}+\left(1-p\right)\log\frac{1-p}{1-q}.
\]
Moreover, it is seen that
\begin{equation}
\mathsf{KL}\left(p\,\Vert\,q\right)\overset{\text{(i)}}{\leq}\frac{\left(p-q\right)^{2}}{p\left(1-p\right)}=\frac{784\left(1-\gamma\right)^{4}\varepsilon^{2}}{p\left(1-\gamma-14\frac{\left(1-\gamma\right)^{2}\varepsilon}{\gamma}\right)\gamma^{2}}\overset{\text{(ii)}}{\leq}\frac{3528\left(1-\gamma\right)^{3}\varepsilon^{2}}{1-14\frac{\left(1-\gamma\right)\varepsilon}{\gamma}}\leq7056\left(1-\gamma\right)^{3}\varepsilon^{2},\label{eq:proof-E-1}
\end{equation}
where (i) follows from \citet[Lemma 10]{li2022settling} and (ii) relies
on (\ref{eq:p-q-lower-bound}). In addition, for any $m\in E_{M}$,
we have $m\leq Mp+\Delta_{q}$, which in turn reveals that
\begin{equation}
\frac{m}{M}-p\leq\frac{\Delta_{q}}{M}\leq\sqrt{\frac{4\log A}{c_{\mathsf{ch}}M}\left(1-\gamma+14\frac{\left(1-\gamma\right)^{2}\varepsilon}{\gamma}\right)}\leq\sqrt{\frac{6\left(1-\gamma\right)\log A}{c_{\mathsf{ch}}M}}\label{eq:proof-E-2}
\end{equation}
provided that $\varepsilon\leq\frac{1}{42(1-\gamma)}$ and $\gamma\leq2/3$.
Recalling that $p>q$, we can also see that
\begin{align}
0 & <\log\frac{p\left(1-q\right)}{q\left(1-p\right)}=\log\left(1+\frac{p-q}{q\left(1-p\right)}\right)\overset{\text{(i)}}{\leq}\frac{p-q}{q\left(1-p\right)}\overset{\text{(ii)}}{\leq}\frac{28\left(1-\gamma\right)^{2}\varepsilon}{\gamma\left(1-\gamma-14\frac{\left(1-\gamma\right)^{2}\varepsilon}{\gamma}\right)}\nonumber \\
 & \leq\frac{42\left(1-\gamma\right)\varepsilon}{1-14\frac{\left(1-\gamma\right)\varepsilon}{\gamma}}\leq\frac{42\left(1-\gamma\right)\varepsilon}{1-\frac{1}{3\gamma}}\leq84\left(1-\gamma\right)\varepsilon,\label{eq:proof-E-3}
\end{align}
where (i) exploits the elementary inequality $\log(1+x)\leq x$ for
all $x\geq0$, and (ii) follows from (\ref{eq:p-q-lower-bound}).
Taking (\ref{eq:proof-E-1}), (\ref{eq:proof-E-2}) and (\ref{eq:proof-E-3})
collectively yields
\begin{align*}
\frac{\mathbb{P}\left(B_{q}=m\right)}{\mathbb{P}\left(B_{p}=m\right)} & \geq\exp\left[-M\left(7056\left(1-\gamma\right)^{3}\varepsilon^{2}+84\left(1-\gamma\right)\varepsilon\sqrt{\frac{6\left(1-\gamma\right)\log A}{c_{\mathsf{ch}}M}}\right)\right]\\
 & \overset{\text{(i)}}{\geq}\exp\left[-\left(\frac{7056c_{4}}{\log A}+84\sqrt{\frac{6c_{4}}{c_{\mathsf{ch}}}}\right)\right]\overset{\text{(ii)}}{\geq}\exp\left[-\left(\frac{7056c_{4}}{\log2}+84\sqrt{\frac{6c_{4}}{c_{\mathsf{ch}}}}\right)\right]\overset{\text{(iii)}}{\geq}\frac{1}{2}.
\end{align*}
Here, (i) follows from the condition that
\[
M\leq\frac{c_{4}}{\left(1-\gamma\right)^{3}\varepsilon^{2}\log A},
\]
(ii) holds when $A\geq2$, and (iii) holds with the proviso that $c_{4}=2c_{2}+\sqrt{8c_{2}/c_{\mathsf{ch}}}$
is sufficiently small, which can happen as long as $c_{2}$ is sufficiently
small.

\bibliographystyle{apalike}
\bibliography{bibfileRL}

\end{document}